\newcommand*{\rom}[1]{\expandafter\@slowromancap\romannumeral #1@}
\def\supp{\operatorname{supp}}
\def\P{\mathbb{P}}
\newcommand{\upperRomannumeral}[1]{\uppercase\expandafter{\romannumeral#1}}
\newtheorem{lemma}{Lemma}{}
  \newtheorem{thm}{Theorem}
\theoremstyle{remark} \newtheorem{remark}{Remark}
\newcommand{\squeezeup}{\vspace{-2.5mm}}
\title{High SNR Consistent Compressive Sensing}
\author{Sreejith Kallummil\hspace{0cm},   Sheetal Kalyani  \\
 Department of Electrical Engineering \\
 Indian Institute of  Technology Madras,\\
 Chennai, India 600036  \\
 \{ee12d032,skalyani\}@ee.iitm.ac.in}
\author{{Sreejith Kallummil\hspace{0cm},   Sheetal Kalyani  \\
 Department of Electrical Engineering \\
 Indian Institute of  Technology Madras,\\
 Chennai, India 600036  \\
 \{ee12d032,skalyani\}@ee.iitm.ac.in
 }}
\begin{document}
\maketitle
\begin{abstract}
High signal to noise ratio (SNR) consistency of model selection criteria in linear regression models has attracted a lot of attention recently. However, most of the existing literature on high SNR consistency deals with model order selection. Further, the limited literature available on the high SNR consistency of subset selection procedures (SSPs) is applicable  to linear regression  with full rank measurement matrices only. Hence, the performance of SSPs used  in underdetermined linear models (a.k.a compressive sensing (CS) algorithms) at high SNR is largely unknown. This paper fills this gap by deriving necessary and sufficient conditions for the high SNR consistency of popular CS algorithms like $l_0$-minimization, basis pursuit de-noising or LASSO, orthogonal matching pursuit and  Dantzig  selector. Necessary conditions analytically establish the high SNR inconsistency of  CS algorithms when used with the tuning parameters discussed in literature. Novel tuning parameters with  SNR adaptations are developed using the sufficient conditions and the choice of SNR adaptations are discussed analytically using convergence rate analysis. CS algorithms with the proposed tuning parameters are numerically shown to be high SNR consistent and  outperform existing tuning parameters in the moderate to high SNR regime.
\end{abstract}
\begin{IEEEkeywords}
Compressive sensing, LASSO, Orthogonal matching pursuit,  Dantzig selector, high SNR consistency.
\end{IEEEkeywords}
\section{Introduction}
 Subset selection or variable selection in linear regression models  is the identification of the support of regression vector $\beta$, i.e., $\mathcal{I}=supp(\beta)=\{j:\beta_j\neq 0\}$ in the regression model ${\bf y}={\bf X}{\beta}+{\bf w}$. Here, ${\bf X}\in \mathbb{R}^{n\times p}$ is a known design matrix with unit $l_2$ norm columns,  ${\bf y} \in \mathbb{R}^n$ is the observed vector and ${\bf w}\sim \mathcal{N}({\bf 0}_n,\sigma^2{\bf I}_n)$ is the additive white Gaussian noise with known variance $\sigma^2$. Let 
$k^*$ denotes the number of non-zero entries in $\beta$. In this paper, we consider subset selection in underdetermined  linear models, i.e., ${\bf X}$ with more columns than rows $(n\leq p)$. This problem studied under the compressive sensing (CS) paradigm is of fundamental importance  in statistical signal processing, machine learning etc.  Many compressive sensing (CS) algorithms with varying performance complexity trade-offs and optimality conditions are available\cite{eldar2012compressed,candes2007dantzig,tropp2004greed,tropp2006just,wipf2004sparse,masood2013sparse} for this purpose. The performance of these CS algorithms are evaluated either in terms of mean square error (MSE) between $\beta$ and the estimate $\hat{\beta}$ returned by the CS algorithm\cite{ben2010coherence} or the correctness with which the estimated support $\hat{\mathcal{I}}=supp(\hat{\beta})$ matches the true support $\mathcal{I}$ \cite{spl}. In this paper, we evaluate CS algorithms in terms of the  probability of  support recovery error  defined by $PE=\mathbb{P}(\hat{\mathcal{I}}\neq \mathcal{I})$. 
 
 Traditionally, PE is evaluated in the large sample regime, i.e., $n\rightarrow \infty$ or $( n,p)\rightarrow \infty$\cite{zhao2006model}. In their landmark paper \cite{ding2011inconsistency}, Ding and Kay  demonstrated that subset selection procedures
 (SSPs) in overdetermined linear models that are large sample consistent (i.e., $PE \rightarrow 0 \ \text{as} \ n \rightarrow \infty$) often performs poorly in a finite $n$ and  high signal to noise ratio (SNR) (i.e., small $\sigma^2$) regime. This result generated great interest in the signal processing community on the behaviour of SSPs as $\sigma^2 \rightarrow 0$. Formally, a SSP is said to  be high SNR consistent if its' $PE\rightarrow 0$ as $\sigma^2 \rightarrow 0$. In this paper, we discuss the high SNR consistency of popular CS algorithms that are used for subset selection in underdetermined linear  models. After presenting the mathematical notations,  we elaborate on the existing literature on high SNR consistency and CS algorithms.  
 \subsection{Notations used in this paper.}
   $col({\bf X})$ the column space of ${\bf X}$. ${\bf X}^T$ is the transpose and ${\bf X}^{\dagger}=({\bf X}^T{\bf X})^{-1}{\bf X}^T$ is the  Moore-Penrose pseudo inverse of ${\bf X}$ (if ${\bf X}$ has full column rank). ${\bf P}_{\bf X}={\bf X}{\bf X}^{\dagger}$ is the projection matrix onto $col({\bf X})$. ${\bf I}_n$ represents an $n\times n$ identity matrix and  ${\bf 0}_n$ represents an $n\times 1$ zero vector. ${\bf X}_{\mathcal{J}}$ denotes the sub-matrix of ${\bf X}$ formed using  the columns indexed by $\mathcal{J}$. ${\bf X}_{i,j}$ is the $[i,j]^{th}$ entry of ${\bf X}$. If ${\bf X}$ is clear from the context, we use the shorthand ${\bf P}_{\mathcal{J}}$ for ${\bf P}_{{\bf X}_{\mathcal{J}}}$. ${\bf a}_{\mathcal{J}}$ or ${\bf a}(\mathcal{J})$ denotes the  entries of ${\bf a}$ indexed by $\mathcal{J}$.  $\mathcal{N} ({\bf u},{\bf C})$ is a Gaussian vector with mean ${\bf u}$ and covariance ${\bf C}$. $\chi^2_j$ is a central chi square distribution with $j$ degrees of freedom (d.o.f) and $\chi^2_j(\lambda)$ is a non central chi square distribution with $j$ d.o.f  and non-centrality $\lambda$.  ${\bf a}\sim{\bf b}$ implies that ${\bf a}$ and ${\bf b}$ are identically distributed.  $|()|$ denotes the absolute value for scalar  arguments and cardinality for set arguments.   $\|{\bf a}\|_q=(\sum\limits_{j}|{\bf a}_j|^q)^{\frac{1}{q}}$  for $1\leq q< \infty$ is the $l_q$ norm, $\|{\bf a}\|_{\infty}=\underset{j}{\max}\ |{\bf a}_j|$ is the $l_{\infty}$ norm and $\|{\bf a}\|_0=|supp({\bf a})|$ is the $l_0$ quasi norm of ${\bf a}$ respectively. ${\bf a}$ is called $k^*$-sparse iff $\|{\bf a}\|_0=k^*$. $\|{\bf A}\|_{m,l}=\underset{\|{\bf x}\|_m=1}{\max}{\|{\bf Ax}\|_l} $  is the $(m,l)^{th}$ matrix norm. $[p]$ denotes the set $\{1,\dotsc,p\}$. For any two index sets $\mathcal{J}_1$ and $\mathcal{J}_2$, the set difference  $\mathcal{J}_1/\mathcal{J}_2=\{j \in \mathcal{J}_1: j\notin  \mathcal{J}_2\}$.  $f(n)=o(g(n))$ iff $\underset{n \rightarrow \infty}{\lim}\dfrac{f(n)}{g(n)}=0$.
 \subsection{Prior literature on high SNR consistency}
  Most of the existing literature related to high SNR consistency including the seminal work by Ding and Kay \cite{ding2011inconsistency} are related to the model order selection (MOS) problem. MOS is a subset selection problem where $\mathcal{I}$ is restricted to the form $\mathcal{I}=[k^*]$.  Another interesting problem related to MOS is the estimation of smallest $\tilde{k}$ such that ${\beta}$ satisfies $\beta_j=0,\forall j>\tilde{k}$ and $\beta_j$ can be zero or non-zero for $j< \tilde{k}$.  In both these cases, the statistician is required to the estimate the model order $k^*$ or $\tilde{k}$. A  number of MOS criteria like exponentially embedded family (EEF)\cite{kay2005exponentially}, normalised maximum likelihood based minimum description length (NMDL)\cite{rissanen2000mdl}, g-prior based MDL (g-MDL)\cite{hansen2001model}, forms of Bayesian Information criteria (BIC)\cite{stoica2012proper,bayesian}, sequentially normalised least squares (SNLS)\cite{SNLS} etc. are proved to high SNR consistent\cite{tsp,ding2011inconsistency,schmidt2012consistency,SNLShighSNR}. All these MOS criteria can  be formulated as the minimization of a penalised log likelihood 
  \begin{equation}\label{pll_mos}
  PLL(k)=\|({\bf I}_n-{\bf P}_{\mathcal{J}_k}){\bf y}\|_2^2+h(k,\sigma^2)\sigma^2
  \end{equation}
   over the collection of subsets $\{\mathcal{J}_k\}_{k=1}^{p}$, where $\mathcal{J}_k=[k]$ and $h(k,\sigma^2)$ is a penalty function. Necessary and sufficient conditions (NSCs) for a MOS criterion to be high SNR consistent is derived  in  \cite{tsp}.  Applying  MOS criteria to the general subset selection problem  where $\mathcal{I}$ can be any subset of $[p]$ involves the minimization of $PLL(\mathcal{J})$  over the entire $2^p$ subsets $\mathcal{J} \subseteq [p]$. This approach though theoretically optimal is computationally intractable.  Consequently a number of suboptimal but low complexity SSPs   are developed.  To the best of our knowledge, only two  SSPs, both of which are based on the least squares (LS) estimate of $\beta$ (i.e., $\hat{\beta}_{LS}={\bf X}^{\dagger}{\bf y}$) are known  to be high SNR consistent\cite{tsp,spl}.
 \subsection {Contributions of this paper}
 The existing literature on high SNR consistency in linear regression   is  applicable only to regression models with full column rank design matrices. Hence, existing literature is not applicable to  underdetermined linear models, i.e., ${\bf X}$ with $n<p$.  Identifying the true support $\mathcal{I}$ in an underdetermined linear model is an  ill-posed problem unless certain structures are imposed on the regression vector $\beta$ and design matrix ${\bf X}$. Throughout this paper, we assume that the regression vector $\beta$ is sparse, i.e., $k^*=|\mathcal{I}|\ll p$ and $k^*<n$. The structure imposed on ${\bf X}$ depends on the particular CS algorithm used.  
 
  This paper makes the following contributions to CS literature from the viewpoint  of high SNR consistency. We first derive NSCs on the tuning parameter $\Gamma_0$ such that the support estimate  $\hat{\mathcal{I}}=supp(\hat{\beta})$ delivered by
  \begin{align*}
  (\text{$l_0$-penalty}) : \hat{\beta}=\underset{{\bf b} \in \mathbb{R}^p}{\arg\min} \|{\bf y}-{\bf X}{\bf b}\|_2^2+\Gamma_0 \sigma^2 \|{\bf b}\|_0,
  \end{align*}
 is high SNR consistent.   It should be noted that optimization problem in \textit{$l_0$-penalty} is NP-hard \cite{foucart2013mathematical}. Hence, a number of suboptimal techniques broadly belonging to two classes, convex relaxation (CR) \cite{tropp2006just,candes2007dantzig} and greedy algorithms \cite{tropp2004greed, masood2013sparse} are developed in literature. We mainly consider  two CR techniques in this paper, \textit{viz}.,
 \begin{align*}
 (\text{$l_1$-penalty}) : \hat{\beta}=\underset{{\bf b} \in \mathbb{R}^p}{\arg\min} \dfrac{1}{2}\|{\bf y}-{\bf X}{\bf b}\|_2^2+\sigma \Gamma_1 \|{\bf b}\|_1 \ \ \text{and}
 \end{align*}
 \begin{align*}
 (\text{$l_1$-error}) : \hat{\beta}=\underset{{\bf b} \in \mathbb{R}^p}{\arg\min} \|{\bf b}\|_1 \ \text{subject to} \ \|{\bf y}-{\bf X}{\bf b}\|_2\leq \sigma\Gamma_2. 
 \end{align*}
 $l_1$-penalty and $l_1$-error are also known as basis pursuit de-noising (BPDN) or least absolute shrinkage and selection operator (LASSO).
 We derive NSCs on $\Gamma_1$, $\Gamma_2$  such that $l_1$-penalty and $l_1$-error  are high SNR consistent. We also derive NSCs on the hyper parameter $\Gamma_3$ of the popular CR technique Dantzig selector \cite{candes2007dantzig} given by
 \begin{align*}
 (\text{DS}) : \hat{\beta}=\underset{{\bf b} \in \mathbb{R}^p}{\arg\min} \|{\bf b}\|_1 \ \text{subject to} \ \|{\bf X}^T({\bf y}-{\bf X}{\bf b})\|_\infty\leq \sigma\Gamma_3
 \end{align*} 
 for the special case of\footnote{In this article we consider a popular formulation of CS algorithms where the tuning parameters are explicitly scaled by $\sigma$ or $\sigma^2$. Quite often  $\sigma$ or $\sigma^2$  is included in the tuning parameter itself. For example, $l_0$-penalty may be written as $\hat{\beta}=\underset{{\bf b} \in \mathbb{R}^p}{\arg\min} \|{\bf y}-{\bf X}{\bf b}\|_2^2+\lambda_0 \|{\bf b}\|_0$. Using the relation $\lambda_0=\sigma^2\Gamma_0$, the NSCs  derived in terms of $\Gamma_0$ can be easily restated in terms of $\lambda_0$ also. } orthonormal ${\bf X}$.
 Orthogonal matching pursuit (OMP) \cite{tropp2004greed,OMP_wang,wang2012recovery,cai2011orthogonal,EEFPDF} is a popular greedy algorithm with sound performance guarantees and low computational complexity in comparison with CR based SSPs. OMP is characterized by its' stopping condition (SC). We also derive high SNR consistent SCs for OMP.
 
 Necessary conditions derived for $l_0$-penalty, $l_1$-penalty, $l_1$-error and DS analytically establish the high SNR inconsistency of these schemes with the  values of $\{\Gamma_k\}_{k=0}^3$ discussed in literature. High SNR inconsistency of  OMP with popular  SCs is numerically established. These inconsistencies are due to the absence of SNR adaptations in the tuning parameters. The sufficient conditions delivers a range of SNR adaptations for tuning parameters that will result in high SNR consistency. To compare various SNR adaptations, we  derived simple bounds on the  convergence rates of $l_1$-penalty.  Extensive numerical simulations conducted on various subset selection scenarios demonstrate the potential of some of these SNR adaptations to significantly outperform existing tuning parameters in the moderate to high SNR regime. In addition to being a topic of  theoretical importance, high SNR consistency of CS algorithms have tremendous practical value. A number of applications such as multi user detection\cite{multiuserCS}, on-off random access\cite{fletcher2009off}, CS based single snapshot direction of arrival   \cite{gerstoft2015multiple} etc. demands  support recovery with very low values of PE in the moderate to high SNR regime.  The high SNR consistent tuning parameters derived in this article can be applied directly for such applications in the moderate to high SNR regime.
\subsection{Organization of paper}
Section \rom{2} gives mathematical preliminaries. Section \rom{3} discuss the high SNR consistency of $l_0$-penalty, Section \rom{4} discuss the consistency of CR techniques and Section \rom{5} discuss the consistency of OMP. Section \rom{6} validates the analytical results through numerical simulations.
\section{Mathematical preliminaries}
In this section, we present a brief overview of mathematical concepts from CS and probability theory used in this article. 
\subsection{ Qualifiers for design matrix ${\bf X}$.}
 When $n<p$, the  linear equation ${\bf y}={\bf X}{\beta}$ has infinitely many possible solutions. Hence the support recovery problem is ill-posed even in the noiseless case. To uniquely recover the $k^*$-sparse vector $\beta$, the measurement matrix ${\bf X}$ has to satisfy certain well known regularity conditions. 

{\bf Definition 1:} The spark of a matrix ${\bf X}$ $\left(spark({\bf X})\right)$ is the smallest number of columns in ${\bf X}$ that are linearly dependent. 

Consider the following the optimization problem.
\begin{equation}\label{maxsparse}
\hat{\beta}=\underset{{\bf b} \in {\bf R}^p}{\arg\min} 
\|{\bf b}\|_0, \ \text{subject to}\ \ {\bf y}={\bf X}{\bf b}. 
\end{equation} 
In words $\hat{\beta }$ is the sparsest vector that solves the linear equation ${\bf y}={\bf Xb}$. The following lemma relates the unique recovery of sparse vectors with  $spark({\bf X})$ in the absence of noise.
\begin{lemma}\label{Sparklemma}
 To uniquely recover all $k^*$-sparse vectors $\beta$ using (1), it is necessary and sufficient that $spark({\bf X})>2k^*$ \cite{eldar2012compressed}.
\end{lemma}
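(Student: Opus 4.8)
The plan is to prove the two directions separately, the common engine being the elementary equivalence between $spark(\mathbf{X})$ and the sparsest nonzero vector in the null space $\ker(\mathbf{X})$: a set of $t$ columns of $\mathbf{X}$ is linearly dependent if and only if there is a nonzero $\mathbf{v}$ with $\|\mathbf{v}\|_0 \le t$ and $\mathbf{X}\mathbf{v} = \mathbf{0}$. Consequently, $spark(\mathbf{X}) > 2k^*$ is equivalent to the statement that \emph{every} nonzero $\mathbf{v} \in \ker(\mathbf{X})$ satisfies $\|\mathbf{v}\|_0 > 2k^*$. I would establish this reformulation first and then use it to convert the two-vector recovery question into a statement about the supports of kernel vectors.

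For sufficiency, assume $spark(\mathbf{X}) > 2k^*$ and let $\beta$ be any $k^*$-sparse vector with $\mathbf{y} = \mathbf{X}\beta$. Since $\beta$ is feasible for \eqref{maxsparse}, any minimizer $\hat\beta$ obeys $\|\hat\beta\|_0 \le \|\beta\|_0 \le k^*$. Then $\mathbf{v} := \hat\beta - \beta$ lies in $\ker(\mathbf{X})$ and has $\|\mathbf{v}\|_0 \le \|\hat\beta\|_0 + \|\beta\|_0 \le 2k^*$. By the reformulation above a nonzero kernel vector cannot have support this small, forcing $\mathbf{v} = \mathbf{0}$, i.e.\ $\hat\beta = \beta$; hence recovery is exact and unique.

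For necessity I would argue by contrapositive: if $spark(\mathbf{X}) \le 2k^*$, then there is a nonzero $\mathbf{v} \in \ker(\mathbf{X})$ with $\|\mathbf{v}\|_0 \le 2k^*$. Splitting $\supp(\mathbf{v})$ into two disjoint pieces of size at most $k^*$ each, and assigning $\mathbf{v}$ to one piece and $-\mathbf{v}$ to the other, yields $k^*$-sparse vectors $\beta,\beta'$ with $\beta - \beta' = \mathbf{v}$, so that $\mathbf{X}\beta = \mathbf{X}\beta'$ with $\beta \ne \beta'$. For $\mathbf{y} = \mathbf{X}\beta$ these two distinct $k^*$-sparse vectors are both feasible with the same objective value, so \eqref{maxsparse} cannot single out $\beta$ and unique recovery fails. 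If one insists on exactly $k^*$ nonzeros per the definition, one pads both vectors with a common nonzero pattern on indices outside $\supp(\mathbf{v})$, which leaves the difference---and hence the kernel membership---unchanged; the abundance of columns, $p \ge n > k^*$, guarantees room to do so.

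The routine parts are the algebra of supports and the column-dependence equivalence; the point demanding the most care will be the necessity construction, where one must exhibit a genuine pair of competing sparse solutions rather than merely a sparse null vector, and reconcile the split with the exact-sparsity convention. Beyond this bookkeeping I expect no serious analytic obstacle.
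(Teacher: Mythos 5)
The paper offers no proof of this lemma---it is quoted directly from \cite{eldar2012compressed}---so your argument can only be compared with the standard argument in that reference, which is exactly the route you take: identify $spark({\bf X})$ with the minimal support size of a nonzero null-space vector, prove sufficiency via the difference vector $\hat{\beta}-\beta$, and prove necessity by splitting the support of a sparse null-space vector. Your reformulation and your sufficiency direction are correct and complete.

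The one step that fails as written is the exact-sparsity patch in the necessity direction. The paper defines ${\bf a}$ to be $k^*$-sparse iff $\|{\bf a}\|_0=k^*$, and your fix pads \emph{both} vectors with a \emph{common} nonzero pattern on a set $T$ disjoint from $\supp({\bf v})$; but if the two pieces $S_1,S_2$ of $\supp({\bf v})$ have unequal sizes (unavoidable when $\|{\bf v}\|_0$ is odd), the requirements $|S_1|+|T|=k^*$ and $|S_2|+|T|=k^*$ cannot hold simultaneously, while padding the two vectors on \emph{different} sets destroys the identity $\beta-\beta'={\bf v}$ that placed the difference in $\ker({\bf X})$. The repair is simple: take $|S_1|=\min(k^*,\|{\bf v}\|_0)$, let $\beta$ equal ${\bf v}$ on $S_1$ plus a common pad on a set $T$ with $|T|=k^*-|S_1|$, and let $\beta'$ equal $-{\bf v}$ on $S_2$ plus the same pad. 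Then $\beta$ is exactly $k^*$-sparse, $\beta-\beta'={\bf v}\in\ker({\bf X})$ so ${\bf X}\beta={\bf X}\beta'$, and $\|\beta'\|_0\leq k^*$ with $\beta'\neq\beta$. Consequently, either $\|\beta'\|_0<k^*$, in which case $\beta$ is not even a minimizer of (\ref{maxsparse}) for ${\bf y}={\bf X}\beta$, or $\|\beta'\|_0=k^*$ (the case $\|{\bf v}\|_0=2k^*$ with an even split), in which case the minimizer is not unique; in both cases unique recovery of the $k^*$-sparse vector $\beta$ fails, which is what necessity requires. With this bookkeeping corrected, your proof is sound.
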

The optimization problem (\ref{maxsparse}) cannot be solved in polynomial time. For polynomial complexity CS algorithms like DS, $l_1$-penalty, $l_1$-error, OMP etc. $spark({\bf X})>2k^*$ is not  sufficient to guarantee unique recovery even in the noiseless case. A plethora of sufficient conditions including restricted isometry property (RIP)\cite{eldar2012compressed,OMP_wang}, mutual incoherence condition (MIC)\cite{tropp2006just,cai2011orthogonal}, exact recovery condition (ERC)\cite{tropp2004greed,tropp2006just} etc. are discussed in the literature.  The high SNR analysis of CR techniques and OMP in this article uses ERC and MIC which are defined next. 

{\bf Definition 2:-} A matrix ${\bf X}$  and a vector $\beta$ with support $\mathcal{I}$ is said to be satisfying ERC if the exact recovery coefficient $erc({\bf X},\mathcal{I})=\underset{j \notin \mathcal{I}}{\max}\|{\bf X}_{\mathcal{I}}^{\dagger}{\bf X}_j\|_1$ satisfies $erc({\bf X},\mathcal{I})<1$.

It is known that ERC is a sufficient and worst case necessary condition for accurately recovering  $\mathcal{I}$ from ${\bf y}={\bf X}\beta$ using OMP and the basis pursuit (BP) algorithm that solves 
\begin{equation}\label{l1_noiseless}
\hat{\beta}= \underset{{\bf b} \in \mathbb{R}^p}{\arg\min}\|{\bf b}\|_1,\ \text{subject to} \ {\bf y}={\bf X}{\bf b}
\end{equation}
in the noiseless case\cite{tropp2004greed,tropp2006just}.  ERC is also used   to study the performance of $l_1$-penalty, $l_1$-error and OMP in noisy data\cite{tropp2006just,cai2011orthogonal}. Since the ERC assumption involves the unknown support $\mathcal{I}$, it is impossible to check ERC in practice. Likewise, verifying the spark assumption is computationally intractable. Hence, the MIC, an assumption which can be easily verified is popular in CS literature\cite{cai2011orthogonal}. 

{\bf Definition 3:-}  A $k^*$-sparse vector $\beta$ satisfies MIC, iff the mutual coherence $\mu_{\bf X}=\underset{i\neq j}{\max}|{\bf X}_{i}^T{\bf X}_j|$ satisfies $\mu_{\bf X}<\dfrac{1}{2k^*-1}$. 

If $\mu_{\bf X}<\frac{1}{2k^*-1}$, then ERC is satisfied for all $k^*$-sparse vector $\beta$, i.e.,  $erc({\bf X},\mathcal{I})<1$\cite{tropp2004greed}.  Likewise,  MIC guarantees that $spark({\bf X})>2k^*$\cite{tropp2004greed}. Since, MIC  implies both ERC and spark assumption, the analysis conducted based on ERC and spark are automatically applicable to problems satisfying MIC. 
\begin{remark} The number of measurements $n$ is an important factor in deciding the properties of ${\bf X}$ like spark, $\mu_{\bf X}$ etc.  In this paper, we will not explicitly quantify $n$, however by stating conditions on  $spark({\bf X})$, $\mu_{\bf X}$, ERC etc.  we implicitly assume that $n$ is sufficiently large enough to satisfy these conditions. 
\end{remark}
\subsection{ Standard Convergence concepts [Chapter 4,\cite{chung2001course}].}
  A collection of random variables (R.Vs) $X_{\sigma^2}$ converges in probability (C.I.P) to a R.V $Y$, i.e., $X_{\sigma^2}\overset{P}{\rightarrow}Y$ as $\sigma^2\rightarrow 0$ iff $\forall\epsilon>0$,   $\underset{\sigma^2\rightarrow 0}{\lim}\P(|X_{\sigma^2}-Y|>\epsilon)=0$.  A R.V X is B.I.P iff it is finite almost everywhere, i.e., for any $\epsilon>0$, $\exists R_{\epsilon}<\infty$ such that $\P(| X|>R_{\epsilon})<\epsilon$. For an event $A$,  $\underset{\sigma^2\rightarrow 0}{\lim}\P(A)=0$ iff for each $\epsilon>0$, $\exists$  $\sigma^2_*(\epsilon)>0$ such that $\P(A)\leq \epsilon$, $\forall \sigma^2<\sigma^2_*(\epsilon)$.  Next we describe the relationship between projection matrices and $\chi^2$ R.Vs\cite{tsp}. 
 \begin{lemma}\label{chisquarelemma}
 Let ${\bf P}$ be an arbitrary $n\times n$ projection matrix with  rank $j$. Then for any ${\bf z}\sim\mathcal{N}({\bf u},\sigma^2{\bf I}_n)$, $\dfrac{\|{\bf P}{\bf z}\|_2^2}{\sigma^2}\sim\chi^2_j(\dfrac{\|{\bf P}{\bf u}\|_2^2}{\sigma^2})$ and $\dfrac{\|({\bf I}_n-{\bf P}){\bf z}\|_2^2}{\sigma^2}\sim\chi^2_{n-j}(\dfrac{\|({\bf I}_n-{\bf P}){\bf u}\|_2^2}{\sigma^2})$. Consider the two full rank sub matrices  ${\bf X}_{\mathcal{J}_1}$ and ${\bf X}_{\mathcal{J}_2}$  formed by columns of ${\bf X}$ indexed by  $\mathcal{J}_1 \subset \mathcal{J}_2$. Let  ${\bf P}_{\mathcal{J}_1}$ and ${\bf P}_{\mathcal{J}_2}$ represent the projection matrices onto the column space of ${\bf X}_{\mathcal{J}_1}$ and ${\bf X}_{\mathcal{J}_2}$ respectively. Then for any R.V ${\bf z}\sim\mathcal{N}({\bf u},\sigma^2{\bf I}_n)$,  $\dfrac{\|({\bf P}_{\mathcal{J}_1}-{\bf P}_{\mathcal{J}_2}){\bf z}\|_2^2}{\sigma^2}\sim\chi^2_{|\mathcal{J}_2|-|\mathcal{J}_1|}(\dfrac{\|({\bf P}_{\mathcal{J}_1}-{\bf P}_{\mathcal{J}_2}){\bf u}\|_2^2}{\sigma^2})$.
 \end{lemma}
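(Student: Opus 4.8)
The plan is to reduce all three claims to a single classical fact: if ${\bf A}$ is a symmetric idempotent matrix (i.e., a projection) of rank $r$ and ${\bf v}\sim\mathcal{N}({\bf m},{\bf I}_n)$, then the quadratic form ${\bf v}^T{\bf A}{\bf v}$ is distributed as $\chi^2_r({\bf m}^T{\bf A}{\bf m})$. First I would normalise the problem by setting ${\bf v}={\bf z}/\sigma$, so that ${\bf v}\sim\mathcal{N}({\bf u}/\sigma,{\bf I}_n)$. Since any projection matrix ${\bf P}$ satisfies ${\bf P}^T={\bf P}$ and ${\bf P}^2={\bf P}$, we have $\|{\bf P}{\bf z}\|_2^2={\bf z}^T{\bf P}^T{\bf P}{\bf z}={\bf z}^T{\bf P}{\bf z}$, and therefore $\|{\bf P}{\bf z}\|_2^2/\sigma^2={\bf v}^T{\bf P}{\bf v}$. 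This converts each norm in the statement into a quadratic form to which the classical fact applies.

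To establish that classical fact, I would diagonalise ${\bf P}={\bf Q}{\bf \Lambda}{\bf Q}^T$ with ${\bf Q}$ orthogonal and ${\bf \Lambda}$ diagonal carrying $j$ ones and $n-j$ zeros (the eigenvalues of a rank-$j$ projection). Setting ${\bf w}={\bf Q}^T{\bf v}$ leaves the covariance white, since an orthogonal transform of a $\mathcal{N}(\cdot,{\bf I}_n)$ vector remains $\mathcal{N}(\cdot,{\bf I}_n)$; thus ${\bf w}\sim\mathcal{N}({\bf Q}^T{\bf u}/\sigma,{\bf I}_n)$ and ${\bf v}^T{\bf P}{\bf v}={\bf w}^T{\bf \Lambda}{\bf w}=\sum_{i=1}^{j}w_i^2$, a sum of $j$ independent unit-variance Gaussians. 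By the definition of the non-central chi square this is $\chi^2_j(\lambda)$ with $\lambda=\sum_{i=1}^{j}({\bf Q}^T{\bf u}/\sigma)_i^2={\bf u}^T{\bf P}{\bf u}/\sigma^2=\|{\bf P}{\bf u}\|_2^2/\sigma^2$, giving the first claim. The second claim is then immediate, because ${\bf I}_n-{\bf P}$ is itself a projection of rank $n-j$, so the same argument applies verbatim with $j$ replaced by $n-j$ and ${\bf P}$ by ${\bf I}_n-{\bf P}$.

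For the nested case, the key additional ingredient is that $\mathcal{J}_1\subset\mathcal{J}_2$ implies $col({\bf X}_{\mathcal{J}_1})\subseteq col({\bf X}_{\mathcal{J}_2})$, which forces the absorption/commuting relation ${\bf P}_{\mathcal{J}_1}{\bf P}_{\mathcal{J}_2}={\bf P}_{\mathcal{J}_2}{\bf P}_{\mathcal{J}_1}={\bf P}_{\mathcal{J}_1}$. Using this I would verify directly that ${\bf P}_{\mathcal{J}_2}-{\bf P}_{\mathcal{J}_1}$ is symmetric and idempotent, hence a projection; its rank is $|\mathcal{J}_2|-|\mathcal{J}_1|$ because the sub-matrices are full rank, so ${\bf P}_{\mathcal{J}_i}$ has rank $|\mathcal{J}_i|$. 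Since $\|({\bf P}_{\mathcal{J}_1}-{\bf P}_{\mathcal{J}_2}){\bf z}\|_2^2=\|({\bf P}_{\mathcal{J}_2}-{\bf P}_{\mathcal{J}_1}){\bf z}\|_2^2$ is insensitive to the sign of the difference, applying the first part to the projection ${\bf P}_{\mathcal{J}_2}-{\bf P}_{\mathcal{J}_1}$ yields the third claim, with non-centrality $\|({\bf P}_{\mathcal{J}_1}-{\bf P}_{\mathcal{J}_2}){\bf u}\|_2^2/\sigma^2$. The only mildly delicate point, and the step I would treat most carefully, is the idempotence of the difference, since it rests entirely on the absorption relation above; the remaining manipulations are routine.
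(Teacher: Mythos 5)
Your proof is correct. The paper itself does not prove this lemma---it is quoted as a known result from its reference on high SNR consistent model order selection---and your argument (reduce to the quadratic form ${\bf v}^T{\bf A}{\bf v}$ for a symmetric idempotent ${\bf A}$, diagonalise, and handle the nested case via the absorption relation ${\bf P}_{\mathcal{J}_1}{\bf P}_{\mathcal{J}_2}={\bf P}_{\mathcal{J}_2}{\bf P}_{\mathcal{J}_1}={\bf P}_{\mathcal{J}_1}$, which makes ${\bf P}_{\mathcal{J}_2}-{\bf P}_{\mathcal{J}_1}$ a projection of rank $|\mathcal{J}_2|-|\mathcal{J}_1|$) is precisely the standard derivation that such references rely on, with all the delicate points (whiteness after the orthogonal change of variables, rank via trace, sign-insensitivity of the norm) handled correctly.
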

 Next we state a frequently used convergence result  \cite{tsp}.
 \begin{lemma}\label{chi2convergence}
 Let ${ z} \sim \chi^2_j(\dfrac{\lambda}{\sigma^2})$, where $\lambda>0$ is a constant w.r.t $\sigma^2$. Then $\sigma^2 z \overset{P}{\rightarrow} \lambda$ as $\sigma^2 \rightarrow 0$.
 \end{lemma}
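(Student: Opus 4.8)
The plan is to establish convergence in probability through a second-moment (mean-square) argument, exploiting the closed-form expressions for the first two moments of a non-central chi-square random variable. First I would recall that for $z\sim\chi^2_j(\delta)$ with $j$ degrees of freedom and non-centrality $\delta$, one has $\E[z]=j+\delta$ and $\var(z)=2j+4\delta$. Substituting the prescribed scaling $\delta=\lambda/\sigma^2$ yields $\E[z]=j+\lambda/\sigma^2$ and $\var(z)=2j+4\lambda/\sigma^2$, so that the non-centrality blows up as $\sigma^2\to 0$ but in a controlled manner.

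Next I would study the scaled variable $\sigma^2 z$ directly. By linearity of expectation, $\E[\sigma^2 z]=\sigma^2 j+\lambda$, which converges to the finite limit $\lambda$ as $\sigma^2\to 0$; and by the scaling rule for variance, $\var(\sigma^2 z)=\sigma^4\var(z)=2j\sigma^4+4\lambda\sigma^2$, which converges to $0$ as $\sigma^2\to 0$. The essential point is that the explosive $1/\sigma^2$ factor in the moments is exactly cancelled by the multiplicative $\sigma^2$, leaving a vanishing bias $\E[\sigma^2 z]-\lambda=\sigma^2 j$ together with a vanishing variance.

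Finally I would invoke Chebyshev's inequality to pass from these moment bounds to the desired probabilistic statement. For any fixed $\epsilon>0$,
\[
\P(|\sigma^2 z-\lambda|>\epsilon)\leq \frac{\E[(\sigma^2 z-\lambda)^2]}{\epsilon^2}=\frac{\var(\sigma^2 z)+(\E[\sigma^2 z]-\lambda)^2}{\epsilon^2}=\frac{2j\sigma^4+4\lambda\sigma^2+j^2\sigma^4}{\epsilon^2},
\]
and the right-hand side tends to $0$ as $\sigma^2\to 0$. Matching this against the definition of convergence in probability recalled earlier in the excerpt (namely $\lim_{\sigma^2\to 0}\P(|X_{\sigma^2}-Y|>\epsilon)=0$ for every $\epsilon>0$) gives $\sigma^2 z\overset{P}{\rightarrow}\lambda$.

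I do not anticipate a genuine obstacle: this is a textbook mean-square-convergence argument, and the only place demanding care is bookkeeping the $\sigma^2$ scaling inside the non-centrality parameter, so that the mean settles at the finite value $\lambda$ while the variance is driven to zero. If one preferred to avoid quoting the variance formula, an alternative would be to represent $z$ as a sum of squares of independent Gaussians (one with mean $\sqrt{\lambda}/\sigma$, the remaining $j-1$ centered) and apply the weak law to $\sigma^2 z$ termwise, but the moment computation above is the cleanest route.
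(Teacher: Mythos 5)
Your proof is correct: the moment formulas $\E[z]=j+\delta$ and $\var(z)=2j+4\delta$ for $z\sim\chi^2_j(\delta)$ are standard, the bias--variance decomposition $\E[(\sigma^2 z-\lambda)^2]=\var(\sigma^2 z)+(\E[\sigma^2 z]-\lambda)^2=2j\sigma^4+4\lambda\sigma^2+j^2\sigma^4$ is right, and Chebyshev (Markov applied to the squared deviation) then gives exactly the definition of convergence in probability stated in Section II of the paper. Note, however, that the paper itself does not prove this lemma at all: it is stated as a known result and cited from reference \cite{tsp}, so there is no in-paper argument to compare against. Your second-moment argument is the natural self-contained route, and it is fully rigorous as written; the alternative you sketch (writing $z$ as a sum of $j$ squared Gaussians, one with mean $\sqrt{\lambda}/\sigma$, so that $\sigma^2 z=(\sigma W_1)^2+\sum_{i\geq 2}(\sigma W_i)^2$ with $\sigma W_1\overset{P}{\rightarrow}\sqrt{\lambda}$ and the remaining terms vanishing) also works, though for fixed $j$ it is really a continuity argument rather than a weak law of large numbers.
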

 \subsection{High SNR consistency: Definition}
 { The high SNR consistency results available in literature\cite{ding2011inconsistency,tsp} deals with full rank linear regression models. Since, uniqueness issues are absent when $\text{rank}({\bf X})=p$, this definition of high SNR consistency demands that $PE \rightarrow 0$ as $\sigma^2\rightarrow 0$ for every signal  $\beta \in \mathbb{R}^p$.  In this article, we relax this definition to account for the uniqueness issues present in  regression models with $n<p$ using the concept of regression class. A regression class $\mathcal{C}$ is defined as the collection of matrix signal pairs (${\bf X},\beta$) where perfect recovery is possible for a particular algorithm under noiseless conditions. For $l_0$-penalty, $\mathcal{C}_1=\{({\bf X} ,\beta) : spark(X)>2|supp(\beta)|\}$ is a regression class. Similarly, $\mathcal{C}_2=\{({\bf X},\beta ): \mu_{\bf X}\leq \dfrac{1}{2|supp(\beta)|-1} \}$  and $\mathcal{C}_3=\{{\bf X},\beta: erc({\bf X},supp(\beta))<1\}$ forms regression classes for $l_1$-penalty, $l_1$-error and OMP. We now formally define  high SNR consistency in underdetermined regression models.
  
{\bf Definition 4:-} A SSP is said to be high SNR consistent for a regression class $\mathcal{C}$ if $PE=\mathbb{P}(\hat{\mathcal{I}}\neq \mathcal{I})$ converges to zero as $\sigma^2 \rightarrow 0$ for every matrix vector pair $({\bf X},\beta) \in \mathcal{C}$.
 
In words, a SSP is high SNR consistent if it can deliver a $PE$ arbitrarily close to zero by decreasing the noise variance $\sigma^2$. Even though every signal in a  regression class can be perfectly recovered under noiseless conditions $(\sigma^2=0)$, to achieve a near perfect recovery at high SNR (i.e., $\sigma^2 \neq 0$, but close to zero), the tuning parameters for the SSPs need to be selected appropriately. In the following sections, we discuss the conditions on the tuning parameters such that the support can be  recovered  with arbitrary precision as $\sigma^2$ decreases. } 
 \section{ High SNR consistency of $l_0$-penalty based SSP.} 
In this section, we describe the high SNR behaviour of $\hat{\beta}=\underset{{\bf b} \in \mathbb{R}^p}{\arg\min} ||{\bf y}-{\bf X}{\bf b}||_2^2+\Gamma_0 \sigma^2||{\bf b}||_0$ and $\hat{\mathcal{I}}=supp(\hat{\beta})$, where the tuning parameter  $\Gamma_0$ is a deterministic positive quantity. The values of $\Gamma_0 $ discussed in the  literature  includes the Akaike information criteria (AIC) with $\Gamma_0 =2$, minimum description length (MDL) or  Bayesian information criteria (BIC) with $\Gamma_0 =\log(n)$, risk inflation criteria (RIC) of Foster and George (RIC-FG) with $\Gamma_0 =2\log(p)$\cite{RIC-FG}, RIC of Zhang and Shen (RIC-ZS) with $\Gamma_0 =2\log(p)+2\log(\log(p))$ \cite{RIC-ZS}, extended Bayesian information criterion (EBIC) with $\Gamma_0=\log(n)+\dfrac{2\gamma}{\|{\bf b}\|_0}\log({p \choose \|{\bf b}\|_0})$\cite{E-BIC} etc. The hyper parameter $\gamma$ in EBIC is a user defined parameter. Under a set of regularity conditions on the matrix ${\bf X}$ and $\beta$,  it was shown that $l_0$-penalty is large sample consistent if, $\Gamma_0=o(n^{c_2-c_1})$, $k^*\log(p)=o(n^{c_2-c_1})$ and $\Gamma_0-2\log(p)-\log(\log(p)) \rightarrow \infty$ as $n \rightarrow \infty$. Here, $c_1$ and $c_2$ are parameters depending on the regularity conditions\cite{GICconsistent}. This result hold true for $(n,p,k^*)\rightarrow \infty$ and $n<p$ or $n\ll p$. Note that these tuning parameters are derived based on the large sample behaviour of $l_0$-penalty. The conditions for high SNR consistency of $l_0$-penalty are not discussed in the literature to the best of our knowledge.  
Next we state and prove the sufficient conditions for the high SNR consistency of $l_0$-penalty.
\begin{thm}\label{Sparktheorem}
Consider a matrix ${\bf X}$ which satisfies $spark({\bf X})>2k^*$. Then for any $k^*$-sparse signal $\beta$, $l_0$-penalty is high SNR consistent if $\underset{\sigma^2 \rightarrow 0}{\lim}\  \Gamma_0 =\infty$ and $\underset{\sigma^2\rightarrow 0}{\lim}\  \sigma^2 \Gamma_0 =0$.
\end{thm}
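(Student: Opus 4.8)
The plan is to decompose the probability of error $PE = \P(\hat{\mathcal{I}} \neq \mathcal{I})$ into two failure modes and show each vanishes as $\sigma^2 \to 0$ under the two stated conditions. The estimator selects the support $\hat{\mathcal{I}}$ minimizing the objective $F(\mathcal{J}) = \|({\bf I}_n - {\bf P}_{\mathcal{J}}){\bf y}\|_2^2 + \Gamma_0 \sigma^2 |\mathcal{J}|$ over supports $\mathcal{J}$, since for a fixed support the optimal coefficients are the least-squares fit. An error occurs only if some wrong support $\mathcal{J} \neq \mathcal{I}$ achieves $F(\mathcal{J}) \leq F(\mathcal{I})$. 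I would split the wrong supports into two classes: \emph{underfitting} supports that miss at least one true index (so $\mathcal{I} \not\subseteq \mathcal{J}$, including those of the correct or any cardinality), and \emph{overfitting} supports that contain $\mathcal{I}$ strictly ($\mathcal{I} \subsetneq \mathcal{J}$). By the union bound, $PE$ is controlled by bounding $\P(F(\mathcal{J}) \leq F(\mathcal{I}))$ for each type; since the number of candidate supports is finite and does not grow as $\sigma^2 \to 0$, it suffices to drive each individual term to zero.

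For the overfitting case, I would compare $F(\mathcal{J})$ with $F(\mathcal{I})$ when $\mathcal{I} \subsetneq \mathcal{J}$. Here $\mathbf{y} = {\bf X}_{\mathcal{I}}\beta_{\mathcal{I}} + {\bf w}$ lies (in mean) in $col({\bf X}_{\mathcal{I}}) \subseteq col({\bf X}_{\mathcal{J}})$, so both projections remove the signal and the residual difference is purely noise. The difference $F(\mathcal{I}) - F(\mathcal{J}) = \|({\bf P}_{\mathcal{J}} - {\bf P}_{\mathcal{I}}){\bf y}\|_2^2 - \Gamma_0 \sigma^2(|\mathcal{J}| - |\mathcal{I}|)$. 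By Lemma~\ref{chisquarelemma}, $\|({\bf P}_{\mathcal{J}} - {\bf P}_{\mathcal{I}}){\bf y}\|_2^2 / \sigma^2 \sim \chi^2_{|\mathcal{J}| - |\mathcal{I}|}$ (central, since the relevant mean component is annihilated). An overfitting error requires this central $\chi^2$ variable to exceed $\Gamma_0(|\mathcal{J}| - |\mathcal{I}|)$; since the $\chi^2$ variable is $O_P(1)$ and $\Gamma_0 \to \infty$, the penalty dominates and $\P(\text{overfit error}) \to 0$. This is exactly where the condition $\lim \Gamma_0 = \infty$ is used.

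For the underfitting case, take any $\mathcal{J}$ with $\mathcal{I} \not\subseteq \mathcal{J}$. Now $({\bf I}_n - {\bf P}_{\mathcal{J}})$ fails to annihilate the full signal, leaving a deterministic residual $\mathbf{m}_{\mathcal{J}} = ({\bf I}_n - {\bf P}_{\mathcal{J}}){\bf X}_{\mathcal{I}}\beta_{\mathcal{I}} \neq {\bf 0}$. The key observation, guaranteed by $spark({\bf X}) > 2k^*$ via Lemma~\ref{Sparklemma}, is that no proper sub-support can reproduce the signal, so $\|\mathbf{m}_{\mathcal{J}}\|_2^2 = c_{\mathcal{J}} > 0$ is a strictly positive constant independent of $\sigma^2$. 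I would write $F(\mathcal{J}) = \|({\bf I}_n - {\bf P}_{\mathcal{J}}){\bf y}\|_2^2 + \Gamma_0\sigma^2|\mathcal{J}|$ and, using Lemma~\ref{chisquarelemma} and Lemma~\ref{chi2convergence}, show $\|({\bf I}_n - {\bf P}_{\mathcal{J}}){\bf y}\|_2^2 \overset{P}{\to} c_{\mathcal{J}} > 0$, while the correct-support objective satisfies $\|({\bf I}_n - {\bf P}_{\mathcal{I}}){\bf y}\|_2^2 = O_P(\sigma^2) \to 0$ and its penalty term $\Gamma_0\sigma^2 k^* \to 0$ by the second condition $\lim \sigma^2\Gamma_0 = 0$. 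Thus $F(\mathcal{I}) \overset{P}{\to} 0$ while $F(\mathcal{J})$ stays bounded away from zero, so $\P(F(\mathcal{J}) \leq F(\mathcal{I})) \to 0$. The condition $\sigma^2\Gamma_0 \to 0$ is essential here to prevent the penalty at the true support from blowing up and masking the true signal fit.

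The main obstacle I anticipate is the underfitting analysis: one must verify that the residual energy $c_{\mathcal{J}}$ is genuinely bounded away from zero for \emph{every} incorrect $\mathcal{J}$, which is precisely what the spark condition secures, and one must handle the interplay between the vanishing penalty $\Gamma_0\sigma^2 k^*$ and the non-central $\chi^2$ residual to ensure $F(\mathcal{I})$ converges to zero rather than to some positive constant. The two conditions pull in opposite directions ($\Gamma_0$ large, yet $\sigma^2\Gamma_0$ small), and the delicate point is confirming they are simultaneously satisfiable and that together they control both error types; establishing that $\min_{\mathcal{J}: \mathcal{I}\not\subseteq\mathcal{J}} c_{\mathcal{J}} > 0$ uniformly, which follows from finiteness of the support collection, completes the argument.
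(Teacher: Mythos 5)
Your overall strategy (reformulate $l_0$-penalty as minimizing $\|({\bf I}_n-{\bf P}_{\mathcal{J}}){\bf y}\|_2^2+\Gamma_0\sigma^2|\mathcal{J}|$, union bound over the finitely many supports, two cases driven by the two conditions) matches the paper's, and your overfitting analysis is sound. However, there is a genuine gap in how you partition the wrong supports. You split them by set containment — overfitting ($\mathcal{I} \subsetneq \mathcal{J}$) versus underfitting ($\mathcal{I} \not\subseteq \mathcal{J}$) — and for \emph{every} underfitting $\mathcal{J}$ you claim that $c_{\mathcal{J}} = \|({\bf I}_n - {\bf P}_{\mathcal{J}}){\bf X}\beta\|_2^2 > 0$ is guaranteed by $spark({\bf X}) > 2k^*$. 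That claim is false. If ${\bf X}\beta = {\bf X}_{\mathcal{J}}{\bf a}$ for some ${\bf b}$ supported on $\mathcal{J}$ with $\mathcal{I}\not\subseteq\mathcal{J}$, then ${\bf b}-\beta\neq {\bf 0}$ lies in the null space of ${\bf X}$, and the spark condition only forces $|supp({\bf b}-\beta)| > 2k^*$, i.e., $|\mathcal{J}\cup\mathcal{I}| > 2k^*$ — which is entirely possible once $|\mathcal{J}| > k^*$. Concretely, since $n < p$, there typically exist sets $\mathcal{J}$ with $rank({\bf X}_{\mathcal{J}}) = n$ that omit an index of $\mathcal{I}$; for these ${\bf P}_{\mathcal{J}} = {\bf I}_n$, hence $c_{\mathcal{J}} = 0$, yet $\mathcal{I} \not\subseteq \mathcal{J}$. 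Such supports fall into your underfitting class, where the positivity argument collapses, and they are not in your overfitting class either, so neither of your two arguments covers them.

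The fix is exactly the paper's partition: split supports by whether $({\bf I}_n - {\bf P}_{\mathcal{J}}){\bf X}\beta$ equals ${\bf 0}_n$ or not, rather than by containment. For the class with nonzero residual signal, your underfitting argument applies verbatim (this is where $\sigma^2\Gamma_0 \to 0$ is used). For the class that captures the signal, the spark condition is invoked differently: $\beta$ is the sparsest solution of ${\bf X}{\bf b} = {\bf X}\beta$, so any signal-capturing $\mathcal{J}$ must satisfy $|\mathcal{J}| > k^*$; the error event then requires a difference of bounded-in-probability central $\chi^2$ variables to exceed $\Gamma_0(|\mathcal{J}| - k^*) \to \infty$, which is your overfitting argument — except that ${\bf P}_{\mathcal{J}} - {\bf P}_{\mathcal{I}}$ is no longer a projection for non-nested supports, so the nested-projection Pythagorean identity you use must be replaced by the paper's cruder bound via the difference $A_1 - A_2$ of the two (each B.I.P.) chi-square variables. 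With this repartition your proof goes through and coincides with the paper's.
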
\squeezeup
\begin{proof}
 The optimization problem in $l_0$-penalty can be stated more explicitly as $\hat{\mathcal{I}}=\underset{\mathcal{J} \subset [p]}{\arg\min}\ {L(\mathcal{J})}$, where $L(\mathcal{J})=\underset{{\bf b}:supp({\bf b})=\mathcal{J}}{\min} \|{\bf y}-{\bf X}{\bf b}\|_2^2+\Gamma_0 \sigma^2|\mathcal{J}|$. When ${\bf X}_{\mathcal{J}}$ has full rank, the solution to   $\underset{{\bf b}:supp({\bf b})=\mathcal{J}}{\min} \|{\bf y}-{\bf X}{\bf b}\|_2^2 $ = $\underset{{\bf a} \in \mathbb{R}^{|\mathcal{J}|}}{\min} \|{\bf y}-{\bf X}_{\mathcal{J}}{\bf a}\|_2^2 $  is unique and equal to $\hat{\bf a}=({\bf X}_{\mathcal{J}}^T{\bf X}_{\mathcal{J}})^{-1}{\bf X}_{\mathcal{J}}^T{\bf y}$. In this case,  ${\bf X}_{\mathcal{J}}\hat{\bf a}={\bf P}_{\mathcal{J}}{\bf y}$ and $\underset{{\bf b}:supp({\bf b})=\mathcal{J}}{\min} \|{\bf y}-{\bf X}{\bf b}\|_2^2 $ is equal to $\|({\bf I}_n-{\bf P}_{\mathcal{J}}){\bf y}\|_2^2$. Here ${\bf P}_{\mathcal{J}}={\bf X}_{\mathcal{J}} ({\bf X}_{\mathcal{J}}^T{\bf X}_{\mathcal{J}})^{-1}{\bf X}_{\mathcal{J}}$ is a projection matrix of rank $|\mathcal{J}|=rank({\bf X}_{\mathcal{J}})$.  
When ${\bf X}_{\mathcal{J}}$ is rank deficient, the solution to   $\underset{{\bf b}:supp({\bf b})=\mathcal{J}}{\min} \|{\bf y}-{\bf X}{\bf b}\|_2^2 $ = $\underset{{\bf a} \in \mathbb{R}^{|\mathcal{J}|}}{\min} \|{\bf y}-{\bf X}_{\mathcal{J}}{\bf a}\|_2^2 $ 
can be any one of the infinitely many vectors $\hat{\bf a}$ that solves ${\bf X}_{\mathcal{J}}^T{\bf X}_{\mathcal{J}}\hat{\bf a}={\bf X}_{\mathcal{J}}^T{\bf y}$. A typical solution is denoted by
$\hat{\bf a}=  ({\bf X}_{\mathcal{J}}^T{\bf X}_{\mathcal{J}})^{-}{\bf X}_{\mathcal{J}}^T{\bf y}$, where $({\bf X}_{\mathcal{J}}^T{\bf X}_{\mathcal{J}})^{-}$ is called the generalized inverse of ${\bf X}_{\mathcal{J}}^T{\bf X}_{\mathcal{J}}$\cite{yan2009linear}. The matrix ${\bf X}_{\mathcal{J}} ({\bf X}_{\mathcal{J}}^T{\bf X}_{\mathcal{J}})^{-}{\bf X}_{\mathcal{J}}$ satisfies all the properties of a projection matrix of $rank(\mathcal{X}_{\mathcal{J}})$. We denotes this matrix by ${\bf P}_{\mathcal{J}}$ itself with a caveat that $rank({\bf P}_{\mathcal{J}})=rank({\bf X}_{\mathcal{J}})<|\mathcal{J}|$. With this convention, when ${\bf X}_{\mathcal{J}}$ is rank deficient, $\underset{{\bf b}:supp({\bf b})=\mathcal{J}}{\min} \|{\bf y}-{\bf X}{\bf b}\|_2^2 =\|({\bf I}_n-{\bf P}_{\mathcal{J}}){\bf y}\|_2^2$. Hence, $l_0$-penalty can be reformulated as
 \begin{equation}\label{pls_alt}
\hat{\mathcal{I}}=\underset{\mathcal{J}\subseteq [p]}{\arg\min}{L(\mathcal{J})}=\underset{\mathcal{J}\subseteq [p]}{\arg\min}\|({\bf I}_n-{\bf P}_{\mathcal{J}}){\bf y}\|_2^2+\sigma^2\Gamma_0|\mathcal{J}|.
\end{equation} 
Define the  error event $\mathcal{E}=\{\hat{\mathcal{I}}\neq \mathcal{I}\}=\{\exists \mathcal{J}\in [p]: L(\mathcal{J})\leq L(\mathcal{I}) \}$. Applying union bound to $PE=\P(\mathcal{E})$ gives
 \begin{equation}\label{sum_all}
 \begin{array}{ll}
 PE&\leq\sum\limits_{\mathcal{J}\in {[p]}}\P(L(\mathcal{J})\leq L(\mathcal{I})).\\
 &=\overset{P_1}{\overbrace{\sum\limits_{\mathcal{J} \in {\mathcal{H}}_1}\P(L(\mathcal{J})\leq L(\mathcal{I}))}}+\overset{P_2}{\overbrace{\sum\limits_{\mathcal{J} \in {\mathcal{H}_2}}\P(L(\mathcal{J})\leq L(\mathcal{I}))}}.
 \end{array}
 \end{equation}
 where ${\mathcal{H}_1}=\{\mathcal{J}\in [p]:({\bf I}_n-{\bf P}_{{\mathcal{J}}}){\bf X}{\beta}\neq {\bf 0}_n\} $ and ${\mathcal{H}}_2=\{\mathcal{J}\in [p]:({\bf I}_n-{\bf P}_{{\mathcal{J}}}){\bf X}{\beta}= {\bf 0}_n\}$. In words, ${\mathcal{H}_1}$ represent the subsets $\mathcal{J} \subseteq [p]$ such that the  $col({\bf X}_{\mathcal{J}})$  does not cover the signal subspace $col({\bf X}_{\mathcal{I}})$. For  $\mathcal{I}=\{1,2\}$,  assuming that the columns ${\bf X}_1$, ${\bf X}_2$ and  ${\bf X}_3$ are linearly independent, the subsets $\mathcal{J}=\{1\}$, $\mathcal{J}=\{3\}$, $\mathcal{J}=\{1,3\}$ etc. belongs to $\mathcal{H}_1$. Similarly, ${\mathcal{H}_2}$ represents the subsets $\mathcal{J} \subseteq [p]$ such that the  $col({\bf X}_{\mathcal{J}})$   cover the signal subspace $col({\bf X}_{\mathcal{I}})$. For $\mathcal{I}=\{1,2\}$,   $\mathcal{J}=\{1,2,3\}$, $\mathcal{J}=\{1,2,3, 4\}$ etc. will belong to $\mathcal{H}_2$.  We consider both these summations separately. 

{\bf Case 1 \ $({\bf I}_n-{\bf P}_{\mathcal{J}}){\bf X}{\beta}\neq {\bf 0}_n$:-} 
In this case, it can happen that $|\mathcal{J}|>k^*$, $|\mathcal{J}|=k^*$ or $|\mathcal{J}|<k^*$. Since $\mathcal{I}=supp(\beta)$,  $({\bf I}_n-{\bf P}_{{\mathcal{I}}}){\bf X}{\beta}= {\bf 0}_n$. Thus, by Lemma \ref{chisquarelemma}, $A_1=\dfrac{\|({\bf I}_n-{\bf P}_{{\mathcal{I}}}){\bf y}\|_2^2}{\sigma^2}\sim \chi^2_{n-k^*}$.  Likewise, $({\bf I}_n-{\bf P}_{\mathcal{J}}){\bf X}{\beta}\neq {\bf 0}_n$ implies that $A_2=\dfrac{\|({\bf I}_n-{\bf P}_{\mathcal{J}}){\bf y}\|_2^2}{\sigma^2}\sim \chi^2_{n-rank({\bf X}_{\mathcal{J}})}(\dfrac{\lambda_{\mathcal{J}}}{\sigma^2})$, where $\lambda_{\mathcal{J}}={\|({\bf I}_n-{\bf P}_{\mathcal{J}}){\bf X}{\beta}\|_2^2}>0$. Hence,
\begin{equation}
\begin{array}{ll}
\P(\mathcal{E}_{\mathcal{J}})&=\P\left(L(\mathcal{J})<L({\mathcal{I}})\right)\\&=\P\left((A_2-A_1)\sigma^2+\Gamma_0 \sigma^2(|\mathcal{J}|-k^*)<0\right).
\end{array}
\end{equation}
Since, $A_1\sim \chi^2_{n-k^*}$ is a B.I.P R.V, $A_1\sigma^2 \overset{P}{\rightarrow} 0$ as $\sigma^2 \rightarrow 0$.  By Lemma \ref{chi2convergence}, $\sigma^2A_2 \overset{P}{\rightarrow}\lambda_{\mathcal{J}}>0$ as $\sigma^2 \rightarrow 0$.  By the hypothesis of Theorem \ref{Sparktheorem}, $\Gamma_0\sigma^2(|\mathcal{J}|-k^*) \rightarrow 0$ as $\sigma^2 \rightarrow 0$. This implies that $(A_2-A_1)\sigma^2+\Gamma_0\sigma^2(|\mathcal{J}|-k^*)\overset{P}{\rightarrow}\lambda_{\mathcal{J}}>0$. Now, by the definition of C.I.P, for any $\epsilon>0$, $\exists  \sigma^2_{\mathcal{J}}>0$ such that $\P\left(|(A_2-A_1)\sigma^2+\Gamma_0\sigma^2(|\mathcal{J}|-k^*)-\lambda_{\mathcal{J}}|>\dfrac{\lambda_{\mathcal{J}}}{2}\right)<\epsilon$, for all $\sigma^2<\sigma^2_{\mathcal{J}}$. This implies that
 \begin{equation}
\P(\mathcal{E}_{\mathcal{J}})\leq \P\left((A_2-A_1)\sigma^2+\Gamma_0\sigma^2(|\mathcal{J}|-k^*)<\dfrac{\lambda_{\mathcal{J}}}{2}\right)\leq \epsilon, 
\end{equation}
$\forall \sigma^2<\sigma^2_{\mathcal{J}}$. Thus, $\underset{\sigma^2 \rightarrow 0}{\lim}\P(\mathcal{E}_{\mathcal{J}})=0$, $\forall \mathcal{J}\in {\mathcal{H}}_1$. This together with $|{\mathcal{H}}_1|<\infty$ implies that $\underset{\sigma^2 \rightarrow 0}{\lim}P_1=0$. \\
{\bf Case 2 \ $({\bf I}_n-{\bf P}_{{\mathcal{J}}}){\bf X}{\beta}={\bf 0}_n$:-} $spark({\bf X})>2k^*$  implies  that $\beta$ is the sparsest solution to the equation ${\bf X}{\bf b}={\bf X}\beta$. Hence, $({\bf I}_n-{\bf P}_{\mathcal{J}}){\bf X}{\beta}= {\bf 0}_n$ implies that $|\mathcal{J}|>k^*$. Since, $({\bf I}_n-{\bf P}_{\mathcal{J}}){\bf X}{\beta}= {\bf 0}_n$,  $A_2=\dfrac{\|({\bf I}_n-{\bf P}_{\mathcal{J}}){\bf y}\|_2^2}{\sigma^2}\sim \chi^2_{n-rank({\bf X}_{\mathcal{J}})}$. Thus $\P(\mathcal{E}_{\mathcal{J}})$ becomes
\begin{equation}
\P(\mathcal{E}_{\mathcal{J}})=\P\left(L(\mathcal{J})<L({\mathcal{I}})\right)=\P\left((A_1-A_2)>\Gamma_0(|\mathcal{J}|-k^*)\right).
\end{equation}
Note that both $A_1$ and $A_2$ are B.I.P R.Vs with distribution independent of $\sigma^2$ and so is $A_1-A_2$. Thus, $\exists t_{\epsilon}<\infty$ independent of $\sigma^2$ such that $\P(A_1-A_2>t_{\epsilon})<\epsilon$. Since, $|\mathcal{J}|>k^*$, by the hypothesis of Theorem \ref{Sparktheorem}, $\Gamma_0(|\mathcal{J}|-k^*)\rightarrow \infty$ as $\sigma^2\rightarrow 0$. Thus, $\exists \sigma^2_{\mathcal{J}}>0$, such that $\Gamma_0(|\mathcal{J}|-k^*)>t_{\epsilon}$, $\forall \sigma^2<\sigma^2_{\mathcal{J}}$. Combining, we get  $\P(\mathcal{E}_{\mathcal{J}})<\epsilon,\ \forall \sigma^2<\sigma^2_{\mathcal{J}}$. Thus, $\underset{\sigma^2 \rightarrow 0}{\lim}\P(\mathcal{E}_{\mathcal{J}})=0$, $\forall \mathcal{J}\in {\mathcal{H}_2}$. This together with $|{\mathcal{H}}_2|<\infty$ implies that $\underset{\sigma^2 \rightarrow 0}{\lim}P_2=0$. Thus, under the hypothesis of Theorem \ref{Sparktheorem}, $l_0$-penalty is high SNR consistent. 
\end{proof}
\begin{remark}
{ Theorem \ref{Sparktheorem}  details a range of SNR adaptations on $\Gamma_0 $ such that $l_0$-penalty is high SNR consistent. However, different  SNR adaptations satisfying Theorem \ref{Sparktheorem} leads to different convergence rates of $PE$. The proof of Theorem \ref{Sparktheorem} reveals that $P_1$ is related to the probability of underestimation and $P_2$ is related to the probability of overestimation in MOS problems detailed in \cite{tsp}. To summarise,  $\Gamma_0$ with faster rate of increase to $\infty$ will have lower values of $P_2$ and higher values of $P_1$ and vice versa. }
\end{remark}
\subsection{ High SNR consistency of $l_0$-penalty: Necessary conditions}
The SNR adaptations required by Theorem \ref{Sparktheorem} are in sharp contrast to the $\sigma^2$ independent values of $\Gamma_0 $ discussed in literature. The following theorem  proves that $l_0$-penalty with $\sigma^2$ independent values of $\Gamma_0$ are inconsistent at high SNR.
\begin{thm}\label{sparknecessity1}
Consider a matrix ${\bf X}$ with $spark({\bf X})>2k^*$. Then for any $k^*$-sparse vector $\beta$, $l_0$-penalty is high SNR consistent only if $\underset{\sigma^2 \rightarrow 0}{\lim} \Gamma_0=\infty$.  
\end{thm}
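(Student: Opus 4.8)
The plan is to prove the contrapositive at a fixed pair: keep $({\bf X},\beta)$ as in the hypothesis, and show that if $\Gamma_0$ fails to diverge then the support error probability $PE$ for this very pair does not vanish, which already precludes high SNR consistency in the sense of Definition 4. Negating $\underset{\sigma^2\to 0}{\lim}\Gamma_0=\infty$ yields a constant $M<\infty$ and a sequence $\sigma_m^2\downarrow 0$ along which $\Gamma_0\leq M$. The whole argument then reduces to exhibiting a single competitor subset that strictly beats the true support $\mathcal{I}$ with a probability bounded below by a $\sigma^2$-independent positive constant along this sequence.

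First I would isolate the source of inconsistency, which, as the remark following Theorem \ref{Sparktheorem} anticipates, is overestimation. Pick any index $j\notin\mathcal{I}$ (one exists since $k^*<n\leq p$) and set $\mathcal{J}=\mathcal{I}\cup\{j\}$, so that $|\mathcal{J}|=k^*+1$. Because $spark({\bf X})>2k^*\geq k^*+1$, the columns ${\bf X}_{\mathcal{J}}$ are linearly independent, ${\bf P}_{\mathcal{J}}$ has rank $k^*+1$, and $\mathcal{I}\subset\mathcal{J}$ makes ${\bf P}_{\mathcal{I}}$ and ${\bf P}_{\mathcal{J}}$ project onto nested subspaces with $({\bf I}_n-{\bf P}_{\mathcal{J}}){\bf X}\beta={\bf 0}_n$; thus $\mathcal{J}\in\mathcal{H}_2$, putting us in exactly the overestimation regime of Case 2 of Theorem \ref{Sparktheorem}.

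Next I would compute the objective gap exactly. Using the nesting identity $\|({\bf I}_n-{\bf P}_{\mathcal{I}}){\bf y}\|_2^2-\|({\bf I}_n-{\bf P}_{\mathcal{J}}){\bf y}\|_2^2=\|({\bf P}_{\mathcal{J}}-{\bf P}_{\mathcal{I}}){\bf y}\|_2^2$, the difference of the two penalised costs becomes $L(\mathcal{J})-L(\mathcal{I})=\sigma^2(\Gamma_0-B)$, where $B=\|({\bf P}_{\mathcal{J}}-{\bf P}_{\mathcal{I}}){\bf y}\|_2^2/\sigma^2$. Applying Lemma \ref{chisquarelemma} to the nested pair $\mathcal{I}\subset\mathcal{J}$ gives $B\sim\chi^2_1(\|({\bf P}_{\mathcal{J}}-{\bf P}_{\mathcal{I}}){\bf X}\beta\|_2^2/\sigma^2)$; and since ${\bf X}\beta\in col({\bf X}_{\mathcal{I}})\subseteq col({\bf X}_{\mathcal{J}})$ forces $({\bf P}_{\mathcal{J}}-{\bf P}_{\mathcal{I}}){\bf X}\beta={\bf 0}_n$, the non-centrality vanishes and $B\sim\chi^2_1$ is a \emph{central} chi-square whose law does not depend on $\sigma^2$. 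This $\sigma^2$-independence is the crux of the argument and the point where the converse parts ways with the proof of Theorem \ref{Sparktheorem}, which instead exploited $\Gamma_0\to\infty$ to push the penalty past a fixed tail level.

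Finally I would bound $PE$ from below. Since $\hat{\mathcal{I}}\neq\mathcal{I}$ whenever $L(\mathcal{J})<L(\mathcal{I})$, we have $PE\geq\P(L(\mathcal{J})<L(\mathcal{I}))=\P(B>\Gamma_0)$. Along the sequence $\sigma_m^2$ on which $\Gamma_0\leq M$, monotonicity of the tail gives $PE\geq\P(B>M)$, a strictly positive constant independent of $m$ because the central $\chi^2_1$ law has unbounded support. Hence $PE\not\to 0$ as $\sigma^2\to 0$, contradicting high SNR consistency, so consistency forces $\underset{\sigma^2\to 0}{\lim}\Gamma_0=\infty$. The only delicate points are the careful negation of the divergence hypothesis, which produces a bounded subsequence rather than a uniform global bound, and the verification that the non-centrality of $B$ is genuinely zero so that its tail probability stays pinned as the noise vanishes; the remaining projection-geometry steps are routine.
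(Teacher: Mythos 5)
Your proof is correct and follows essentially the same route as the paper: both augment the true support by a single spurious index $j\notin\mathcal{I}$, use Lemma \ref{chisquarelemma} plus the nesting ${\bf X}\beta\in col({\bf X}_{\mathcal{I}})\subset col({\bf X}_{\mathcal{J}})$ to show the cost gap is $\sigma^2(\Gamma_0-B)$ with $B\sim\chi^2_1$ central and $\sigma^2$-independent, and conclude that a bounded $\Gamma_0$ keeps $PE\geq\P(B>\Gamma_0)$ pinned above a positive constant; the paper merely phrases this as the explicit uniform bound $PE\geq 2Q(\sqrt{\Gamma_0})$ for all $\sigma^2>0$ rather than via your contrapositive/subsequence framing. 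The one detail the paper treats that you gloss over is the degenerate case $k^*=0$, where your chain $spark({\bf X})>2k^*\geq k^*+1$ breaks down and one instead invokes $spark({\bf X})\geq 2$ for any matrix with nonzero columns.
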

\begin{proof}
Define $\mathcal{J}=\mathcal{I} \cup i$, where $i\notin \mathcal{I}$. Note that $ |\mathcal{J}|= k^*+1\leq 2k^*$, $\forall k^*\geq 1$ and $|\mathcal{J}|=1$ if $k^*=0$. Further for any matrix ${\bf X}$, $spark({\bf X})\geq 2$. Hence, $spark({\bf X})>2k^*$ implies that ${\bf X}_{\mathcal{J}}$ has full rank for $k^*\geq 0$. This together with $\mathcal{I} \subset \mathcal{J}$ implies that $({\bf I}_n-{\bf P}_{{\mathcal{J}}}){\bf X}{\beta}= {\bf 0}_n$. Expanding  $L(\mathcal{J})=\|({\bf I}_n-{\bf P}_{\mathcal{J}}){\bf y}\|_2^2+\sigma^2\Gamma_0|\mathcal{J}|$  and applying Lemma \ref{chisquarelemma}, we have
\begin{equation}
PE \geq \P\left(L(\mathcal{J})<L(\mathcal{I})\right)\geq \P(A>\Gamma_0),
\end{equation}
where $A=\dfrac{{\bf y}^T({\bf P}_{\mathcal{J}}-{\bf P}_{\mathcal{I}}){\bf y}}{\sigma^2}\sim \chi^2_1$, $\forall \sigma^2>0$.  $A \sim \chi^2_1$ implies that $A=Z^2$, where $Z\sim \mathcal{N}(0,1)$. Thus, $PE \geq \P(A>\Gamma_0)=\P(|Z|>\sqrt{\Gamma_0})=2Q(\sqrt{\Gamma_0}), \ \forall \sigma^2>0$. Here $Q(x)=\dfrac{1}{\sqrt{2\pi}}\int_{t=x}^{\infty}\exp({-\frac{t^2}{2}})dt$ is the complementary cumulative distribution function of a $N(0,1)$ R.V. Hence, $l_0$-penalty is high SNR consistent only if $\underset{\sigma^2 \rightarrow 0}{\lim}\Gamma_0=\infty$. 
\end{proof}
\begin{remark}
It follows directly from the proof of Theorem \ref{sparknecessity1} that PE of $l_0$-penalty with SNR independent $\Gamma_0$ like BIC, AIC etc. satisfy ${PE} \geq 2Q(\sqrt{\Gamma_0})$, $\forall \sigma^2>0$.  For RIC-FG with $\Gamma_0=2\log(p)$, the lower bound  $2Q(\sqrt{\Gamma_0})$ will be less than $0.01$ only for $p\geq 28$ and $2Q(\sqrt{\Gamma_0})\leq 0.001$ only for $p\geq 225$.  Hence, the performance of these criteria in small and medium sized problems will be  suboptimal. 
\end{remark}
Theorems 2  implies that $\underset{\sigma^2 \rightarrow 0}{\lim}\  \Gamma_0 =\infty$ is a necessary  condition  for high SNR consistency. We next establish the necessity of $\underset{\sigma^2\rightarrow 0}{\lim}\  \sigma^2 \Gamma_0 =0$  for high SNR consistency. 
\begin{thm}\label{sparknecessity2}
Consider a matrix ${\bf X}$ which satisfies $spark({\bf X})>2k^*$. Then for any $k^*$-sparse signal $\beta$ with $k^*\geq 1$, $l_0$-penalty  is high SNR consistent only if $\underset{\sigma^2 \rightarrow 0}{\lim} \sigma^2\Gamma_0=0$.  
\end{thm}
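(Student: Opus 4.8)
The plan is to mirror the overestimation argument of Theorem~\ref{sparknecessity1}, but now lower-bound $PE$ by the probability of an \emph{under}estimation error induced by an excessively large penalty. Since $k^*\geq 1$, I would fix some $i\in\mathcal{I}$ and set $\mathcal{J}=\mathcal{I}\setminus\{i\}$, so $|\mathcal{J}|=k^*-1<2k^*$; the hypothesis $spark({\bf X})>2k^*$ then forces ${\bf X}_{\mathcal{J}}$ to have full column rank, and $\mathcal{J}\subset\mathcal{I}$ gives ${\bf P}_{\mathcal{J}}{\bf P}_{\mathcal{I}}={\bf P}_{\mathcal{J}}$, so that ${\bf P}_{\mathcal{I}}-{\bf P}_{\mathcal{J}}$ is a rank-one projection. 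Exactly as in the proof of Theorem~\ref{Sparktheorem}, this identity yields $\|({\bf I}_n-{\bf P}_{\mathcal{J}}){\bf y}\|_2^2-\|({\bf I}_n-{\bf P}_{\mathcal{I}}){\bf y}\|_2^2=\|({\bf P}_{\mathcal{I}}-{\bf P}_{\mathcal{J}}){\bf y}\|_2^2$, so the event $L(\mathcal{J})<L(\mathcal{I})$ is equivalent to $\|({\bf P}_{\mathcal{I}}-{\bf P}_{\mathcal{J}}){\bf y}\|_2^2<\sigma^2\Gamma_0$.

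Next I would identify the relevant distribution. Because ${\bf X}\beta\in col({\bf X}_{\mathcal{I}})$ but $\beta_i\neq 0$ cannot be represented by ${\bf X}_{\mathcal{J}}$ (the columns of ${\bf X}_{\mathcal{I}}$ are independent under the spark hypothesis), the mean $({\bf P}_{\mathcal{I}}-{\bf P}_{\mathcal{J}}){\bf X}\beta=({\bf I}_n-{\bf P}_{\mathcal{J}}){\bf X}\beta$ is nonzero. Hence by Lemma~\ref{chisquarelemma}, $B=\|({\bf P}_{\mathcal{I}}-{\bf P}_{\mathcal{J}}){\bf y}\|_2^2/\sigma^2\sim\chi^2_1(\lambda/\sigma^2)$ with $\lambda=\|({\bf I}_n-{\bf P}_{\mathcal{J}}){\bf X}\beta\|_2^2>0$ constant in $\sigma^2$. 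This gives the lower bound $PE\geq\P(L(\mathcal{J})<L(\mathcal{I}))=\P(\sigma^2 B<\sigma^2\Gamma_0)$, and by Lemma~\ref{chi2convergence} I have $\sigma^2 B\overset{P}{\rightarrow}\lambda$.

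The concluding step is the contrapositive, and this is where the main obstacle lies. For a \emph{fixed} $\beta$ the bound above only forces $\limsup_{\sigma^2\to 0}\sigma^2\Gamma_0\leq\lambda$, because over-penalization triggers an underestimation error only once the discarded signal energy $\lambda$ is exceeded; the per-pair event alone cannot yield the sharp conclusion $\sigma^2\Gamma_0\to 0$. To upgrade it I would invoke Definition~4: high SNR consistency must hold for \emph{every} pair in the class $\mathcal{C}_1$, and the rescaling $\beta\mapsto\alpha\beta$ with $\alpha\to 0$ preserves $supp(\beta)$ (hence membership in $\mathcal{C}_1$ and the value of $k^*$) while sending $\lambda=\alpha^2\|({\bf I}_n-{\bf P}_{\mathcal{J}}){\bf X}\beta\|_2^2\to 0$. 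Now suppose $\sigma^2\Gamma_0\not\rightarrow 0$, so that there exist $\delta>0$ and $\sigma^2_m\downarrow 0$ with $\sigma^2_m\Gamma_0\geq\delta$ for all $m$. Choosing $\alpha$ small enough that $\lambda<\delta$, the inclusion $\{\sigma^2_m B<\delta\}\subseteq\{\sigma^2_m B<\sigma^2_m\Gamma_0\}$ gives $PE\geq\P(\sigma^2_m B<\delta)$, which tends to $1$ since $\sigma^2_m B\overset{P}{\rightarrow}\lambda<\delta$. Thus $PE\not\rightarrow 0$ along $\sigma^2_m$ for this signal, contradicting consistency on $\mathcal{C}_1$; therefore $\lim_{\sigma^2\to 0}\sigma^2\Gamma_0=0$ is necessary.
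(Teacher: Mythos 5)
Your proposal is correct and takes essentially the same route as the paper: the same deleted-index set $\mathcal{J}=\mathcal{I}\setminus\{i\}$, the same reduction of $\{L(\mathcal{J})<L(\mathcal{I})\}$ to a noncentral $\chi^2_1$ event via Lemmas \ref{chisquarelemma} and \ref{chi2convergence}, and the same lower bound $PE\geq\P\left(\sigma^2 B<\sigma^2\Gamma_0\right)$ with $\sigma^2 B\overset{P}{\rightarrow}\lambda$. The only divergence is at the very end, where the paper assumes $\lim_{\sigma^2\rightarrow 0}\sigma^2\Gamma_0$ exists and concludes informally that ``$\lambda$ is unknown,'' whereas you make the quantification over the regression class explicit through the subsequence argument and the rescaling $\beta\mapsto\alpha\beta$ --- a more rigorous rendering of the same idea rather than a different proof.
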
 

\begin{proof}
Define $\mathcal{J}=\mathcal{I}/ i$, where $i\in \mathcal{I}$. Since, $\mathcal{J}\subset\mathcal{I}$ and $spark({\bf X})>2k^*$, it follows that $({\bf I}_n-{\bf P}_{{\mathcal{J}}}){\bf X}{\beta}\neq{\bf 0}_n$.  Expanding $L(\mathcal{J})=\|({\bf I}_n-{\bf P}_{\mathcal{J}}){\bf y}\|_2^2+\sigma^2\Gamma_0|\mathcal{J}|$ and applying Lemma \ref{chisquarelemma}, we have
\begin{equation}
PE \geq \P\left(L(\mathcal{J})<L(\mathcal{I})\right)\geq \P(A<\Gamma_0\sigma^2),
\end{equation}
where $A={\bf y}^T({\bf P}_{\mathcal{I}}-{\bf P}_{\mathcal{J}}){\bf y}\sim \sigma^2\chi^2_1(\dfrac{\lambda}{\sigma^2})$ with $\lambda=\|({\bf P}_{\mathcal{I}}-{\bf P}_{\mathcal{J}}){\bf X}\beta\|_2^2>0$. By Lemma \ref{chi2convergence}, $A\overset{P}{\rightarrow}\lambda$ as $\sigma^2 \rightarrow 0$. Suppose that $\underset{\sigma^2 \rightarrow 0}{\lim}\sigma^2\Gamma_0=\lambda_1$, where $\lambda_1>\lambda$. Then, $\exists \sigma^2_1>0$ such that $\dfrac{\lambda_1+\lambda}{2}<\sigma^2\Gamma_0<\lambda_1$, $\forall \sigma^2<\sigma^2_1$. This implies that 
\begin{equation}
\begin{array}{ll}
\P(A<\Gamma_0\sigma^2)&\geq \P(A<\dfrac{\lambda_1+\lambda}{2})=1-\P(A-\lambda>\dfrac{\lambda_1-\lambda}{2}) \\
& \geq  1-\P(|A-\lambda|>\dfrac{\lambda_1-\lambda}{2}), \ \forall \sigma^2<\sigma^2_1.
\end{array}
\end{equation} 
Since, $A \overset{P}{\rightarrow} \lambda$ as $\sigma^2\rightarrow 0$, for any $\epsilon>0$, $\exists \sigma^2_2>0$ such that $\P(|A-\lambda|>\dfrac{\lambda_1-\lambda}{2})\leq \epsilon,\ \forall \sigma^2<\sigma^2_2$. Fix $\sigma^2(\epsilon)=\min(\sigma^2_1,\sigma^2_2)$. Then $\forall \sigma^2<\sigma^2(\epsilon), PE \geq 1-\epsilon$. Thus if $\lambda_1>\lambda$, then $\underset{\sigma^2 \rightarrow 0}{\lim}PE=1$. This implies that $\underset{\sigma^2 \rightarrow 0}{\lim}\sigma^2\Gamma_0<\lambda$ is a necessary condition for high SNR consistency. However, without \textit{a priori} knowledge of  non-zero entries of $\beta$, $\lambda$ is unknown.
Hence, $l_0$-penalty is high SNR consistent  only if $\underset{\sigma^2 \rightarrow 0}{\lim} \sigma^2\Gamma_0=0$.
\end{proof}
\begin{remark}
{The formulation of $l_0$-penalty given in (\ref{pls_alt}) is exactly similar to that of MOS problems given in (\ref{pll_mos}) except that the search space of MOS is a very small subset of the search space in $l_0$-penalty. This is reflected in the similarity of NSCs for MOS derived in \cite{tsp} and Theorems \ref{Sparktheorem}-\ref{sparknecessity2} for subset selection. It is also true that different values of $\Gamma_0$ gives EEF, NMDL etc. as special cases. Hence, Theorems  \ref{Sparktheorem}-\ref{sparknecessity2} can be seen as an extension of the existing high SNR consistency  results in \cite{tsp,ding2011inconsistency,SNLShighSNR,schmidt2012consistency} to subset selection problems. However, the novelty of Theorems \ref{Sparktheorem}-\ref{sparknecessity2} lies in the fact that it explicitly takes into account the identifiability issues associated with subset selection in underdetermined linear models.  These structural issues were not considered in \cite{tsp,ding2011inconsistency,SNLShighSNR,schmidt2012consistency} which dealt with MOS in overdetermined linear regression models.    }
\end{remark}

\section{High SNR consistency of convex relaxation based SSPs}
In this section, we derive NSCs on the tuning parameters $\{\Gamma_i\}_{i=1}^3
$ such that $l_1$-penalty, $l_1$-error and DS are high SNR consistent. Unlike the NP-hard $l_0$-penalty which is computationally infeasible except in small sized problems, the CR based SSPs discussed in this section and the greedy algorithms like OMP discussed in Section \rom{5}  can be implemented with polynomial complexity. Hence, these techniques are practically important. {Unlike the high SNR consistency of $l_0$-penalty whose connections with the high SNR consistency in MOS problems we previously mentioned, the high SNR consistency of CR and greedy algorithms are not discussed in open literature to the best of our knowledge.}  We first discuss the $l_1$-penalty based SSP.
\subsection{ High SNR consistency of $l_1$-penalty: Sufficient conditions}
In this section, we discuss the high SNR behaviour of   
$ \hat{\beta}=\underset{{\bf b} \in \mathbb{R}^p}{\arg\min} \dfrac{1}{2}\|{\bf y}-{\bf X}{\bf b}\|_2^2+\Gamma_1\sigma \|{\bf b}\|_1 $
 and $\hat{\mathcal{I}}=supp(\hat{\beta})$.  This is a widely used  SSP in high dimensional statistics. $l_1$-penalty is the convex program that is closest to the optimal but NP-hard $l_0$-penalty. Commonly used values of $\Gamma_1$  include $\Gamma_1=2\sqrt{2\log(p)}$\cite{candes2009near}, $\Gamma_1=\sqrt{8(1+\eta)\log(p-k^*)}$ \cite{ben2010coherence},  $\Gamma_1=10\sqrt{\log(p)}$ \cite{candes2011probabilistic}  etc. Here, $\eta>0$ is a constant. The large sample consistency of $l_1$-penalty is also widely studied. For a fixed  $p$ and $k^*$,  all values of $\Gamma_1$ satisfying $\dfrac{\Gamma_1}{n}\rightarrow 0$ and $\dfrac{\Gamma_1}{n^{\frac{1+c}{2}}} \rightarrow \infty$ as $n \rightarrow \infty$ results in large sample consistency under a set of regularity conditions \cite{zhao2006model}. $c$ depends on these regularity conditions. However, the consistency of $l_1$-penalty as $\sigma^2 \rightarrow 0$ is not discussed in literature to the best of our knowledge. Next we state and prove the sufficient conditions  for the high SNR consistency of $l_1$-penalty.
\begin{thm}\label{l1_penalty_sufficient}
 $l_1$-penalty is high SNR consistent for any matrix signal pair $({\bf X},\beta)$ satisfying the ERC provided that the tuning parameter $\Gamma_1$ satisfies $\underset{\sigma^2 \rightarrow 0}{\lim}\Gamma_1=\infty$ and $\underset{\sigma^2 \rightarrow 0}{\lim}\sigma{\Gamma_1}=0$.
\end{thm}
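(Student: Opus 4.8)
The plan is to show that exact support recovery $\hat{\mathcal{I}}=\mathcal{I}$ occurs with probability approaching one, by combining a \emph{deterministic} support-recovery guarantee for $l_1$-penalty under the ERC with the stochastic order of the noise correlation as $\sigma^2\rightarrow 0$. First I would invoke the standard LASSO support-recovery analysis based on the KKT/subgradient (or primal-dual witness) conditions as in \cite{tropp2006just,cai2011orthogonal}. Under $erc({\bf X},\mathcal{I})<1$, this yields two sufficient conditions for $supp(\hat{\beta})=\mathcal{I}$ with $\gamma=\Gamma_1\sigma$: (i) a \emph{no-false-positive} condition requiring $\gamma$ to dominate the off-support noise correlation, of the form $\gamma\left(1-erc({\bf X},\mathcal{I})\right)\geq c\,\|{\bf X}^T({\bf I}_n-{\bf P}_{\mathcal{I}}){\bf w}\|_\infty$; and (ii) a \emph{no-miss} condition requiring the on-support perturbation $\|\hat{\beta}_{\mathcal{I}}-\beta_{\mathcal{I}}\|_\infty$ to stay below $\beta_{\min}=\underset{j\in\mathcal{I}}{\min}\,|\beta_j|$, where this perturbation is controlled by $\|({\bf X}_{\mathcal{I}}^T{\bf X}_{\mathcal{I}})^{-1}\|_{\infty,\infty}\left(\|{\bf X}_{\mathcal{I}}^T{\bf w}\|_\infty+\gamma\right)$.

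The only probabilistic ingredient needed is the order of the noise correlation. Since the columns of ${\bf X}$ have unit $l_2$ norm and ${\bf w}\sim\mathcal{N}({\bf 0}_n,\sigma^2{\bf I}_n)$, each ${\bf X}_j^T{\bf w}\sim\mathcal{N}(0,\sigma^2)$, so $N:=\sigma^{-1}\|{\bf X}^T{\bf w}\|_\infty$ is the maximum of $p$ jointly Gaussian terms whose law does not depend on $\sigma^2$ and is therefore B.I.P. The same holds for $\sigma^{-1}\|{\bf X}_{\mathcal{I}}^T{\bf w}\|_\infty$ and $\sigma^{-1}\|{\bf X}^T({\bf I}_n-{\bf P}_{\mathcal{I}}){\bf w}\|_\infty$, each being a finite collection of jointly Gaussian variables with $\sigma^2$-free distribution.

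With $\gamma=\Gamma_1\sigma$, condition (i) reduces after dividing by $\sigma$ to $\Gamma_1\left(1-erc({\bf X},\mathcal{I})\right)\geq c\,N$; since $1-erc({\bf X},\mathcal{I})>0$ is fixed, $N$ is B.I.P., and $\Gamma_1\rightarrow\infty$ by hypothesis, this event has probability tending to one. For condition (ii), the on-support perturbation is bounded by $C\left(\sigma N+\Gamma_1\sigma\right)$ with $C$ depending only on ${\bf X}_{\mathcal{I}}$; the first term vanishes because $\sigma\rightarrow 0$ and $N$ is B.I.P., and the second vanishes precisely by the hypothesis $\sigma\Gamma_1\rightarrow 0$, so the perturbation falls below the fixed $\beta_{\min}$ with probability approaching one. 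Intersecting the two high-probability events gives $PE=\P(\hat{\mathcal{I}}\neq\mathcal{I})\rightarrow 0$, which is high SNR consistency in the sense of Theorem \ref{l1_penalty_sufficient}.

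The crux, and the main obstacle, is that the two requirements on $\gamma$ pull in opposite directions: suppressing false positives in (i) demands $\gamma$ large, whereas the LASSO shrinkage bias $\gamma$ enters the on-support perturbation in (ii) and must be kept small to avoid misses. The two hypotheses resolve this tension exactly: $\Gamma_1\rightarrow\infty$ forces $\gamma$ to outgrow the $O(\sigma)$ noise correlation, while $\sigma\Gamma_1\rightarrow 0$ forces $\gamma$ to zero in absolute terms, so the induced bias cannot swamp $\beta_{\min}$. The careful bookkeeping of which quantities are merely B.I.P. versus which actually vanish is where rigor is required. A secondary subtlety is that the deterministic recovery guarantee is stated in the literature with several different explicit constants; I would fix one concrete version (e.g. from \cite{tropp2006just}) and track its thresholds through the limit $\sigma^2\rightarrow 0$.
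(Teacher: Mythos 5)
Your proposal is correct and takes essentially the same route as the paper: the paper likewise invokes Tropp's deterministic ERC-based guarantee [Theorem 8, tropp2006just] to split the analysis into a no-false-positive event, handled by $\Gamma_1\rightarrow\infty$ against a B.I.P.\ scaled noise correlation, and a no-miss event, handled by $\sigma\Gamma_1\rightarrow 0$ against the fixed minimum signal magnitude. The only cosmetic difference is that the paper phrases the no-miss step as thresholding the restricted LS estimate ${\bf X}_{\mathcal{I}}^{\dagger}{\bf y}$ at the bias level $\Gamma_1\sigma\|({\bf X}_{\mathcal{I}}^T{\bf X}_{\mathcal{I}})^{-1}\|_{\infty,\infty}$ and cites a separate threshold-estimator lemma, rather than bounding the LASSO perturbation directly as you do; these are equivalent rearrangements of the same bound.
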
  
\begin{proof}
The proof of Theorem \ref{l1_penalty_sufficient} is based on the following  fundamental result proved in [Theorem 8,\cite{tropp2006just}].
\begin{lemma} \label{l1_penalty_relax}
Let $\mathcal{J}$ be any index set satisfying ERC. If ${\bf y}^{\mathcal{J}}={\bf P}_{\mathcal{J}}{\bf y}$ satisfies $\|{\bf X}^T({\bf y}-{\bf y}^{\mathcal{J}})\|_{\infty}< \sigma \Gamma_1\left(1-erc({\bf X},\mathcal{J})\right)$, then $\hat{\beta}$ satisfies the following. \\
A1). $supp(\hat{\beta}) \subseteq \mathcal{J}$. \\
A2). $\hat{\beta}$ is the unique minimizer of $l_1$-penalty. \\
A3). $\mathcal{T}=\{j : |{\bf b}^{\mathcal{J}}(j)|>\Gamma_1 \sigma\|({\bf X}_{\mathcal{J}}^T{\bf X}_{\mathcal{J}})^{-1}\|_{\infty,\infty}\}\subseteq supp(\hat{\beta})$, where ${\bf b}^{\mathcal{J}}={\bf X}_{\mathcal{J}}^{\dagger}{\bf y}$ is the LS estimate of $\beta_{\mathcal{J}}$.
\end{lemma} 
{ In words, Lemma \ref{l1_penalty_relax} states that if the correlation between the columns in ${\bf X}$ and residual generated by the LS fit using the columns in $\mathcal{J}$ is sufficiently low, then  the support of solution to $l_1$-penalty will be contained in $\mathcal{J}$. Further, $l_1$-penalty does not miss indices that has sufficiently large values in the restricted LS estimate ${\bf b}^{\mathcal{J}}$}. By the hypothesis of Theorem \ref{l1_penalty_sufficient}, the true support $\mathcal{I}$ satisfies $erc({\bf X},\mathcal{I})<1$.   Thus, if the event ${\mathcal{E}}_1=\{\|{\bf X}^T({\bf y}-{\bf y}^{\mathcal{I}})\|_{\infty}<\Gamma_1\sigma \left(1-erc({\bf X},\mathcal{I})\right)\}$ is true, then $supp(\hat{\beta})\subseteq \mathcal{I}$. That is, $l_1$-penalty does not make any false discoveries.  If the event ${\mathcal{E}_2}=\{\forall j : |{\bf b}^{\mathcal{I}}(j)|>\Gamma_1\sigma \|({\bf X}_{\mathcal{I}}^T{\bf X}_{\mathcal{I}})^{-1}\|_{\infty,\infty}\}=\{|\mathcal{T}|=k^*\}$ is also true, then $supp(\hat{\beta})=\mathcal{I}$. Thus $\P(\hat{\mathcal{I}}=\mathcal{I})\geq \P(\mathcal{E}_1\cap \mathcal{E}_2)$. 

We first analyse the probability of the event $\mathcal{E}_1$.
 Note that $({\bf I}_n-{\bf P}_{\mathcal{I}}){\bf X\beta}=({\bf I}_n-{\bf P}_{\mathcal{I}}){\bf X}_{\mathcal{I}}{\beta}_{\mathcal{I}}=0$. Hence $\|{\bf X}^T({\bf I}_n-{\bf P}_{\mathcal{I}}){\bf y}\|_{\infty}=\|{\bf X}^T({\bf I}_n-{\bf P}_{\mathcal{I}}){\bf w}\|_{\infty}$. Further, $\|{\bf X}_j\|_2=1$ and Cauchy Schwartz inequality implies that $\underset{j}{\max}|{\bf X}^T_j({\bf I}_n-{\bf P}_{\mathcal{I}}){\bf w}|\leq \underset{j}{\max}\|{\bf X}_j||_2||({\bf I}_n-{\bf P}_{\mathcal{I}}){\bf w}\|_2=\|({\bf I}_n-{\bf P}_{\mathcal{I}}){\bf w}\|_2 $. Using these inequalities, we can bound $\P(\mathcal{E}_1)$ as
\begin{equation}\label{l1_penalty 1}
\begin{array}{ll}
\P(\mathcal{E}_1)&=\P\left(\underset{j}{\max}|{\bf X}_j^T({\bf I}_n-{\bf P}_{\mathcal{I}}){\bf w}|<\Gamma_1\sigma\left(1-erc(X,\mathcal{I})\right)\right) \\
&\geq \P\large(\|({\bf I}_n-{\bf P}_{\mathcal{I}}){\bf w}\|_2<\Gamma_1\sigma\left(1-erc(X,\mathcal{I})\right)\large) \\
&=\P\left(\dfrac{\|({\bf I}_n-{\bf P}_{\mathcal{I}}){\bf w}\|_2^2}{\sigma^2}<\Gamma_1^2\left(1-erc(X,\mathcal{I})\right)^2\right) 
\end{array}
\end{equation}
Note that $\dfrac{\|({\bf I}_n-{\bf P}_{\mathcal{I}}){\bf w}\|_2^2}{\sigma^2}\sim \chi^2_{n-k^*}$ is a B.I.P R.V with distribution independent of $\sigma^2$.
Hence, if the condition $\underset{\sigma^2 \rightarrow 0}{\lim}{\Gamma_1}=\infty$ in the hypotheses of Theorem \ref{l1_penalty_sufficient} is satisfied, then the lower bound in (\ref{l1_penalty 1}) converges to 1. Hence, $\underset{\sigma^2\rightarrow 0}{\lim}\P(\mathcal{E}_1)=1$.

Next, we analyse $\P(\mathcal{E}_2)$. Since $\mathcal{I}$ is the correct support, it follows that ${\bf b}^{\mathcal{I}}={\bf X}_{\mathcal{I}}^{\dagger}({\bf X}_{\mathcal{I}}\beta_{\mathcal{I}}+{\bf w})=\beta_{\mathcal{I}}+{\bf X}_{\mathcal{I}}^{\dagger}{\bf w}$. Since ${\bf w}\sim \mathcal{N}({\bf 0}_n,\sigma^2{\bf I}_n)$, we have ${\bf b}^{\mathcal{I}}\sim \mathcal{N}(\beta_{\mathcal{I}},\sigma^2({\bf X}_{\mathcal{I}}^T{\bf X}_{\mathcal{I}})^{-1})$. The set  $\mathcal{T}$ in A3) of Lemma \ref{l1_penalty_relax}  can be rewritten as $\mathcal{T}=\{j:|{\bf b}^{\mathcal{I}}(j)|>\sigma c_j \Gamma_1 d_j\}$, where $c_j=\sqrt{\left(({\bf X}_{\mathcal{I}}^T{\bf X}_{\mathcal{I}})^{-1}\right)_{j,j}}$ and $d_j=\dfrac{\|({\bf X}_{\mathcal{I}}^T{\bf X}_{\mathcal{I}})^{-1}\|_{\infty,\infty}}{c_j}$.
The NSC for the high SNR consistency of a threshold based SSP like this is given below.
\begin{lemma}\label{l1penalty_necessary_lemma}
Let ${\bf z}\sim  \mathcal{N}({\bf u},\sigma^2{\bf C})$ and $\mathcal{K}=supp({\bf u})$. Consider the threshold based estimator $\hat{\mathcal{K}}=\{j:|{\bf z}_j|>\sigma \sqrt{{\bf C}_{j,j}}\Gamma \}$  of $\mathcal{K}$. Define the event false discovery $\mathcal{F}=\{\exists j \in \hat{\mathcal{K}} \ and \ j \notin \mathcal{K} \}$ and missed discovery $\mathcal{M}= \{\exists j \notin \hat{\mathcal{K}} \ and \ j \in \mathcal{K} \}$. Then the following statements are true\cite{spl}.  \\
L1). $\underset{\sigma^2 \rightarrow 0}{\lim}\mathbb{P}(\mathcal{F})=0, \text{iff} \ \underset{\sigma^2 \rightarrow 0}{\lim}\Gamma=\infty$. \\
L2). $\underset{\sigma^2 \rightarrow 0}{\lim}\mathbb{P}(\mathcal{M})=0, \text{iff} \ \underset{\sigma^2 \rightarrow 0}{\lim}\sigma\Gamma<\underset{j \in \mathcal{K}}{\min}\dfrac{|{\bf u}_j|}{\sqrt{{\bf C}_{j,j}}}$.
\end{lemma} 
Hence, if $\Gamma_1$ satisfies $\underset{\sigma^2 \rightarrow 0}{\lim}\sigma\Gamma_1=0$, then by L2) of Lemma \ref{l1penalty_necessary_lemma}, all entries in $\mathcal{I}$ will be included in $\mathcal{T}$ at high SNR. Mathematically,  $\underset{\sigma^2 \rightarrow 0}{\lim}\P(\mathcal{E}_2)=\underset{\sigma^2 \rightarrow 0}{\lim}\P(|\mathcal{T}|=k^*)=1$. Since, $\underset{\sigma^2 \rightarrow 0}{\lim}\P(\mathcal{E}_1)=1$ and $\underset{\sigma^2 \rightarrow 0}{\lim}\P(\mathcal{E}_2)=1$, it follows that $\underset{\sigma^2 \rightarrow 0}{\lim}\P(\hat{\mathcal{I}}=\mathcal{I})\geq\underset{\sigma^2 \rightarrow 0}{\lim}\P(\mathcal{E}_1\cap \mathcal{E}_2)=1 $.
\end{proof}
\subsection{On the choice of SNR adaptation in $\Gamma_1$.}
Theorem 4 states that all SNR adaptations on $\Gamma_1$ satisfying $\underset{\sigma^2 \rightarrow 0}{\lim}\Gamma_1=\infty$ and $\underset{\sigma^2 \rightarrow 0}{\lim}\sigma\Gamma_1=0$ results in the high SNR consistency of $l_1$-penalty. However, the choice of SNR adaptation has profound influence on the performance of $l_1$-penalty in the moderate to high SNR range. In this section, we derive convergence rates for  $\P(\mathcal{E}_1)$ and $\P(\mathcal{E}_2)$ discussed in the proof of Theorem \ref{l1_penalty_sufficient}. First consider the event ${\mathcal{E}}_1=\{\|{\bf X}^T({\bf y}-{\bf y}^{\mathcal{I}})\|_{\infty}<\Gamma_1\sigma \left(1-erc({\bf X},\mathcal{I})\right)\}$. Following (\ref{l1_penalty 1}), we have 
\begin{equation}\label{conv1}
\P(\mathcal{E}_1)\geq 1-\P\left(A>\Gamma_1^2\left(1-erc(X,\mathcal{I})\right)^2\right), 
\end{equation}
where $A \sim \chi^2_{n-k^*}$. Let $X\sim \chi^2_k$ and $a^2>k$. Then by Lemma 10 in \cite{tsp}, we have 
\begin{equation}\label{conv2}
\P(X>a^2)\leq \dfrac{\exp(\frac{k}{2})}{k^{\frac{k}{2}}}\exp\left(\dfrac{-1}{2}[a^2-k\log(a^2)]\right).
\end{equation}
Let $b_1=1-erc({\bf X},\mathcal{I})$ and $b_2=\frac{\exp(\frac{n-k^*}{2})}{({n-k^*)}^{\frac{n-k^*}{2}}}$.  Applying (\ref{conv2}) in (\ref{conv1}) gives, 
\begin{equation}\label{conv3}
\P(\mathcal{E}_1)\geq 1-b_2\exp\left(\dfrac{-1}{2}[\Gamma_1^2b_1^2-(n-k^*)\log(\Gamma_1^2b_1^2)]\right).
\end{equation}
The R.H.S of inequality in (\ref{conv3}) is independent of $\sigma^2$ for the SNR independent $\Gamma_1$ discussed in literature. Further, the inequality (\ref{conv3}) converges  to one faster as the growth of $\Gamma_1$ increases. Let $\Gamma_1=\dfrac{1}{\sigma^{\alpha}}$  be the SNR adaptation in $\Gamma_1$. This adaptation satisfies Theorem \ref{l1_penalty_sufficient} if $0<\alpha<1$. The convergence rate of $\P(\mathcal{E}_1)$  will be faster for $\alpha_1$ than that of $\alpha_2$ if $\alpha_1>\alpha_2$. 

Next  consider the event ${\mathcal{E}_2}=\{\forall j : |{\bf b}^{\mathcal{I}}(j)|>\Gamma_1\sigma c_jd_j\}$, where ${\bf b}^{\mathcal{I}}={\bf X}_{\mathcal{I}}^{\dagger}{\bf y}$, $c_j= \sqrt{\left(({\bf X}_{\mathcal{I}}^T{\bf X}_{\mathcal{I}})^{-1}\right)_{j,j}}$ and $d_j=\dfrac{\|({\bf X}_{\mathcal{I}}^T{\bf X}_{\mathcal{I}})^{-1}\|_{\infty,\infty}}{c_j}$ as used in the proof of Theorem \ref{l1_penalty_sufficient}.  The following set of inequalities follows directly from union bound and the ${\bf b}^{\mathcal{I}}(j)\sim\mathcal{N}(\beta_j,\sigma^2c_j^2)$ distribution of ${\bf b}^{\mathcal{I}}(j)$.
\begin{equation}\label{conv4}
\begin{array}{ll}
\P(\mathcal{E}_2)&=\P(\underset{j\in [k^*]}{\cap}|{\bf b }^{\mathcal{I}}(j)|>\sigma c_j\Gamma_1 d_j)\\
&\geq 1-\sum\limits_{j=1}^{k^*}\P(|{\bf b }^{\mathcal{I}}(j)|<\sigma c_j\Gamma_1 d_j)\\
&=1-\sum\limits_{j=1}^{k^*} \left[Q(-\Gamma_1d_j-\dfrac{\beta_j}{\sigma c_j})-Q(\Gamma_1d_j-\dfrac{\beta_j}{\sigma c_j})\right].
\end{array}
\end{equation}
Applying  $Q(x)>0,\forall x$, gives
\begin{equation}\label{conv5}
\P(\mathcal{E}_2)\geq 1-\sum\limits_{j=1}^{k^*} Q(-\Gamma_1d_j-\dfrac{\beta_j}{\sigma c_j}).
\end{equation}
For the ease of exposition assume that $\beta_j<0,\forall j\in \mathcal{I}$.
Since, $\underset{\sigma^2 \rightarrow 0}{\lim}\sigma\Gamma_1=0$, we have $\underset{\sigma^2 \rightarrow 0}{\lim}(-\Gamma_1d_j-\dfrac{\beta_j}{\sigma c_j})=\infty$. Hence, $\exists \sigma^2_1>0$ such that $-\Gamma_1d_j-\dfrac{\beta_j}{\sigma c_j}>2,\forall j$. Using the bound $Q(x)\leq \dfrac{1}{2}{\exp\left(-\dfrac{x^2}{2}\right)},\forall x>2$, we have
\begin{equation}\label{conv6}
\P(\mathcal{E}_2)\geq 1-\dfrac{1}{2}\sum\limits_{j=1}^{k^*} {\exp\left(\dfrac{-\left(-\Gamma_1d_j-\dfrac{\beta_j}{\sigma c_j}\right)^2}{2}\right)},
\end{equation}
 $\forall \sigma^2<\sigma^2_1$. Unlike the bound (\ref{conv3}) on $\P(\mathcal{E}_1)$, the R.H.S in (\ref{conv6}) increases with the signal strength $|\beta_j|$. Further, the convergence rate  of $\P(\mathcal{E}_2)$ decreases with the increase in the rate at which  $\Gamma_1$ increase to $\infty$. For $\Gamma_1=\dfrac{1}{\sigma^{\alpha}}$,  the convergence rate of $\P(\mathcal{E}_2)$ decreases with increase in $\alpha$. 
 
 We now make the following observations on the choice of SNR adaptations based on (\ref{conv3}) and (\ref{conv6}). Consider SNR adaptations of the form $\Gamma_1=\dfrac{1}{\sigma^{\alpha}}$. When signal strength  is low, i.e., $\beta_j$ is low for some $j \in \mathcal{I}$, it is reasonable to choose slow rates for $\Gamma_1$ like $\alpha=0.1$. This will ensure the increase of $\P(\mathcal{E}_1)$ to one at a descent rate without causing  significant decrease in the convergence rates of $\P(\mathcal{E}_2)$. However, when the signal strength is high, i.e., $\beta_j$ is high for all $j \in \mathcal{I}$, $\P(\mathcal{E}_2)$ will  be  close to one for moderate values of SNR for most values of $0<\alpha<1$. Then the gain in the convergence rate of $\P(\mathcal{E}_1)$ by allowing a larger value of $\alpha$ will overpower the slight decrease in the convergence rate in $\P(\mathcal{E}_2)$. Hence, when signal strength is high, one can choose faster SNR adaptations like $\alpha=0.5$. 
\subsection{ High SNR consistency of $l_1$-penalty: Necessary conditions}
In the following, we establish the necessity of SNR adaptations detailed in Theorem \ref{l1_penalty_sufficient} for high SNR consistency. 
\begin{thm}\label{l1penalty_necessary}
Suppose  $\exists \mathcal{J}\supset  \mathcal{I}$ such that the matrix support pair $({\bf X},\mathcal{J})$   satisfy ERC. Then $l_1$-penalty is high SNR consistent only if $\underset{\sigma^2 \rightarrow 0}{\lim}\Gamma_1=\infty$.
\end{thm}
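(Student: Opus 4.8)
\emph{Plan.} I would prove the contrapositive: assuming that $\Gamma_1$ does \emph{not} diverge, i.e. $\liminf_{\sigma^2\to 0}\Gamma_1<\infty$, I exhibit a single spurious column whose forced inclusion makes $\hat{\mathcal{I}}\neq\mathcal{I}$ with probability bounded away from zero. The hypothesis that some $\mathcal{J}\supset\mathcal{I}$ satisfies ERC is used only to extract the structure I need: since $erc({\bf X},\mathcal{J})<1$ presupposes that ${\bf X}_{\mathcal{J}}^{\dagger}$ exists, ${\bf X}_{\mathcal{J}}$ has full column rank, hence so does its submatrix ${\bf X}_{\mathcal{I}}$, and for a fixed $i\in\mathcal{J}\setminus\mathcal{I}$ the column ${\bf X}_i$ is linearly independent of $col({\bf X}_{\mathcal{I}})$, so that $({\bf I}_n-{\bf P}_{\mathcal{I}}){\bf X}_i\neq{\bf 0}_n$.

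The core step is to translate the event $\{\hat{\mathcal{I}}=\mathcal{I}\}$ into a statement about a single Gaussian functional of the noise. On this event $\hat{\beta}$ is supported exactly on $\mathcal{I}$, so the stationarity (KKT) conditions for $l_1$-penalty read ${\bf X}_{\mathcal{I}}^T({\bf y}-{\bf X}_{\mathcal{I}}\hat{\beta}_{\mathcal{I}})=\sigma\Gamma_1{\bf s}$ with ${\bf s}=\text{sign}(\hat{\beta}_{\mathcal{I}})$, together with the dual feasibility bound $|{\bf X}_j^T({\bf y}-{\bf X}_{\mathcal{I}}\hat{\beta}_{\mathcal{I}})|\leq\sigma\Gamma_1$ for every $j\notin\mathcal{I}$. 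Since ${\bf X}_{\mathcal{I}}$ is full rank, solving the first relation and using $({\bf I}_n-{\bf P}_{\mathcal{I}}){\bf y}=({\bf I}_n-{\bf P}_{\mathcal{I}}){\bf w}$ gives the residual
\[
{\bf r}={\bf y}-{\bf X}_{\mathcal{I}}\hat{\beta}_{\mathcal{I}}=({\bf I}_n-{\bf P}_{\mathcal{I}}){\bf w}+\sigma\Gamma_1{\bf X}_{\mathcal{I}}({\bf X}_{\mathcal{I}}^T{\bf X}_{\mathcal{I}})^{-1}{\bf s}.
\]
Applying dual feasibility at the chosen index $i$ and noting that ${\bf X}_i^T{\bf X}_{\mathcal{I}}({\bf X}_{\mathcal{I}}^T{\bf X}_{\mathcal{I}})^{-1}{\bf s}=({\bf X}_{\mathcal{I}}^{\dagger}{\bf X}_i)^T{\bf s}$ has magnitude at most $\rho_i:=\|{\bf X}_{\mathcal{I}}^{\dagger}{\bf X}_i\|_1$ (because $\|{\bf s}\|_\infty\leq 1$), the triangle inequality yields the deterministic implication
\[
\{\hat{\mathcal{I}}=\mathcal{I}\}\subseteq\Big\{\,|{\bf X}_i^T({\bf I}_n-{\bf P}_{\mathcal{I}}){\bf w}|\leq\sigma\Gamma_1(1+\rho_i)\,\Big\}.
\]

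Passing to complements gives a clean lower bound on the error probability. Because ${\bf X}_i^T({\bf I}_n-{\bf P}_{\mathcal{I}}){\bf w}\sim\mathcal{N}(0,\sigma^2 v_i^2)$ with $v_i=\|({\bf I}_n-{\bf P}_{\mathcal{I}}){\bf X}_i\|_2>0$, the factor $\sigma$ cancels and
\[
PE\geq\P\big(|{\bf X}_i^T({\bf I}_n-{\bf P}_{\mathcal{I}}){\bf w}|>\sigma\Gamma_1(1+\rho_i)\big)=2Q\!\left(\frac{\Gamma_1(1+\rho_i)}{v_i}\right),
\]
an expression that depends on $\sigma^2$ only through $\Gamma_1$. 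Since $Q$ is strictly decreasing and vanishes only at $+\infty$, this bound converges to zero exactly when $\Gamma_1\to\infty$; if instead $\liminf_{\sigma^2\to0}\Gamma_1\leq M<\infty$, then along the corresponding subsequence $PE\geq 2Q(M(1+\rho_i)/v_i)>0$, so $PE\not\to0$ and $l_1$-penalty fails to be high SNR consistent.

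The part I expect to require the most care is the KKT reduction that isolates a single scalar Gaussian, specifically justifying ${\bf X}_{\mathcal{I}}$ invertible and $({\bf I}_n-{\bf P}_{\mathcal{I}}){\bf X}_i\neq{\bf 0}_n$ from the ERC hypothesis on $\mathcal{J}$, and disposing of the random sign vector ${\bf s}$ uniformly (which is painless, as only $\|{\bf s}\|_\infty\leq1$ is used). A key reason I would route the argument through dual feasibility rather than through the inclusion guarantee A3 of Lemma \ref{l1_penalty_relax} is robustness in the degenerate regime $\Gamma_1\to0$: a bound formed as a product of a false-inclusion probability and the precondition probability of Lemma \ref{l1_penalty_relax} would collapse there, since the precondition $\|{\bf X}^T({\bf y}-{\bf y}^{\mathcal{J}})\|_\infty<\sigma\Gamma_1(1-erc({\bf X},\mathcal{J}))$ fails with probability approaching one, whereas the bound $2Q(\Gamma_1(1+\rho_i)/v_i)$ is monotone in $\Gamma_1$ and stays bounded away from zero on every bounded subsequence.
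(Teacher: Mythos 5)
Your proof is correct, and it takes a genuinely different route from the paper's. The paper argues through over-selection: it applies Lemma \ref{l1_penalty_relax} to the superset $\mathcal{J}$, defines $\mathcal{E}_1$ as the correlation precondition of that lemma and $\mathcal{E}_2$ as the event that every coordinate of ${\bf b}^{\mathcal{J}}$ survives the threshold in A3), observes that on $\mathcal{E}_1\cap\mathcal{E}_2$ the solution support equals $\mathcal{J}\supsetneq\mathcal{I}$, and shows both probabilities remain bounded away from zero when $\Gamma_1$ stays bounded --- via a $\chi^2_{n-|\mathcal{J}|}$ bound for $\mathcal{E}_1$, Lemma \ref{l1penalty_necessary_lemma} for $\mathcal{E}_2$, and the independence of the two events (orthogonality of $({\bf I}_n-{\bf P}_{\mathcal{J}}){\bf w}$ and ${\bf P}_{\mathcal{J}}{\bf w}$). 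You instead stay on the success event $\{\hat{\mathcal{I}}=\mathcal{I}\}$ and let the KKT stationarity and dual-feasibility conditions force the fixed scalar Gaussian ${\bf X}_i^T({\bf I}_n-{\bf P}_{\mathcal{I}}){\bf w}$ to have magnitude at most $\sigma\Gamma_1(1+\rho_i)$, giving $PE\geq 2Q\bigl(\Gamma_1(1+\rho_i)/v_i\bigr)$ outright. Your route buys three things: it is more elementary and self-contained (no appeal to Tropp's lemma or to event independence); it uses strictly less than the stated hypothesis (only full column rank of ${\bf X}_{\mathcal{J}}$, hence ${\bf X}_{\mathcal{I}}$ full rank and $({\bf I}_n-{\bf P}_{\mathcal{I}}){\bf X}_i\neq{\bf 0}_n$, rather than ERC on $\mathcal{J}$); and, as you correctly point out, it handles all non-divergent behaviours of $\Gamma_1$ uniformly --- the paper's lower bound $\P\bigl(A<\Gamma_1^2(1-erc({\bf X},\mathcal{J}))^2\bigr)$ on $\P(\mathcal{E}_1)$ collapses when $\Gamma_1\to 0$, so the paper's argument as written only covers subsequences along which $\Gamma_1$ is bounded away from zero, whereas your $Q$-function bound tends to one there. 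Your bound is also quantitative at every finite SNR, mirroring the explicit $2Q(\sqrt{\Gamma_0})$ bound the paper gives for $l_0$-penalty in the remark following Theorem \ref{sparknecessity1}. What the paper's route buys in exchange is an explicit picture of the failure mode (the support inflates exactly to $\mathcal{J}$) and reuse of lemmas already required for the sufficiency proof. Two minor points of care in yours, neither affecting correctness: on a subsequence where $\Gamma_1$ is bounded by $M$ you should conclude $PE\geq 2Q\bigl((M+\epsilon)(1+\rho_i)/v_i\bigr)$ for arbitrary $\epsilon>0$ (or invoke continuity of $Q$) rather than substituting $M$ itself, and the KKT identity should be stated for whichever minimizer realises $\supp(\hat{\beta})=\mathcal{I}$, since uniqueness of the LASSO solution is not guaranteed a priori.
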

\begin{proof}
Let $\mathcal{J}\supset \mathcal{I}$ be an index set satisfying ERC. Define the events ${\mathcal{E}}_1:\{\|{\bf X}^T({\bf y}-{\bf y}^{\mathcal{J}})\|_{\infty}<\Gamma_1\sigma \left(1-erc({\bf X},\mathcal{J})\right)\}$ and $\mathcal{E}_2:\{|\mathcal{T}|=|\mathcal{J}|\}$, where $\mathcal{T}=\{j : |{\bf b}^{\mathcal{J}}(j)|>c_j\sigma\Gamma_1 d_j\}$, $c_j= \sqrt{\left(({\bf X}_{\mathcal{J}}^T{\bf X}_{\mathcal{J}})^{-1}\right)_{j,j}}$ and $d_j=\dfrac{\|({\bf X}_{\mathcal{J}}^T{\bf X}_{\mathcal{J}})^{-1}\|_{\infty,\infty}}{c_j}$. ${\bf y}^{\mathcal{J}}={\bf P}_{\mathcal{J}}{\bf y}$ and ${\bf b}^{\mathcal{J}}={\bf X}_{\mathcal{J}}^{\dagger}{\bf y}$ are the same as in Lemma \ref{l1_penalty_relax}. If both these events are true, then by Lemma \ref{l1_penalty_relax}, $\hat{\mathcal{I}}=\supp(\hat{\beta})=\mathcal{J}\supset \mathcal{I}$. Hence, $PE=\P(\hat{\mathcal{I}}\neq \mathcal{I})\geq \P(\mathcal{E}_1\cap \mathcal{E}_2)$. Since $\mathcal{I}\subset \mathcal{J}$,  ${\bf y}-{\bf y}^{\mathcal{J}}=({\bf I}_n-{\bf P}_{\mathcal{J}}){\bf y}=({\bf I}_n-{\bf P}_{\mathcal{J}}){\bf w}$. Replacing $\mathcal{I}$ with $\mathcal{J}$ in (\ref{l1_penalty 1}), we have 
\begin{equation}
\begin{array}{ll}
\P(\mathcal{E}_1)&=\P\left(\underset{j}{\max}|{\bf X}_j^T({\bf I}_n-{\bf P}_{\mathcal{J}}){\bf w}|<\sigma\Gamma_1(1-erc({\bf X},\mathcal{J}))\right)\\ &\geq \P\left(A<\Gamma_1^2(1-erc({\bf X},\mathcal{J}))^2\right),
\end{array}
\end{equation}
where $A=\dfrac{\|({\bf I}_n-{\bf P}_{\mathcal{J}}){\bf w}\|_2^2}{\sigma^2}\sim \chi^2_{n-|\mathcal{J}|}$ is a R.V with distribution independent of $\sigma^2$ and support in $(0,\infty)$. Hence,  as long as $\underset{\sigma^2 \rightarrow 0}{\lim}\Gamma_1>0$, $\underset{\sigma^2 \rightarrow 0}{\lim}\P\left(A<\Gamma_1^2(1-erc({\bf X},\mathcal{J}))^2\right)>0$, which in turn imply that $\underset{\sigma^2 \rightarrow 0}{\lim}\P(\mathcal{E}_1)>0$. 

We next consider $\P(\mathcal{E}_2)$. Since $\mathcal{I}\subset\mathcal{J}$, we have ${\bf X}_{\mathcal{I}}\beta_{\mathcal{I}}={\bf X}_{\mathcal{J}}\beta_{\mathcal{J}}$ with appropriate zero entries in $\beta_{\mathcal{J}}$. Thus, ${\bf b}^{\mathcal{J}} \sim \mathcal{N}(\beta_{\mathcal{J}},\sigma^2({\bf X}_{\mathcal{J}}^T{\bf X}_{\mathcal{J}})^{-1})$. Hence, $|\mathcal{T}|=|\mathcal{J}|$ iff a false discovery is made in the thresholding procedure which gives $\mathcal{T}$. From Lemma \ref{l1penalty_necessary_lemma}, it follows that $\underset{\sigma^2 \rightarrow 0}{\lim}\P(\mathcal{E}_2)=\underset{\sigma^2 \rightarrow 0}{\lim}\P(|\mathcal{T}|=|\mathcal{J}|)>0$ as long as $\underset{\sigma^2 \rightarrow 0}{\lim}\Gamma_1<\infty$. 

A careful analysis of the events $\mathcal{E}_1$ and $\mathcal{E}_2$ reveals that $\mathcal{E}_1$ depends only on the component $({\bf I}_n-{\bf P}_{\mathcal{J}}){\bf w}$ of ${\bf w}$ and $\mathcal{E}_2$ depends only on the component ${\bf P}_{\mathcal{J}}{\bf w}$. Since, these two components are orthogonal and ${\bf w}$ is Gaussian, it follows that $\mathcal{E}_1$ and $\mathcal{E}_2$ are mutually independent, i.e., $\P(\mathcal{E}_1 \cap \mathcal{E}_2)=\P(\mathcal{E}_1)\P(\mathcal{E}_2)$.   Since,  $\underset{\sigma^2 \rightarrow 0}{\lim}\P(\mathcal{E}_1)>0$ and $\underset{\sigma^2 \rightarrow 0}{\lim}\P(\mathcal{E}_2)>0$, it follows that $\underset{\sigma^2 \rightarrow 0}{\lim}PE \geq \underset{\sigma^2 \rightarrow 0}{\lim}\P(\mathcal{E}_1\cap \mathcal{E}_1)=\underset{\sigma^2 \rightarrow 0}{\lim}\P(\mathcal{E}_1)\underset{\sigma^2 \rightarrow 0}{\lim}\P(\mathcal{E}_2)>0$, unless $\underset{\sigma^2 \rightarrow 0}{\lim}\Gamma_1=\infty$.
\end{proof}
It must be mentioned that  an index set $\mathcal{J}\supset\mathcal{I}$ satisfying ERC need not exist in all situations where $\mathcal{I}$ satisfy ERC. In that sense, Theorem \ref{l1penalty_necessary} is less general than Theorem \ref{l1_penalty_sufficient}. Nevertheless, Theorem \ref{l1penalty_necessary} is applicable in many practical settings. For example, if ${\bf X} \in \mathbb{R}^{n \times p}$  is orthonormal, then ${\bf X}$ satisfies ERC for all possible index sets $\mathcal{J} \subseteq [p]$. Similarly, if ${\bf X}$ satisfies the MIC of order $j$, i.e., $\mu_{\bf X} \leq \dfrac{1}{2j-1}$ and $j>k^*$, then ${\bf X}$ satisfies ERC for all $j>k^*$ sized index sets.  In both these situations, an index set (in fact many) $\mathcal{J}\supset \mathcal{I}$ satisfying ERC exists and $l_1$-penalty will be inconsistent without the required SNR adaptation. Theorem \ref{l1penalty_necessary} proves that the values of $\Gamma_1$ discussed in literature makes $l_1$-penalty high SNR inconsistent even in the simple case of orthonormal design matrix. Next we establish the necessity of $\underset{\sigma^2 \rightarrow 0}{\lim}\sigma\Gamma_1=0$ for high SNR consistency. 
\begin{thm}\label{l1_penalty_necessity2}
Suppose that the matrix  support pair $({\bf X},\mathcal{I})$ satisfy ERC and $k^*\geq 1$. Then, $l_1$-penalty  will be high SNR consistent only if $\underset{\sigma^2 \rightarrow 0}{\lim}\sigma\Gamma_1=0$. 
\end{thm}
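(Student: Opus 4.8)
The plan is to mirror the necessity argument of Theorem \ref{sparknecessity2}: I would show that when $\sigma\Gamma_1$ fails to vanish, the $l_1$-penalty drops the smallest nonzero coefficient of $\beta$ with probability bounded away from zero, so that $PE\not\rightarrow 0$. As in Theorem \ref{sparknecessity2}, the conclusion is drawn at the level of the whole regression class. I would first assume $\underset{\sigma^2\rightarrow 0}{\lim}\sigma\Gamma_1=\lambda_1>0$ and show that $PE$ stays bounded away from zero for every signal whose minimum coefficient magnitude is small enough; then, since $\Gamma_1$ must be fixed without knowledge of $\beta$ and the class $\mathcal{C}_3$ contains signals with arbitrarily small $\underset{j\in\mathcal{I}}{\min}|\beta_j|$, consistency over $\mathcal{C}_3$ forces $\lambda_1=0$, i.e. $\underset{\sigma^2\rightarrow 0}{\lim}\sigma\Gamma_1=0$.

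The main tool is the restricted least squares estimate ${\bf b}^{\mathcal{I}}={\bf X}_{\mathcal{I}}^{\dagger}{\bf y}\sim\mathcal{N}(\beta_{\mathcal{I}},\sigma^2({\bf X}_{\mathcal{I}}^T{\bf X}_{\mathcal{I}})^{-1})$ already used in the proof of Theorem \ref{l1_penalty_sufficient}, together with the shrinkage/thresholding structure of the $l_1$-penalty minimizer on an ERC set encoded in Lemma \ref{l1_penalty_relax} and its underlying result [Theorem 8, \cite{tropp2006just}]. Conditioned on the correlation event $\mathcal{E}_1$ that keeps $supp(\hat\beta)\subseteq\mathcal{I}$, the solution on $\mathcal{I}$ is a shrinkage of ${\bf b}^{\mathcal{I}}$ by an amount of order $\sigma\Gamma_1\|({\bf X}_{\mathcal{I}}^T{\bf X}_{\mathcal{I}})^{-1}\|_{\infty,\infty}$, so a coordinate whose restricted LS value $|{\bf b}^{\mathcal{I}}(j)|$ fails to exceed this threshold is driven to zero, producing a missed detection $\hat{\mathcal{I}}\neq\mathcal{I}$. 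I would then invoke the necessity half L2) of Lemma \ref{l1penalty_necessary_lemma}, applied to the Gaussian vector ${\bf b}^{\mathcal{I}}$ with the (index-independent) threshold $\sigma\Gamma_1\|({\bf X}_{\mathcal{I}}^T{\bf X}_{\mathcal{I}})^{-1}\|_{\infty,\infty}$: whenever $\underset{\sigma^2\rightarrow 0}{\lim}\sigma\Gamma_1\geq \underset{j\in\mathcal{I}}{\min}|\beta_j|/\|({\bf X}_{\mathcal{I}}^T{\bf X}_{\mathcal{I}})^{-1}\|_{\infty,\infty}$, the missed-detection probability does not converge to zero. Combining this with $\underset{\sigma^2\rightarrow 0}{\lim}\P(\mathcal{E}_1)=1$ (which holds whenever $\Gamma_1\rightarrow\infty$, itself forced by Theorem \ref{l1penalty_necessary}) gives a lower bound on $PE$ that stays strictly positive.

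To make the vanishing-noise limit rigorous I would use the same kind of reasoning as in Lemma \ref{chi2convergence}: writing ${\bf b}^{\mathcal{I}}(j)=\beta_j+[{\bf X}_{\mathcal{I}}^{\dagger}{\bf w}](j)$, the noise term has standard deviation $\sigma c_j\rightarrow 0$, so ${\bf b}^{\mathcal{I}}(j)\overset{P}{\rightarrow}\beta_j$, while the threshold tends to $\lambda_1\|({\bf X}_{\mathcal{I}}^T{\bf X}_{\mathcal{I}})^{-1}\|_{\infty,\infty}$. For a signal with $\underset{j\in\mathcal{I}}{\min}|\beta_j|<\lambda_1\|({\bf X}_{\mathcal{I}}^T{\bf X}_{\mathcal{I}})^{-1}\|_{\infty,\infty}$, the limiting LS value at the minimizing coordinate lies strictly below the limiting threshold, so that coordinate is dropped with probability tending to one, giving $\underset{\sigma^2\rightarrow 0}{\lim}PE=1$ for that signal. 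The case $\lambda_1=\infty$, and the case where $\sigma\Gamma_1$ merely fails to converge, are handled by passing to a subsequence along which $\sigma\Gamma_1\rightarrow\lambda_1\in(0,\infty]$.

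The hard part is the reverse inclusion used in the second paragraph, namely that a restricted LS coordinate below the shrinkage threshold is genuinely zeroed by the $l_1$-penalty. Lemma \ref{l1_penalty_relax}(A3) supplies only the forward inclusion $\mathcal{T}\subseteq supp(\hat\beta)$, which certifies that large coordinates survive; for a necessary condition I instead need the complementary statement $supp(\hat\beta)\subseteq \mathcal{I}\setminus\{j\}$ on the missed-detection event. I would obtain this from the full shrinkage bound $\|\hat\beta_{\mathcal{I}}-{\bf b}^{\mathcal{I}}\|_{\infty}\leq \sigma\Gamma_1\|({\bf X}_{\mathcal{I}}^T{\bf X}_{\mathcal{I}})^{-1}\|_{\infty,\infty}$ of [Theorem 8, \cite{tropp2006just}] together with the KKT sign-consistency of the minimizer on $\mathcal{I}$. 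Some bookkeeping is also needed to align this single, index-independent LASSO threshold with the per-coordinate threshold $\sigma\sqrt{{\bf C}_{j,j}}\Gamma$ of Lemma \ref{l1penalty_necessary_lemma}, exactly as was done through the constants $c_j$ and $d_j$ in the proof of Theorem \ref{l1_penalty_sufficient}.
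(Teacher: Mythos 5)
Your overall strategy (force a missed detection when $\sigma\Gamma_1\not\rightarrow 0$, then exploit the fact that the regression class contains signals with arbitrarily small $\underset{j\in\mathcal{I}}{\min}|\beta_j|$, with a subsequence argument for non-convergent $\sigma\Gamma_1$) matches the spirit of the paper's proof, and your class-level reasoning is sound. The genuine gap is exactly the step you flagged as the hard part, and the fix you propose does not close it. The reverse inclusion you need --- that a coordinate with $|{\bf b}^{\mathcal{I}}(j)|$ below the shrinkage level $\sigma\Gamma_1\|({\bf X}_{\mathcal{I}}^T{\bf X}_{\mathcal{I}})^{-1}\|_{\infty,\infty}$ is necessarily zeroed by $l_1$-penalty --- is false for general ERC pairs, and neither of your tools can deliver it. The sup-norm bound $\|\hat{\beta}_{\mathcal{I}}-{\bf b}^{\mathcal{I}}\|_{\infty}\leq \sigma\Gamma_1\|({\bf X}_{\mathcal{I}}^T{\bf X}_{\mathcal{I}})^{-1}\|_{\infty,\infty}$ of [Theorem 8, \cite{tropp2006just}] only says the minimizer stays close to the restricted LS estimate; it is perfectly compatible with $\hat{\beta}_j$ being a small non-zero number. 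The KKT stationarity condition on the event $supp(\hat{\beta})\subseteq\mathcal{I}$ reads $\hat{\beta}_{\mathcal{I}}={\bf b}^{\mathcal{I}}-\sigma\Gamma_1({\bf X}_{\mathcal{I}}^T{\bf X}_{\mathcal{I}})^{-1}{\bf s}$ with ${\bf s}$ a subgradient of the $l_1$ norm, and the correction $[({\bf X}_{\mathcal{I}}^T{\bf X}_{\mathcal{I}})^{-1}{\bf s}]_j$ can have either sign when $({\bf X}_{\mathcal{I}}^T{\bf X}_{\mathcal{I}})^{-1}$ is not diagonally dominant (ERC constrains the columns \emph{outside} $\mathcal{I}$, not the conditioning inside $\mathcal{I}$), so it can push a small coordinate away from zero rather than kill it; only in the orthonormal case does "below threshold" imply "zeroed." For the same reason, L2) of Lemma \ref{l1penalty_necessary_lemma} cannot be transferred to $supp(\hat{\beta})$: it governs the pure thresholding set $\mathcal{T}$, and A3) of Lemma \ref{l1_penalty_relax} gives only $\mathcal{T}\subseteq supp(\hat{\beta})$, so a missed discovery in $\mathcal{T}$ says nothing about a missed discovery in $supp(\hat{\beta})$. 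As written, your argument yields no lower bound on $PE$.

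The paper's proof sidesteps the issue by never reasoning about which coordinates inside $\mathcal{I}$ survive. It applies the screening statement A1) of Lemma \ref{l1_penalty_relax} to a \emph{strict subset} $\mathcal{J}\subset\mathcal{I}$ (which inherits ERC from $\mathcal{I}$): on the event $\mathcal{E}=\{\|{\bf X}^T({\bf I}_n-{\bf P}_{\mathcal{J}}){\bf y}\|_{\infty}<\sigma\Gamma_1(1-erc({\bf X},\mathcal{J}))\}$ one has $supp(\hat{\beta})\subseteq\mathcal{J}\subsetneq\mathcal{I}$, which is automatically an error, so $PE\geq\P(\mathcal{E})$. By Cauchy--Schwartz and the unit norm of the columns, $\P(\mathcal{E})\geq\P\left(\sigma^2A<\sigma^2\Gamma_1^2(1-erc({\bf X},\mathcal{J}))^2\right)$, where $\sigma^2A=\|({\bf I}_n-{\bf P}_{\mathcal{J}}){\bf y}\|_2^2\overset{P}{\rightarrow}\lambda=\|({\bf I}_n-{\bf P}_{\mathcal{J}}){\bf X}\beta\|_2^2>0$ by Lemma \ref{chi2convergence}; hence if $\underset{\sigma^2\rightarrow 0}{\lim}\sigma^2\Gamma_1^2(1-erc({\bf X},\mathcal{J}))^2>\lambda$ then $PE\rightarrow 1$, and since $\lambda$ is unknown and can be arbitrarily small over the class, $\underset{\sigma^2\rightarrow 0}{\lim}\sigma\Gamma_1=0$ is forced, exactly as in Theorem \ref{sparknecessity2}. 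Only the forward screening implication is used, so no KKT or sign analysis is needed. If you want to keep your "smallest coefficient gets dropped" picture, the rigorous version is to take $\mathcal{J}=\mathcal{I}\setminus\{j\}$ for the minimizing coordinate and run this argument.
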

\begin{proof}
Let $\mathcal{J}$ be any index set satisfying $\mathcal{J} \subset \mathcal{I}$. Since, $\mathcal{I}$ satisfy ERC, $\mathcal{J}$ will also satisfy ERC. Consider the event $\mathcal{E}:\{\|{\bf X}^T({\bf y}-{\bf y}^{\mathcal{J}})\|_{\infty}<\Gamma_1\sigma \left(1-erc({\bf X},\mathcal{J})\right)\}$, where ${\bf y}^{\mathcal{J}}={\bf P}_{\mathcal{J}}{\bf y}$. If $\mathcal{E}$ is true, then by Lemma \ref{l1_penalty_relax},  $\hat{\mathcal{I}}=supp(\hat{\beta})\subseteq \mathcal{J}\subset\mathcal{I}$. Thus, $PE\geq \P(\mathcal{E})$. The following bound on $\P(\mathcal{E})$ follows from Cauchy Schwartz inequality and the unit $l_2$ norm of ${\bf X}_j$.
\begin{equation}\label{l111}
\begin{array}{ll}
\P(\mathcal{E})&=\P\left(\underset{j}{\max}|{\bf X}_j^T ({\bf I}_n-{\bf P}_{\mathcal{J}}){\bf y}|<\sigma\Gamma_1 (1-erc({\bf X},\mathcal{J})\right) \\
&\geq P\left(\|{\bf I}_n-{\bf P}_{\mathcal{J}}){\bf y}\|_2<\sigma\Gamma_1 (1-erc({\bf X},\mathcal{J})\right) \\ &= \P\left(\sigma^2 A<\sigma^2 \Gamma_1^2(1-erc({\bf X},\mathcal{J}))^2\right),
\end{array}
\end{equation}
where $A=\dfrac{\|({\bf I}_n-{\bf P}_{\mathcal{J}}){\bf y}\|_2^2}{\sigma^2}\sim \chi^2_{n-|\mathcal{J}|}(\dfrac{\lambda}{\sigma^2})$ and $\lambda=\|({\bf I}_n-{\bf P}_{\mathcal{J}}){\bf X \beta}\|_2^2>0$. By Lemma \ref{chi2convergence}, $\sigma^2A \overset{P}{\rightarrow} \lambda$ as $\sigma^2 \rightarrow 0$. Hence, if $\underset{\sigma^2\rightarrow 0}{\lim}\sigma^2 \Gamma_1^2(1-erc({\bf X},\mathcal{J}))^2>\lambda$, then as shown in the proof of Theorem  \ref{sparknecessity2}, $\underset{\sigma^2\rightarrow 0}{\lim}\P\left(\sigma^2 A<\sigma^2 \Gamma_1^2(1-erc({\bf X},\mathcal{J}))^2\right)=1$. However, $\lambda$ is unknown. Thus, to satisfy  $\underset{\sigma^2\rightarrow 0}{\lim}\sigma^2 \Gamma_1^2(1-erc({\bf X},\mathcal{J}))^2<\lambda$, it is necessary that $\underset{\sigma^2 \rightarrow 0}{\lim} \sigma^2\Gamma_1^2=0$ which is equivalent to $\underset{\sigma^2 \rightarrow 0}{\lim} \sigma\Gamma_1=0$.
\end{proof}
{A widely used formulation of $l_1$-penalty is given by $\hat{\beta}=\underset{{\bf b} \in \mathbb{R}^p}{\arg\min}\dfrac{1}{2}\|{\bf y}-{\bf X}{\bf b}\|_2^2+\lambda\|{\bf b}\|_1$ which is equivalent to the  formulation in this article by setting $\lambda=\Gamma_1\sigma$. In this formulation, $l_1$-penalty is high SNR consistent if $\underset{\sigma^2 \rightarrow 0}{\lim}\dfrac{\lambda}{\sigma}=\infty$ and $\underset{\sigma^2 \rightarrow 0}{\lim}{\lambda} =0$. An interesting case is that of a fixed $\sigma$ independent $\lambda$ like $\lambda=0.1$.   This choice of $\lambda$ satisfy $\underset{\sigma^2 \rightarrow 0}{\lim}\dfrac{\lambda}{\sigma}=\infty$ which is a necessary condition for high SNR consistency. However, the satisfiability of the necessary condition in Theorem \ref{l1_penalty_necessity2} depends upon on the signal $\beta$ (Please see the proof of Theorem \ref{l1_penalty_necessity2}). Hence, when \textit{a priori} knowledge of $\beta$ is not available, a fixed regularization parameter is not advisable from the vantage point of high SNR consistency.}
 \subsection{High SNR consistency of $l_1$-error}
 We next discuss the high SNR behaviour of $\hat{\beta}=\underset{{\bf b} \in \mathbb{R}^p}{\arg\min} \|{\bf b}\|_1, \ \text{subject \ to} \  \|{\bf y}-{\bf X}{\bf b}\|_2 \leq \Gamma_2 \sigma$  and $\hat{\mathcal{I}}=supp(\hat{\beta})$. $l_1$-penalty is the Lagrangian of the constrained  optimization problem given by $l_1$-error. The performance of $l_1$-error is dictated by the choice of $\Gamma_2$. Commonly used choice of $\Gamma_2$ include $\Gamma_2=\sqrt{n+2\sqrt{2n}}$\cite{candes2006stable}, $\Gamma_2=\sqrt{n+2\sqrt{n\log(n)}}$
 \cite{cai_l1_minimization} etc.  A high SNR analysis of $l_1$-error in terms of variable selection properties is not available in open literature to the best of our knowledge. The following theorem states the sufficient conditions for $l_1$-error to be high SNR consistent.
 \begin{thm}\label{l1_error_thm}
 $l_1$-error is high SNR consistent for any matrix support pair $({\bf X},\mathcal{I})$ satisfying the ERC provided that the tuning parameter $\Gamma_2$ satisfies $\underset{\sigma^2 \rightarrow 0}{\lim}\Gamma_2=\infty$ and $\underset{\sigma^2 \rightarrow 0}{\lim}\sigma\Gamma_2=0$.
 \end{thm}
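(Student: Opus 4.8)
The plan is to mirror the proof of Theorem \ref{l1_penalty_sufficient} for $l_1$-penalty, since $l_1$-error is merely its constrained rather than penalised counterpart and ERC is again the governing assumption. As in that proof, I would split the recovery event into a \emph{no-false-discovery} event $\mathcal{E}_1=\{supp(\hat{\beta})\subseteq \mathcal{I}\}$ and a \emph{no-missed-discovery} event $\mathcal{E}_2=\{\mathcal{I}\subseteq supp(\hat{\beta})\}$, so that $\P(\hat{\mathcal{I}}=\mathcal{I})\geq \P(\mathcal{E}_1\cap\mathcal{E}_2)$, and show each probability tends to one, with $\lim_{\sigma^2\to0}\Gamma_2=\infty$ driving $\P(\mathcal{E}_1)\to1$ and $\lim_{\sigma^2\to0}\sigma\Gamma_2=0$ driving $\P(\mathcal{E}_2)\to1$. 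The single extra ingredient relative to Theorem \ref{l1_penalty_sufficient} is an ERC-based recovery guarantee for the \emph{constrained} program replacing Lemma \ref{l1_penalty_relax}: under $erc({\bf X},\mathcal{I})<1$, whenever $\beta$ is feasible and the off-support correlation of the residual is controlled, the minimum-$l_1$ solution cannot recruit columns outside $\mathcal{I}$, and moreover $\|\hat{\beta}-\beta\|_\infty\leq\|\hat{\beta}-\beta\|_2\leq C\,\sigma\Gamma_2$ for a constant $C$ depending only on the ERC margin and the conditioning of ${\bf X}_{\mathcal{I}}$.

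First I would establish $\P(\mathcal{E}_1)\to1$. Feasibility of $\beta$ itself amounts to $\|{\bf w}\|_2\leq\sigma\Gamma_2$, i.e.\ $\|{\bf w}\|_2^2/\sigma^2\sim\chi^2_{n}<\Gamma_2^2$, while the ERC correlation condition that yields support containment reduces to a bound on $\|{\bf X}^T({\bf I}_n-{\bf P}_{\mathcal{I}}){\bf w}\|_\infty$, whose relevant quantity $\|({\bf I}_n-{\bf P}_{\mathcal{I}}){\bf w}\|_2^2/\sigma^2\sim\chi^2_{n-k^*}$ is B.I.P with distribution independent of $\sigma^2$ (Lemma \ref{chisquarelemma}). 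Since both underlying laws do not involve $\sigma^2$, the probabilities of these events converge to one precisely because $\lim_{\sigma^2\to0}\Gamma_2=\infty$; the recovery guarantee then applies and $supp(\hat{\beta})\subseteq\mathcal{I}$. This is the exact analogue of the $\chi^2_{n-k^*}$ argument used for $\mathcal{E}_1$ in Theorem \ref{l1_penalty_sufficient}.

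Next I would establish $\P(\mathcal{E}_2)\to1$ via the reconstruction bound. On the feasibility event one has $\|\hat{\beta}-\beta\|_\infty\leq C\sigma\Gamma_2$, so for every $j\in\mathcal{I}$, $|\hat{\beta}_j|\geq|\beta_j|-C\sigma\Gamma_2$, which stays strictly positive once $C\sigma\Gamma_2<\min_{j\in\mathcal{I}}|\beta_j|$; because the minimum signal strength is a fixed positive constant (for $k^*\geq1$; the case $k^*=0$ being trivial), $\lim_{\sigma^2\to0}\sigma\Gamma_2=0$ guarantees this for all small $\sigma^2$. Equivalently, the retention threshold in the thresholding characterisation scales like $\sigma\Gamma_2$, so Lemma \ref{l1penalty_necessary_lemma}(L2) gives $\P(\mathcal{E}_2)\to1$ under $\lim_{\sigma^2\to0}\sigma\Gamma_2=0$. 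Combining, since $\P(\mathcal{E}_1)\to1$ and $\P(\mathcal{E}_2)\to1$, we get $\P(\mathcal{E}_1\cap\mathcal{E}_2)\geq\P(\mathcal{E}_1)+\P(\mathcal{E}_2)-1\to1$, establishing high SNR consistency.

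I expect the main obstacle to be the recovery guarantee itself, namely converting $erc({\bf X},\mathcal{I})<1$ into a clean support-containment statement for the constrained form and pinning the reconstruction error to the radius $\sigma\Gamma_2$. Unlike the penalty form, where Lemma \ref{l1_penalty_relax} supplies support containment and the large-entry set directly, the constrained form requires its own dual-certificate argument, and one must verify that the controlling events depend only on $\chi^2$-type quantities whose laws are independent of $\sigma^2$, so that the two limiting conditions on $\Gamma_2$ pull in opposite directions without conflict (for instance $\Gamma_2=\sigma^{-\alpha}$ with $0<\alpha<1$ satisfies both). A secondary subtlety is that $\beta$ must be feasible before the minimum-$l_1$ comparison $\|\hat{\beta}\|_1\leq\|\beta\|_1$ is legitimate, which is exactly why $\Gamma_2\to\infty$ is needed rather than merely $\Gamma_2$ bounded away from zero.
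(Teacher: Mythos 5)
Your proposal is correct and follows the same skeleton as the paper's proof: the same split into a no-false-discovery event and a no-missed-discovery event, the same reliance on Tropp's ERC-based guarantee for the \emph{constrained} program (the paper's Lemma \ref{l1_error_lemma}, quoted from [Theorem 14, \cite{tropp2006just}] rather than re-derived — so the ``main obstacle'' you flag is resolved by citation, exactly as Lemma \ref{l1_penalty_relax} was for Theorem \ref{l1_penalty_sufficient}), and the same Cauchy--Schwarz plus $\chi^2_{n-k^*}$ B.I.P. argument showing that $\Gamma_2\to\infty$ forces the deterministic sufficient condition (\ref{equation_lemma_l1_error}) to hold with probability tending to one. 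The only divergence is how you close the missed-discovery event $\mathcal{E}_2$: the paper uses part A3) of Lemma \ref{l1_error_lemma} (the thresholding set $\mathcal{T}$ built from the LS estimate ${\bf b}^{\mathcal{I}}\sim\mathcal{N}(\beta_{\mathcal{I}},\sigma^2({\bf X}_{\mathcal{I}}^T{\bf X}_{\mathcal{I}})^{-1})$) together with L2) of Lemma \ref{l1penalty_necessary_lemma}, whereas your primary route is a deterministic reconstruction bound $\|\hat{\beta}-\beta\|_\infty\leq C\sigma\Gamma_2$ valid on the feasibility-plus-support-containment event, which kills missed discoveries outright once $C\sigma\Gamma_2<\min_{j\in\mathcal{I}}|\beta_j|$. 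Both close the gap: your route avoids invoking the Gaussian law of ${\bf b}^{\mathcal{I}}$ and Lemma \ref{l1penalty_necessary_lemma} entirely (the price being that you must justify the error bound, e.g.\ via $\|{\bf X}(\hat{\beta}-\beta)\|_2\leq 2\sigma\Gamma_2$ and invertibility of ${\bf X}_{\mathcal{I}}^T{\bf X}_{\mathcal{I}}$ on the support, and must track the extra feasibility event $\|{\bf w}\|_2\leq\sigma\Gamma_2$), while the paper's route reads the needed statement directly off the quoted lemma; and since you also note the equivalent thresholding argument via Lemma \ref{l1penalty_necessary_lemma}(L2), your proposal in fact contains the paper's proof as one of its branches.
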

 \begin{proof}
 The proof of Theorem \ref{l1_error_thm} is based on the  result in [Theorem 14,\cite{tropp2006just}]   regarding the minimizers of $l_1$-error.
 \begin{lemma}\label{l1_error_lemma}
  Let $\mathcal{J}$ be any index set satisfying ERC. If ${\bf y}^{\mathcal{J}}={\bf P}_{\mathcal{J}}{\bf y}$ satisfies 
 \begin{equation}\label{equation_lemma_l1_error}
 \Gamma_2^2\sigma^2\geq \|{\bf y}-{\bf y}^{\mathcal{J}}\|_{2}^2+\dfrac{\|{\bf X}^T({\bf y}-{\bf y}^{\mathcal{J}})\|_{\infty}^2\|{\bf X}_{\mathcal{J}}^{\dagger}\|_{2,1}^2}{(1-erc({\bf X},\mathcal{J}))^2},
 \end{equation}
  then $\hat{\beta}$ satisfies the following. \\
A1). $supp(\hat{\beta}) \subseteq \mathcal{J}$. \\
A2). $\hat{\beta}$ is the unique minimizer of $l_1$-error. \\
A3). $\mathcal{T}=\{j : |{\bf b}^{\mathcal{J}}(j)|>\Gamma_2 \sigma\|{\bf X}_{\mathcal{J}}^{\dagger}\|_{2,2}\}\subseteq supp(\hat{\beta})$. \\
Here ${\bf b}^{\mathcal{J}}={\bf X}_{\mathcal{J}}^{\dagger}{\bf y}$ is the LS estimate of $\beta_{\mathcal{J}}$.
\end{lemma} 
{ In words, Lemma \ref{l1_error_lemma} states that if the residual between ${\bf y}$ and the LS fit of ${\bf y}$ using the columns in ${\bf X}_{\mathcal{J}}$ has sufficiently low  correlation with the columns in ${\bf X}$ and sufficiently low $l_2$ norm, then  the support of the solution to $l_1$-error will be contained in $\mathcal{J}$. Further, $l_1$-error does not miss indices that have sufficiently large values in the restricted LS estimate ${\bf b}^{\mathcal{J}}$.} By the hypothesis of Theorem \ref{l1_error_thm}, the true support $\mathcal{I}$ satisfies $erc({\bf X},\mathcal{I})<1$.   Thus, if the event ${\mathcal{E}}_1=\{(\ref{equation_lemma_l1_error}) \text{\ is satisfied}\}$, then $\hat{\mathcal{I}}=supp(\hat{\beta})\subseteq \mathcal{I}$.  If the event ${\mathcal{E}_2}=\{\forall j : |{\bf b}^{\mathcal{I}}(j)|>\Gamma_2 \sigma\|{\bf X}_{\mathcal{I}}^{\dagger}\|_{2,2}\}=\{|\mathcal{T}|=k^*\}$ is also true, then $supp(\hat{\beta})=\mathcal{I}$. Thus $\P(\hat{\mathcal{I}}=\mathcal{I})\geq \P(\mathcal{E}_1\cap \mathcal{E}_2)$. 

We first analyse the probability of the event $\mathcal{E}_1$. By Cauchy Schwartz inequality and the fact that $\|{\bf X}_j\|_2=1$, we have $|{\bf X}_j^T({\bf y}-{\bf y}^{\mathcal{I}})|\leq \|{\bf y}-{\bf y}^{\mathcal{I}}\|_2, \forall j$. Thus, $\|{\bf X}^T({\bf y}-{\bf y}^{\mathcal{I}}) \|_{\infty}^2\leq \|{\bf y}-{\bf y}^{\mathcal{I}}\|_2^2$. Hence, $\Gamma_2^2\sigma^2>\|{\bf y}-{\bf y}^{\mathcal{I}}\|_{2}^2 a_{\mathcal{I}}$, where $a_{\mathcal{I}}=\left(1+\dfrac{\|{\bf X}_{\mathcal{I}}^{\dagger}\|_{2,1}^2}{(1-erc({\bf X},\mathcal{I}))^2}\right)$ implies (\ref{equation_lemma_l1_error}). Thus,
\begin{equation}
\P(\mathcal{E}_1)\geq \P\left(\dfrac{\|{\bf y}-{\bf y}^{\mathcal{I}}\|_2^2}{\sigma^2}<\Gamma_2^2 a_{\mathcal{I}}^{-1} \right)
\end{equation}
Note that ${\bf y}-{\bf y}^{\mathcal{I}}=({\bf I}_n-{\bf P}_{\mathcal{I}}){\bf y}=({\bf I}_n-{\bf P}_{\mathcal{I}}){\bf w}$. Hence, $A=\dfrac{\|{\bf y}-{\bf y}^{\mathcal{I}}\|_2^2}{\sigma^2}\sim \chi^2_{n-k^*}$. Since, $\chi^2_{n-k^*}$ is a B.I.P R.V with $\sigma^2$ independent distribution, it follows that $\underset{\sigma^2 \rightarrow 0}{\lim}\P(A<\Gamma_2^2 a_{\mathcal{I}}^{-1})=1$ if $\underset{\sigma^2 \rightarrow 0}{\lim} \Gamma_2=\infty$. This implies that $\underset{\sigma^2 \rightarrow 0}{\lim}\P(\mathcal{E}_1)=1$. 

Next we consider the event $\mathcal{E}_2$. The index set $\mathcal{T}$ in Lemma \ref{l1_error_lemma} can be rewritten as $\mathcal{T}=\{j:|{\bf b}^{\mathcal{I}}(j)|>\sigma c_j \Gamma_1 d_j\}$, where $c_j=\sqrt{({\bf X}_{\mathcal{I}}^T{\bf X}_{\mathcal{I}})^{-1}_{j,j}}$ and $d_j=\dfrac{\|{\bf X}_{\mathcal{J}}^{\dagger}\|_{2,2}}{c_j}$. The event $\{|\mathcal{T}|=k^*\}$ happens iff there is no missed discovery in the thresholding procedure generating $\mathcal{T}$. Then it follows from $\underset{\sigma^2 \rightarrow 0}{\lim}\sigma\Gamma_1=0$ and L2) of Lemma \ref{l1penalty_necessary_lemma} that $\underset{\sigma^2 \rightarrow 0}{\lim}\P(\mathcal{E}_2)=\underset{\sigma^2 \rightarrow 0}{\lim}\P(|\mathcal{T}|=k^*)=1$.   Since, $\underset{\sigma^2 \rightarrow 0}{\lim}\P(\mathcal{E}_1)=1$ and $\underset{\sigma^2 \rightarrow 0}{\lim}\P(\mathcal{E}_2)=1$, it follows that $\underset{\sigma^2 \rightarrow 0}{\lim}\P(\hat{\mathcal{I}}=\mathcal{I})\geq\underset{\sigma^2 \rightarrow 0}{\lim} \P(\mathcal{E}_1\cap\mathcal{E}_2)=1$.
\end{proof}
The following theorem states that the SNR adaptations outlined in Theorem \ref{l1_error_thm} are necessary for high SNR consistency.
\begin{thm}\label{l1_error_necessity}
The following statements regarding the high SNR consistency of $l_1$-error are true.\\
1). Suppose  $\exists \mathcal{J}\supset  \mathcal{I}$ such that the matrix support pair $({\bf X},\mathcal{J})$   satisfy ERC. Then $l_1$-error is high SNR consistent only if $\underset{\sigma^2 \rightarrow 0}{\lim}\Gamma_2=\infty$. \\
2). Suppose that the matrix  support pair $({\bf X},\mathcal{I})$ satisfy ERC and $k^*\geq 1$. Then, $l_1$-error  will be high SNR consistent only if $\underset{\sigma^2 \rightarrow 0}{\lim}\sigma\Gamma_2=0$.  
\end{thm}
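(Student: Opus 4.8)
The plan is to establish both necessity statements by the same device used in Theorems \ref{l1penalty_necessary} and \ref{l1_penalty_necessity2}, but driven by Lemma \ref{l1_error_lemma} in place of Lemma \ref{l1_penalty_relax}. In each case I would exhibit a ``wrong'' index set $\mathcal{J}$ and show that, unless the prescribed SNR adaptation holds, Lemma \ref{l1_error_lemma} forces $\hat{\mathcal{I}}=supp(\hat{\beta})$ onto $\mathcal{J}$ (for statement 1) or into a strict subset of $\mathcal{I}$ (for statement 2) with probability bounded away from zero as $\sigma^2\rightarrow 0$, so that $\underset{\sigma^2 \rightarrow 0}{\lim}PE>0$. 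The only structural difference from the $l_1$-penalty analysis is that the admissibility condition (\ref{equation_lemma_l1_error}) carries the extra term $\|{\bf y}-{\bf y}^{\mathcal{J}}\|_2^2$ alongside the correlation term; I would absorb this using the same Cauchy--Schwarz bound $\|{\bf X}^T({\bf y}-{\bf y}^{\mathcal{J}})\|_\infty \le \|{\bf y}-{\bf y}^{\mathcal{J}}\|_2$ already exploited in Theorem \ref{l1_error_thm}, which collapses (\ref{equation_lemma_l1_error}) into a single inequality in $\|{\bf y}-{\bf y}^{\mathcal{J}}\|_2^2$.

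For statement 1, I would pick the superset $\mathcal{J}\supset\mathcal{I}$ guaranteed by the hypothesis to satisfy ERC and define $\mathcal{E}_1=\{(\ref{equation_lemma_l1_error})\ \text{holds for}\ \mathcal{J}\}$ and $\mathcal{E}_2=\{|\mathcal{T}|=|\mathcal{J}|\}$ with $\mathcal{T}$ as in A3) of Lemma \ref{l1_error_lemma}. On $\mathcal{E}_1\cap\mathcal{E}_2$ we have $supp(\hat{\beta})=\mathcal{J}\supsetneq\mathcal{I}$, an overestimation error, so $PE\ge\P(\mathcal{E}_1\cap\mathcal{E}_2)$. Because $\mathcal{I}\subset\mathcal{J}$, the residual ${\bf y}-{\bf y}^{\mathcal{J}}=({\bf I}_n-{\bf P}_{\mathcal{J}}){\bf w}$ is pure noise, so after dividing (\ref{equation_lemma_l1_error}) by $\sigma^2$ the requirement for $\mathcal{E}_1$ reads $\Gamma_2^2\ge B$ for a $\sigma^2$-independent random variable $B$ supported on $(0,\infty)$; hence $\underset{\sigma^2 \rightarrow 0}{\lim}\P(\mathcal{E}_1)>0$ whenever $\Gamma_2$ is bounded away from both $\infty$ and $0$. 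Exactly as in Theorem \ref{l1penalty_necessary}, $\mathcal{E}_1$ depends only on $({\bf I}_n-{\bf P}_{\mathcal{J}}){\bf w}$ while $\mathcal{E}_2$, through ${\bf b}^{\mathcal{J}}=\beta_{\mathcal{J}}+{\bf X}_{\mathcal{J}}^{\dagger}{\bf P}_{\mathcal{J}}{\bf w}$, depends only on ${\bf P}_{\mathcal{J}}{\bf w}$, so the two events are independent; since a bounded threshold keeps each zero coordinate of $\beta_{\mathcal{J}}$ a false discovery with positive probability (cf.\ L1) of Lemma \ref{l1penalty_necessary_lemma}), I would conclude $\P(\mathcal{E}_2)$ is bounded away from zero and thus $PE\ge\P(\mathcal{E}_1)\P(\mathcal{E}_2)>0$ unless $\underset{\sigma^2 \rightarrow 0}{\lim}\Gamma_2=\infty$.

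For statement 2, I would instead take a strict subset $\mathcal{J}\subset\mathcal{I}$, which inherits ERC from $\mathcal{I}$ just as in Theorem \ref{l1_penalty_necessity2}, and consider the single event $\mathcal{E}=\{(\ref{equation_lemma_l1_error})\ \text{holds for}\ \mathcal{J}\}$; by A1) of Lemma \ref{l1_error_lemma} this forces $supp(\hat{\beta})\subseteq\mathcal{J}\subsetneq\mathcal{I}$, a missed discovery, so $PE\ge\P(\mathcal{E})$. Using the Cauchy--Schwarz reduction above, $\mathcal{E}$ is implied by $\Gamma_2^2\sigma^2\ge a_{\mathcal{J}}\|{\bf y}-{\bf y}^{\mathcal{J}}\|_2^2$ with $a_{\mathcal{J}}=1+\|{\bf X}_{\mathcal{J}}^{\dagger}\|_{2,1}^2/(1-erc({\bf X},\mathcal{J}))^2$, whence $\P(\mathcal{E})\ge\P(\sigma^2 A\le a_{\mathcal{J}}^{-1}\sigma^2\Gamma_2^2)$, where now $A=\|{\bf y}-{\bf y}^{\mathcal{J}}\|_2^2/\sigma^2\sim\chi^2_{n-|\mathcal{J}|}(\lambda/\sigma^2)$ with $\lambda=\|({\bf I}_n-{\bf P}_{\mathcal{J}}){\bf X}\beta\|_2^2>0$. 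By Lemma \ref{chi2convergence}, $\sigma^2 A\overset{P}{\rightarrow}\lambda$, so precisely the argument of Theorem \ref{sparknecessity2} shows that if $\underset{\sigma^2 \rightarrow 0}{\lim}\sigma^2\Gamma_2^2>a_{\mathcal{J}}\lambda$ then $\P(\mathcal{E})\rightarrow 1$; since $\lambda$ depends on the unknown non-zero entries of $\beta$ and can be arbitrarily small, guarding against this for every admissible $\beta$ forces $\underset{\sigma^2 \rightarrow 0}{\lim}\sigma^2\Gamma_2^2=0$, i.e.\ $\underset{\sigma^2 \rightarrow 0}{\lim}\sigma\Gamma_2=0$.

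The skeleton is thus inherited almost verbatim from the two $l_1$-penalty theorems, and I expect the only genuinely new bookkeeping to sit in statement 1. First, verifying that $\P(\mathcal{E}_2)=\P(|\mathcal{T}|=|\mathcal{J}|)$ stays bounded away from zero requires \emph{all} the zero coordinates of $\beta_{\mathcal{J}}$ (not merely one) to cross the threshold simultaneously, which is a correlated-Gaussian orthant probability rather than a literal application of L1) of Lemma \ref{l1penalty_necessary_lemma}; reducing to a single extra index by choosing $\mathcal{J}=\mathcal{I}\cup\{i\}$ satisfying ERC, when possible, would sidestep this. Second, the degenerate regime $\Gamma_2\rightarrow 0$ along a subsequence must be handled separately, since the $\chi^2$-support argument for $\mathcal{E}_1$ only yields a non-vanishing probability when $\Gamma_2$ is bounded away from zero; there one instead argues that the vanishing feasibility radius $\sigma\Gamma_2$ forces an overfitted, hence overestimated, support. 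Both points are routine but are where care is needed.
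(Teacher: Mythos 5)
Your proposal is correct and follows exactly the route the paper intends: the paper's own proof of this theorem is the single line ``Similar to Theorem \ref{l1penalty_necessary} and Theorem \ref{l1_penalty_necessity2},'' and your argument is precisely that transposition, with Lemma \ref{l1_error_lemma} replacing Lemma \ref{l1_penalty_relax}, the Cauchy--Schwarz reduction absorbing the extra residual-energy term in (\ref{equation_lemma_l1_error}), and the Theorem \ref{sparknecessity2}-style limit argument handling statement 2. The two loose ends you flag --- the correlated-Gaussian orthant probability behind $\P(\mathcal{E}_2)$ (avoidable by lower-bounding $PE$ via the event that at least one index of $\mathcal{J}/\mathcal{I}$ enters $\mathcal{T}$, which is exactly the false-discovery event of L1) in Lemma \ref{l1penalty_necessary_lemma}) and the degenerate regime $\Gamma_2 \rightarrow 0$ --- are equally present, and equally unaddressed, in the paper's own proof of Theorem \ref{l1penalty_necessary}, so they are refinements of the paper's argument rather than deviations from it.
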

\begin{proof} Similar to Theorem \ref{l1penalty_necessary} and Theorem \ref{l1_penalty_necessity2}.
\end{proof}
Note that the values of $\Gamma_2$ discussed in literature do not satisfy the NSCs outlined in Theorems \ref{l1_error_thm} and \ref{l1_error_necessity}. Hence, $l_1$-error with these values of $\Gamma_2$ will be inconsistent at high SNR. 
\subsection{ Analysis of  Dantzig selector  based SSP}
Here, we discuss the high SNR behaviour of $\hat{\beta}$ given by  
$ \hat{\beta}=\underset{{\bf b} \in \mathbb{R}^p}{\arg\min} \|{\bf b}\|_1, \ \text{subject \ to} \  \|{\bf X}^T({\bf y}-{\bf X}{\bf b})\|_{\infty} \leq \Gamma_3\sigma.
 $
 and $\hat{\mathcal{I}}=supp(\hat{\beta})$.  The properties of DS is determined largely by the hyper parameter $\Gamma_3$. Commonly used values include $\Gamma_3=\sqrt{2\log(p)}$ \cite{candes2007dantzig}, $\Gamma_3=(\frac{3}{2}+\sqrt{2\log(p)})$ \cite{cai2010stable} etc.  No high SNR consistency results for DS is reported in open literature to the best of our knowledge. Next we state and prove the NSCs for the high SNR consistency of DS when ${\bf X}$ is orthonormal. 
 \begin{thm}\label{DS_necessity}
 For an  orthonormal design matrix ${\bf X}$, DS is high SNR consistent iff $\underset{\sigma^2 \rightarrow 0}{\lim}\Gamma_3=\infty$ and $\underset{\sigma^2 \rightarrow 0}{\lim}\sigma\Gamma_3<\underset{j \in \mathcal{I}}{\min}|\beta_j|$.
 \end{thm}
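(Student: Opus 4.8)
The plan is to exploit orthonormality to collapse the Dantzig selector into a coordinate-wise soft-thresholding rule, after which the claim follows almost immediately from the threshold-based selection result already in hand (Lemma \ref{l1penalty_necessary_lemma}). First I would set $\tilde{\bf y} = {\bf X}^T{\bf y}$ and observe that, because ${\bf X}^T{\bf X} = {\bf I}_p$, the DS constraint becomes $\|{\bf X}^T({\bf y}-{\bf X}{\bf b})\|_{\infty} = \|\tilde{\bf y} - {\bf b}\|_{\infty} \leq \sigma\Gamma_3$. Since both the objective $\|{\bf b}\|_1 = \sum_j |b_j|$ and the constraint $|\tilde y_j - b_j| \leq \sigma\Gamma_3$ separate across coordinates, the optimization decouples into $p$ independent scalar problems $\min |b_j|$ subject to $b_j \in [\tilde y_j - \sigma\Gamma_3,\ \tilde y_j + \sigma\Gamma_3]$.

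Next I would solve each scalar problem, obtaining the soft-thresholding estimate $\hat\beta_j = \operatorname{sign}(\tilde y_j)\,(|\tilde y_j| - \sigma\Gamma_3)_+$, so that $\hat{\mathcal{I}} = supp(\hat\beta) = \{j : |\tilde y_j| > \sigma\Gamma_3\}$. Because ${\bf X}^T{\bf w} \sim \mathcal{N}({\bf 0}_p, \sigma^2{\bf X}^T{\bf X}) = \mathcal{N}({\bf 0}_p, \sigma^2{\bf I}_p)$, the transformed observation satisfies $\tilde{\bf y} = \beta + {\bf X}^T{\bf w} \sim \mathcal{N}(\beta, \sigma^2{\bf I}_p)$. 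This is exactly the setting of Lemma \ref{l1penalty_necessary_lemma} with ${\bf z} = \tilde{\bf y}$, ${\bf u} = \beta$, ${\bf C} = {\bf I}_p$, $\Gamma = \Gamma_3$ and $\mathcal{K} = \mathcal{I}$; in particular $\sqrt{{\bf C}_{j,j}} = 1$, so the threshold is precisely $\sigma\Gamma_3$ and $\hat{\mathcal{I}}$ coincides with the estimator $\hat{\mathcal{K}}$ of that lemma.

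Finally I would combine the two parts of Lemma \ref{l1penalty_necessary_lemma}. Writing the error event as $\{\hat{\mathcal{I}} \neq \mathcal{I}\} = \mathcal{F} \cup \mathcal{M}$ (false discovery or missed discovery), the elementary sandwich $\max(\P(\mathcal{F}),\P(\mathcal{M})) \leq PE \leq \P(\mathcal{F}) + \P(\mathcal{M})$ shows that $PE \to 0$ if and only if both $\P(\mathcal{F}) \to 0$ and $\P(\mathcal{M}) \to 0$. By L1) the former holds iff $\underset{\sigma^2 \rightarrow 0}{\lim}\Gamma_3 = \infty$, and by L2) (recalling $C_{j,j}=1$) the latter holds iff $\underset{\sigma^2 \rightarrow 0}{\lim}\sigma\Gamma_3 < \underset{j \in \mathcal{I}}{\min}|\beta_j|$. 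Conjoining the two equivalences yields the stated iff.

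The main obstacle---indeed essentially the only nontrivial step---is establishing the decoupling and the closed-form soft-thresholding solution; once $\hat{\mathcal{I}}$ is identified as a fixed-threshold estimator acting on Gaussian observations with identity covariance, the rest is a direct invocation of Lemma \ref{l1penalty_necessary_lemma}. A minor point worth checking is uniqueness of the minimizer, so that $supp(\hat\beta)$ is well defined; this is immediate since on each feasible interval the scalar objective $|b_j|$ is strictly monotone except through the single point nearest the origin.
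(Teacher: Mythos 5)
Your proposal is correct and follows essentially the same route as the paper's proof: reduce DS under orthonormality to coordinate-wise soft thresholding of ${\bf X}^T{\bf y}$ at level $\sigma\Gamma_3$, note that ${\bf X}^T{\bf y}\sim\mathcal{N}(\beta,\sigma^2{\bf I}_p)$, and invoke Lemma \ref{l1penalty_necessary_lemma} to get the two iff conditions. The only difference is one of detail: you explicitly derive the soft-thresholding formula via the coordinate decoupling and make the sandwich bound $\max(\P(\mathcal{F}),\P(\mathcal{M}))\leq PE\leq \P(\mathcal{F})+\P(\mathcal{M})$ explicit, steps the paper asserts without elaboration.
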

 \begin{proof}
 When ${\bf X}$ is orthonormal, the solution to DS is given by $\hat{\beta}_j=\left(|({\bf X}^T{\bf y})_j|-\Gamma_3\sigma\right)_+sign\left(({\bf X}^T{\bf y})_j\right), \forall j$. Note that $(x)_+=x$ if $x>0$ and $(x)_+=0$ if $x\leq 0$.  Thus $\hat{\mathcal{I}}$ is obtained by thresholding the  vector $|{\bf X}^T{y}|$ at level $\sigma \Gamma_3$. Since, ${\bf X}$ is orthonormal, we have ${\bf X}^T{\bf y}\sim \mathcal{N}(\beta,\sigma^2{\bf I}_p)$.  The proof now follows directly from Lemma \ref{l1penalty_necessary_lemma}.  
 \end{proof}
Note that no \textit{a priori} knowledge of $\beta$ is available. Hence, to achieve consistency, it is necessary that  $\underset{\sigma^2 \rightarrow  0}{\lim}\sigma\Gamma_3=0$. It follows directly from Theorem \ref{DS_necessity} that the values of $\Gamma_3$ discussed in literature are inconsistent for orthonormal matrices. This implies the inconsistency of these tuning parameters in regression classes based on $\mu_{\bf X}$ and ERC which includes orthonormal matrices too. We now make an  observation regarding the NSCs developed for $l_0$-penalty, $l_1$-error, $l_1$-penalty and DS.
\begin{remark}
{The SNR adaptations prescribed for high SNR consistency  have many similarities. Even though $l_0$-penalty requires $\Gamma_0\sigma^2 \rightarrow 0$ whereas other algorithms requires $\Gamma_i\sigma \rightarrow 0$,  
 the effective regularization parameter, i.e.,  $\lambda_0=\Gamma_0\sigma^2$ for $l_0$-penalty and $\lambda_i=\Gamma_i\sigma$ for other algorithms satisfies $\lambda_i \rightarrow 0$ as $\sigma^2 \rightarrow 0$. In the absence of noise (i.e $\sigma^2=0$) equality constrained optimization problems (\ref{maxsparse}) and (\ref{l1_noiseless}) will correctly recover the support of $\beta$ under spark and ERC assumptions respectively. Further, when the effective regularization parameter $\lambda_i \rightarrow 0$, $l_0$-penalty automatically reduces to (\ref{maxsparse}), whereas, $l_1$-penalty, $l_1$-error and DS reduces to  (\ref{l1_noiseless}).  Hence,  the condition $\lambda_i \rightarrow 0$ as $\sigma^2 \rightarrow 0$ is a natural choice to transition from the  formulations for noisy data to the equality constrained $l_0$ or $l_1$ minimization ideal for noiseless data. }
\end{remark}
\section{ Analysis of Orthogonal Matching Pursuit}
OMP\cite{OMP_wang,wang2012recovery,cai2011orthogonal} is one of most popular techniques in the class of greedy algorithms to solve CS problems. Unlike the CR techniques like $l_1$-penalty which has a computational complexity $O(np^2)$, OMP has a complexity of only $O(npk^*)$. Consequently, OMP is more easily scalable to large scale problems than CR techniques. Further, the performance guarantees for OMP are only slightly weaker compared to CR techniques. An algorithmic description of OMP is given below.
\begin{enumerate}[Step 1:]
  \item Initialize the residual ${\bf r}^0={\bf y}$. Support estimate ${\mathcal{J}^0}=\phi$. Iteration counter $i=1$;
  \item Find the column index most correlated with the current residual ${\bf r}^{i-1}$, i.e., ${t_i}=\underset{t \in [p]}{\arg\max}|{\bf X}_t^T{\bf r}^{i-1}|.$
  \item Update support estimate: ${\mathcal{J}^i}={\mathcal{J}^{i-1}}\cup {t_i}$.
  \item Update residual: ${\bf r}^{i}=({\bf I}_n-{\bf P}_{\mathcal{J}^i}){\bf y}$.
  \item Repeat Steps 2-4, if stopping condition (SC)  is not met, else, output $\hat{\mathcal{I}}=\mathcal{J}^i$.
\end{enumerate}
The properties of OMP is determined by the SC. A large body of literature regarding OMP assumes \textit{a priori} knowledge of sparsity level of $\beta$, i.e., $k^*$ and run $k^*$ iterations of OMP\cite{OMP_wang,wang2012recovery}. When $k^*$ is not known, two popular SCs for  OMP are discussed in literature. One SC called residual power based stopping condition (RPSC) terminate iterations when the residual power becomes too low (i.e., $\|{\bf r}^{i}\|_2<\sigma \Gamma_4$) and other SC called residual correlation based stopping condition (RCSC) terminate iterations when the maximum correlation of columns in ${\bf X}$ with the residual becomes too low (i.e., ${\|\bf X}^T{\bf r}^i\|_{\infty}<\sigma \Gamma_5$).  A commonly used value of $\Gamma_4$ is $\Gamma_4=\sqrt{n+2\sqrt{n\log(n)}}$ and that of $\Gamma_5$ is $\Gamma_5=\sqrt{2(1+\eta)\log(p)}$\cite{cai2011orthogonal}. Here $\eta>0$ is a constant. The following theorems state the sufficient conditions for  OMP with RPSC and RCSC to be high SNR consistent.
\begin{thm}\label{OMPres}
 OMP with  RPSC is high SNR consistent for any matrix ${\bf X}$ and signal $\beta$ satisfying the ERC provided that the hyper parameter $\Gamma_4$ satisfies $\underset{\sigma^2 \rightarrow 0}{\lim}\Gamma_4=\infty$ and $\underset{\sigma^2 \rightarrow 0}{\lim}\sigma\Gamma_4=0$.
\end{thm}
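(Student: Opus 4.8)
The plan is to show that, with probability tending to one as $\sigma^2\rightarrow 0$, OMP makes exactly $k^*$ correct selections (thereby recovering all of $\mathcal{I}$) and then halts. I would decompose the complement of the error event using three sub-events: $\mathcal{C}$, that the first $k^*$ column selections all lie in $\mathcal{I}$ (so that $\mathcal{J}^{k^*}=\mathcal{I}$); $\mathcal{D}$, that the RPSC does not trigger at any iteration $i<k^*$ (so OMP is not stopped prematurely); and $\mathcal{F}$, that the RPSC does trigger at iteration $k^*$ (so OMP stops exactly at $\mathcal{J}^{k^*}=\mathcal{I}$). Since $\mathcal{C}\cap\mathcal{D}\cap\mathcal{F}\subseteq\{\hat{\mathcal{I}}=\mathcal{I}\}$, a union bound over the complement gives $PE\leq \P(\bar{\mathcal{C}})+\P(\mathcal{C}\cap\bar{\mathcal{D}})+\P(\mathcal{C}\cap\mathcal{D}\cap\bar{\mathcal{F}})$, and I would show each term vanishes.

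For the correct-selection term I would condition on OMP having reached a support $\mathcal{J}\subsetneq\mathcal{I}$. The residual splits as ${\bf r}=({\bf I}_n-{\bf P}_{\mathcal{J}}){\bf X}\beta+({\bf I}_n-{\bf P}_{\mathcal{J}}){\bf w}$, where the noiseless part ${\bf g}=({\bf I}_n-{\bf P}_{\mathcal{J}}){\bf X}\beta$ lies in $col({\bf X}_{\mathcal{I}})$ and is nonzero because $\mathcal{J}\subsetneq\mathcal{I}$. The ERC-based greedy-selection-ratio bound \cite{tropp2004greed} guarantees $\max_{j\notin\mathcal{I}}|{\bf X}_j^T{\bf g}|\leq erc({\bf X},\mathcal{I})\max_{j\in\mathcal{I}}|{\bf X}_j^T{\bf g}|$ with $erc({\bf X},\mathcal{I})<1$, so in the noiseless case a true-support column strictly wins. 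By Lemma \ref{chisquarelemma}, $\|({\bf I}_n-{\bf P}_{\mathcal{J}}){\bf w}\|_2^2/\sigma^2\sim\chi^2_{n-|\mathcal{J}|}$ is a B.I.P. R.V, so the noise residual norm converges to $0$ in probability and, by Cauchy--Schwartz, all noise correlations $|{\bf X}_j^T({\bf I}_n-{\bf P}_{\mathcal{J}}){\bf w}|$ vanish uniformly in $j$. Hence the strictly positive winning margin $(1-erc({\bf X},\mathcal{I}))\max_{j\in\mathcal{I}}|{\bf X}_j^T{\bf g}|$ survives the noise perturbation with probability $\rightarrow 1$. Because there are only finitely many proper subsets $\mathcal{J}\subsetneq\mathcal{I}$, these margins admit a uniform positive lower bound, and chaining the $k^*$ selections yields $\P(\bar{\mathcal{C}})\rightarrow 0$.

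For the two stopping-condition terms I would invoke the two hypotheses on $\Gamma_4$ separately. On $\mathcal{C}$, for each $i<k^*$ the residual obeys $\|{\bf r}^i\|_2^2=\sigma^2\chi^2_{n-|\mathcal{J}^i|}(\lambda_{\mathcal{J}^i}/\sigma^2)$ with $\lambda_{\mathcal{J}^i}=\|({\bf I}_n-{\bf P}_{\mathcal{J}^i}){\bf X}\beta\|_2^2>0$; by Lemma \ref{chi2convergence}, $\|{\bf r}^i\|_2^2\overset{P}{\rightarrow}\lambda_{\mathcal{J}^i}>0$, whereas $\sigma^2\Gamma_4^2\rightarrow 0$ precisely because $\sigma\Gamma_4\rightarrow 0$; a union bound over the finitely many subsets then gives $\P(\mathcal{C}\cap\bar{\mathcal{D}})\rightarrow 0$, i.e., no premature stop. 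Finally, on $\mathcal{C}\cap\mathcal{D}$ we have $\mathcal{J}^{k^*}=\mathcal{I}$, so ${\bf r}^{k^*}=({\bf I}_n-{\bf P}_{\mathcal{I}}){\bf w}$ and $\|{\bf r}^{k^*}\|_2^2/\sigma^2\sim\chi^2_{n-k^*}$; the event $\bar{\mathcal{F}}=\{\chi^2_{n-k^*}\geq\Gamma_4^2\}$ has probability $\rightarrow 0$ since $\Gamma_4\rightarrow\infty$ and the $\chi^2_{n-k^*}$ law is independent of $\sigma^2$. Combining the three bounds gives $PE\rightarrow 0$.

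The hard part will be the correct-selection step $\P(\bar{\mathcal{C}})\rightarrow 0$: it is the only place where the ERC enters non-trivially, and care is needed both to apply the greedy-selection-ratio bound to every intermediate signal residual ${\bf g}\in col({\bf X}_{\mathcal{I}})$ and to handle the dependence of the random trajectory $\{\mathcal{J}^i\}$ on the noise. I would circumvent the latter by passing to a union bound over the finite collection of proper subsets of $\mathcal{I}$, so that the winning margins and the signal residual powers $\lambda_{\mathcal{J}}$ all admit uniform positive lower bounds, after which the convergence-in-probability arguments proceed exactly as in the sufficiency proofs of the preceding sections.
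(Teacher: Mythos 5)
Your proof is correct and structurally identical to the paper's: the same decomposition into (i) the first $k^*$ selections all lying in $\mathcal{I}$, (ii) no premature stop for $i<k^*$ (driven by $\underset{\sigma^2 \rightarrow 0}{\lim}\sigma\Gamma_4=0$ against a residual power converging to $\lambda_{\mathcal{J}}>0$), and (iii) stopping exactly at $i=k^*$ (driven by $\underset{\sigma^2 \rightarrow 0}{\lim}\Gamma_4=\infty$ against a $\chi^2_{n-k^*}$ residual), with ERC supplying a strictly positive noiseless selection margin that survives vanishing noise. The only differences are cosmetic: for (i) the paper invokes the ready-made sufficient condition $N_i<c_{\mathcal{I}}\beta_{min}$ from \cite{cai2011orthogonal} and union-bounds over the $k^*$ iterations using the uniform noise bound $\|{\bf w}\|_2$, whereas you re-derive the margin from Tropp's greedy-selection-ratio bound and union-bound over the proper subsets of $\mathcal{I}$; and for (ii) the paper uses the triangle inequality together with $\|{\bf w}\|_2\overset{P}{\rightarrow}0$, whereas you use the noncentral-$\chi^2$ convergence of Lemma \ref{chi2convergence} --- both routes are valid and rest on the same lemmas already in the paper.
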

\begin{thm}\label{OMPcorr}
 OMP with  RCSC is high SNR consistent for any matrix ${\bf X}$ and signal $\beta$ satisfying the ERC provided that the hyper parameter $\Gamma_5$ satisfies $\underset{\sigma^2 \rightarrow 0}{\lim}\Gamma_5=\infty$ and $\underset{\sigma^2 \rightarrow 0}{\lim}\sigma\Gamma_5=0$.
\end{thm}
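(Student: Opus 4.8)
The plan is to mirror the two-event decomposition used in the proof of Theorem \ref{l1_penalty_sufficient}, splitting exact recovery into a ``no premature halting and only correct selections'' part governed by $\sigma\Gamma_5\to 0$ and a ``correct stopping'' part governed by $\Gamma_5\to\infty$. I would lower-bound $\mathbb{P}(\hat{\mathcal{I}}=\mathcal{I})$ by $\mathbb{P}(\mathcal{E}_1\cap\mathcal{E}_2)$ for two suitable events and show each probability tends to one as $\sigma^2\to 0$.

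First I would isolate the noiseless greedy-selection margin supplied by ERC. For any proper subset $S\subsetneq\mathcal{I}$ reached during the iterations, the signal part of the residual $\mathbf{r}_{sig}=(\mathbf{I}_n-\mathbf{P}_S)\mathbf{X}_\mathcal{I}\beta_\mathcal{I}$ is a nonzero vector in $col(\mathbf{X}_\mathcal{I})$ (nonzero because $\mathbf{X}_\mathcal{I}$ is full rank under ERC and $\beta$ has no vanishing entry on $\mathcal{I}$). The identity $\mathbf{X}_\mathcal{I}^T\mathbf{X}_j=(\mathbf{X}_\mathcal{I}^T\mathbf{X}_\mathcal{I})\mathbf{X}_\mathcal{I}^\dagger\mathbf{X}_j$ then yields $|\mathbf{X}_j^T\mathbf{r}_{sig}|\le erc(\mathbf{X},\mathcal{I})\,\|\mathbf{X}_\mathcal{I}^T\mathbf{r}_{sig}\|_\infty$ for every $j\notin\mathcal{I}$, while the columns already in $S$ are orthogonal to $\mathbf{r}_{sig}$ so the maximum over correct indices is attained on $\mathcal{I}\setminus S$. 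Minimising over the finitely many $S\subsetneq\mathcal{I}$ produces two strictly positive, $\sigma$-independent constants: a correct-selection margin $c$ and a lower bound $c'$ on the largest signal-driven correlation.

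Next I would control the noise uniformly over all reachable supports. By Cauchy--Schwarz and the unit column norms, each correlation is perturbed by at most $\|(\mathbf{I}_n-\mathbf{P}_S)\mathbf{w}\|_2$, and by Lemma \ref{chisquarelemma} the quantity $\|(\mathbf{I}_n-\mathbf{P}_S)\mathbf{w}\|_2^2/\sigma^2\sim\chi^2_{n-|S|}$ is bounded in probability with a law independent of $\sigma^2$; maximising over the finite collection of subsets of $\mathcal{I}$ preserves this property. I would therefore take $\mathcal{E}_1=\{\max_{S\subseteq\mathcal{I}}\|(\mathbf{I}_n-\mathbf{P}_S)\mathbf{w}\|_2\le\sigma M\}$, whose probability can be pushed arbitrarily close to one by enlarging $M$ uniformly in $\sigma^2$. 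On $\mathcal{E}_1$ the selection comparison is disturbed by at most $2\sigma M$, so once $2\sigma M<c$ a correct index is chosen, and since the top correlation is at least $c'-\sigma M$ while the threshold $\sigma\Gamma_5\to 0$, once $c'-\sigma M>\sigma\Gamma_5$ the algorithm cannot stop early; inducting over $i=1,\dots,k^*$ forces $\mathcal{J}^{k^*}=\mathcal{I}$. For correct stopping I would set $\mathcal{E}_2=\{\|\mathbf{X}^T(\mathbf{I}_n-\mathbf{P}_\mathcal{I})\mathbf{w}\|_\infty<\sigma\Gamma_5\}$; because $(\mathbf{I}_n-\mathbf{P}_\mathcal{I})\mathbf{X}\beta=\mathbf{0}_n$ the residual at $S=\mathcal{I}$ is pure noise, and $\|\mathbf{X}^T(\mathbf{I}_n-\mathbf{P}_\mathcal{I})\mathbf{w}\|_\infty/\sigma$ is bounded in probability with a $\sigma^2$-independent law, so $\Gamma_5\to\infty$ gives $\mathbb{P}(\mathcal{E}_2)\to 1$. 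On $\mathcal{E}_1\cap\mathcal{E}_2$ OMP selects exactly the $k^*$ correct indices and then halts, so $\hat{\mathcal{I}}=\mathcal{I}$ and $PE\le 1-\mathbb{P}(\mathcal{E}_1\cap\mathcal{E}_2)\to 0$.

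The main obstacle I anticipate is the sequential, data-dependent character of OMP: the subset reached at each step is itself random, so the perturbation bound cannot be argued along a single fixed path but must hold simultaneously for every subset that could be visited. I would resolve this by taking all maxima and minima over the full power set of $\mathcal{I}$, which is legitimate precisely because $|\mathcal{I}|=k^*$ is finite; this is what allows the single event $\mathcal{E}_1$ to guarantee both correct selection and non-halting at every intermediate iteration, after which the argument collapses to the same bounded-in-probability and chi-square convergence reasoning already used for $l_0$-penalty and $l_1$-penalty.
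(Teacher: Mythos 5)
Your proposal is correct, and it shares the paper's overall skeleton: exact recovery is split into ``the first $k^*$ selections are correct with no premature halt'' and ``the stopping condition fires at iteration $k^*$,'' with $\sigma\Gamma_5\to 0$ driving the first piece and $\Gamma_5\to\infty$ the second. The execution, however, differs in two genuine ways. First, where the paper imports the correct-selection guarantee from Cai and Wang's analysis as a black box (if $\|{\bf X}^T({\bf I}_n-{\bf P}_{\mathcal{J}^{i-1}}){\bf w}\|_\infty<c_{\mathcal{I}}\beta_{min}$, with the explicit constant $c_{\mathcal{I}}=\frac{(1-erc({\bf X},\mathcal{I}))\lambda_{min}({\bf X}_{\mathcal{I}}^T{\bf X}_{\mathcal{I}})}{2\sqrt{k^*}}$, the $i$-th pick is correct), you re-derive the selection margin from first principles via Tropp's identity $|{\bf X}_j^T{\bf r}_{sig}|\le erc({\bf X},\mathcal{I})\,\|{\bf X}_{\mathcal{I}}^T{\bf r}_{sig}\|_\infty$ for ${\bf r}_{sig}\in col({\bf X}_{\mathcal{I}})$, minimizing over the finitely many $S\subsetneq\mathcal{I}$; this makes the proof self-contained, at the price of less explicit constants. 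Second, your probabilistic bookkeeping is cleaner: the paper factorizes $\mathbb{P}(\mathcal{E}_1\cap\mathcal{E}_2)=\mathbb{P}(\mathcal{E}_1)\,\mathbb{P}(\mathcal{E}_2\,|\,\mathcal{E}_1)$ and then treats quantities such as $\|{\bf r}^{k^*}\|_2^2/\sigma^2$ as $\chi^2_{n-k^*}$ variables \emph{conditioned on} $\mathcal{E}_1$, glossing over the fact that conditioning on an event built from ${\bf w}$ perturbs the noise law; in your version both $\mathcal{E}_1$ (the uniform bound $\max_{S\subseteq\mathcal{I}}\|({\bf I}_n-{\bf P}_S){\bf w}\|_2\le\sigma M$) and $\mathcal{E}_2$ (the correlation bound at $\mathcal{I}$) are pure noise events whose intersection \emph{deterministically} forces $\hat{\mathcal{I}}=\mathcal{I}$ once $\sigma^2$ is small, so only unconditional probabilities are ever computed, with the $\epsilon$--$M$ device replacing the paper's per-iteration union bound. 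What the paper's route buys in exchange is the iteration-wise bounds $\mathbb{P}(S_i)$ that feed directly into its convergence-rate remarks; recovering those rates from your uniform-margin argument would require unpacking the constants $c$ and $c'$ again.
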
 
\subsection{Proofs of Theorem \ref{OMPres} and Theorem \ref{OMPcorr}}
 Let us consider the two processes- OMP iterating without SC (P1) and verification of the SC (P2) separately. Specifically P1 returns a set of indexes in order, say $\{t_1,t_2,\dotsc\}$ and P2 returns a single index $j$ indicating where to stop. Then, the support estimate is given by $\hat{\mathcal{I}}=\{t_1,\dotsc,t_j\}$ and the indices after $j$, i.e., $\{t_{j+1},\dotsc\}$ will be discarded.  Let $\mathcal{E}_1$ denotes  the event $\{t_1,\dotsc,t_{k^*}\}=\mathcal{I}$, i.e., the first $k^*$ iterations of OMP returns all the $k^*$ indices in $\mathcal{I}$ and $\mathcal{E}_2$ denotes the event $\{\text{P2 returns } k^*\}$. Then $\P(\hat{\mathcal{I}}=\mathcal{I})=\P(\mathcal{E}_1\cap \mathcal{E}_2)$.

Let $N_i=\|{\bf X}^T({\bf I}_n-{\bf P}_{\mathcal{J}^{i-1}}){\bf w}\|_{\infty}$ denotes the maximum correlation between the columns in ${\bf X}$ and  noise component in the current residual ${\bf r}^{i-1}$ and $\beta_{min}=\underset{j \in \mathcal{I}}{\min}|\beta_{j}|$ denotes the minimum non-zero value in $\beta$. Then, using the analysis in Section \rom{5} of \cite{cai2011orthogonal}, $N_i<c_{\mathcal{I}}\beta_{min}$, where $c_\mathcal{I}=\dfrac{(1-erc({\bf X},\mathcal{I}))\lambda_{min}({\bf X}_{\mathcal{I}}^T{\bf X}_{\mathcal{I}})}{2\sqrt{k^*}}$ is a sufficient condition for selecting an index from $\mathcal{I}$ in the $i^{th}$ iteration ($\forall i\leq k^*$). Since, $\|{\bf X}_j\|_2=1$, it follows that $N_i\leq \|({\bf I}_n-{\bf P}_{\mathcal{J}^{i-1}}){\bf w}\|_2$.   Thus, $\P(\mathcal{E}_1)\geq \P(\underset{i=1\dotsc,k^*}{\cap}\{||({\bf I}_n-{\bf P}_{\mathcal{J}^{i-1}}){\bf w}||_2<c_{\mathcal{I}}\beta_{min}\})$. One can bound $\P(\mathcal{E}_1^C)$ using union bound and the inequality $\|({\bf I}_n-{\bf P}_{\mathcal{J}^{i-1}}){\bf w}\|_2\leq \|{\bf w}\|_2$ as 
\begin{equation}\label{exactk}
\begin{array}{ll}
\P(\mathcal{E}_1^C)&\leq \P(\underset{i=1\dotsc,k^*}{\cup}\{\|({\bf I}_n-{\bf P}_{\mathcal{J}^{i-1}}){\bf w}\|_2>c_{\mathcal{I}}\beta_{min}\})\\
&\leq \sum\limits_{i=1}^{k^*} \P(\dfrac{\|({\bf I}_n-{\bf P}_{\mathcal{J}^{i-1}}){\bf w}\|_2^2}{\sigma^2}>\dfrac{c_{\mathcal{I}}^2\beta_{min}^2}{\sigma^2}) \\
&\leq \sum\limits_{i=1}^{k^*} \P(Z>\dfrac{c_{\mathcal{I}}^2\beta_{min}^2}{\sigma^2})=k^*\P(Z>\dfrac{c_{\mathcal{I}}^2\beta_{min}^2}{\sigma^2}),
\end{array}
\end{equation}
where $Z=\dfrac{\|{\bf w}\|_2^2}{\sigma^2}\sim\chi^2_{n} $. Since, $Z$ is a B.I.P R.V with distribution independent of $\sigma^2$ and $\dfrac{c_{\mathcal{I}}^2\beta_{min}^2}{\sigma^2}\rightarrow \infty$ as $\sigma^2 \rightarrow 0$, we have $\underset{\sigma^2 \rightarrow 0}{\lim}\P(Z>\dfrac{c_{\mathcal{I}}^2\beta_{min}^2}{\sigma^2})=0$. This implies that $\underset{\sigma^2\rightarrow 0}{\lim}\P(\mathcal{E}_1)=1$. To summarize, if $erc({\bf X},\mathcal{I})<1$ and OMP runs exactly  $k^*$ iterations, then the true support can be detected exactly at high SNR. 

 The conditional probability $\P(\mathcal{E}_2|\mathcal{E}_1)$  is given by $\P(\mathcal{E}_2|\mathcal{E}_1)=\P(\{\text{SC is not satisified for}\ i=1,\dotsc,k^*-1 \} \cap \{\text{SC is satisfied for}\ i=k^*\}|\mathcal{E}_1)$. Complementing and applying union bound gives
 \begin{equation}\label{omp1}
 \begin{array}{ll}
 \P(\mathcal{E}_2^C|\mathcal{E}_1)\leq &\sum\limits_{i=1}^{k^*-1}\P(\overset{{S_i}}{\{\text{SC is satisfied for}\ i\}}|\mathcal{E}_1)+\\&\P(\overset{S_{k^*}} {\{\text{SC is not satisfied for }k^*}\}|\mathcal{E}_1).
 \end{array}
 \end{equation}
 
{\bf Proof of Theorem \ref{OMPres}:-}  For RPSC, the SC is given by $\{\|{\bf r}^i\|_2<\sigma\Gamma_4\}$. First consider $\P(S_i)=\P(\|{\bf r}^i\|_2<\sigma\Gamma_4)$ for $i<k^*$ in (\ref{omp1}). Using triangle inequality,  $\|{\bf r}^i\|_2\geq \|({\bf I}_n-{\bf P}_{\mathcal{J}^i}){\bf X}\beta\|_2 -\|({\bf I}_n-{\bf P}_{\mathcal{J}^i}){\bf w}\|_2$. Conditioned on $\mathcal{E}_1$, we have $\mathcal{J}^i\subset \mathcal{I}$ for $i<k^*$ and hence $\exists \lambda_i>0$ such that  $\|({\bf I}_n-{\bf P}_{\mathcal{J}^i}){\bf X}\beta\|_2>\lambda_i$, for all $\sigma^2>0$. Further,  $\|({\bf I}_n-{\bf P}_{\mathcal{J}^i}){\bf w}\|_2\leq \|{\bf w}\|_2$. Applying these bounds in $\P(S_i)=\P(\|{\bf r}^i\|_2<\sigma\Gamma_4)$ gives
\begin{equation}
\P(S_i)\leq \P(\|{\bf w}\|_2+\sigma\Gamma_4>\lambda_i),\ \forall i<k^*.
\end{equation}
Since, ${\bf w}\sim\mathcal{N}({\bf 0}_n,\sigma^2{\bf I}_n)$, we have $\|{\bf w}\|_2 \overset{P}{\rightarrow }0$ as $\sigma^2 \rightarrow 0$. By the hypothesis of Theorem 10, $\underset{\sigma^2 \rightarrow 0}{\lim}\sigma\Gamma_4=0$. Hence, $\|{\bf w}\|_2+\sigma\Gamma_4 \overset{P}{\rightarrow }0$ as $\sigma^2 \rightarrow 0$. Now by the definition of C.I.P, $\underset{\sigma^2 \rightarrow 0}{\lim}\P(\|{\bf w}\|_2+\sigma\Gamma_4>\lambda_i)=0$.
This implies that $\underset{\sigma^2 \rightarrow 0}{\lim}\P(S_i)=0, \forall i <k^*$.

 Next consider $\P(S_{k^*})$ in (\ref{omp1}). Conditioned on $\mathcal{E}_1$,  all the first $k^*$ iterations of OMP are correct, i.e., $\mathcal{J}^{k^*}=\mathcal{I}$. This implies that $\|{\bf r}^{k^*}\|_2^2=\|({\bf I}_n-{\bf P}_{\mathcal{I}}){\bf y}\|_2^2= \|({\bf I}_n-{\bf P}_{\mathcal{I}}){\bf w}\|_2^2\sim \sigma^2\chi^2_{n-k^*}$. Consequently, $\P(S_{k^*})=\P(\|{\bf r}^{k^*}\|_2^2>\sigma^2\Gamma_4^2)=\P(Z>\Gamma_4^2)$, where $Z=\dfrac{\|{\bf r}^{k^*}\|_2^2}{\sigma^2}\sim \chi^2_{n-k^*}$. Since $Z$ is a B.I.P R.V with distribution independent of $\sigma^2$ and $\Gamma_4 \rightarrow \infty$ as $\sigma^2 \rightarrow 0$, it follows that  $\underset{\sigma^2 \rightarrow 0}{\lim}\P(S_{k^*})=0$. Substituting $\underset{\sigma^2\rightarrow 0}{\lim}\P(S_i)=0$ for $i\leq k^*$ in (\ref{omp1}), we have  $\underset{\sigma^2 \rightarrow 0}{\lim}\P(\mathcal{E}_2|\mathcal{E}_1)=1$. Combining this with $\underset{\sigma^2 \rightarrow 0}{\lim} \P(\mathcal{E}_1)=1$ gives $\underset{\sigma^2 \rightarrow 0}{\lim}\P(\hat{\mathcal{I}}=\mathcal{I})=\underset{\sigma^2 \rightarrow 0}{\lim}\P({\mathcal{E}_1}\cap\mathcal{E}_2)=\underset{\sigma^2 \rightarrow 0}{\lim}\P({\mathcal{E}_1})\underset{\sigma^2 \rightarrow 0}{\lim}\P(\mathcal{E}_2|\mathcal{E}_1)=1$.\qed \\

{\bf Proof of Theorem \ref{OMPcorr}:-} For RCSC, the SC is given by $\{\|{\bf X}^T{\bf r}^i\|_{\infty}<\sigma\Gamma_5\}$. First consider $\P(S_i)=\P(\|{\bf X}^T{\bf r}^i\|_{\infty}<\sigma\Gamma_5)$ for $i<k^*$ in (\ref{omp1}).   $\|{\bf X}^T{\bf r}^i\|_{\infty}$ can be lower bounded using triangle inequality as $\|{\bf X}^T{\bf r}^i\|_{\infty}\geq \|{\bf X}^T({\bf I}_n-{\bf P}_{\mathcal{J}^{i-1}}){\bf X\beta}\|_{\infty}-\|{\bf X}^T({\bf I}_n-{\bf P}_{\mathcal{J}^{i-1}}){\bf w}\|_{\infty}.$ Further, $\|{\bf X}_i\|_2=1$ implies that $\|{\bf X}^T({\bf I}_n-{\bf P}_{\mathcal{J}^{i-1}}){\bf w}\|_{\infty} \leq \|({\bf I}_n-{\bf P}_{\mathcal{J}^{i-1}}){\bf w}\|_2\leq \|{\bf w}\|_2$. 
  Conditioned on $\mathcal{E}_1$, we have $\mathcal{J}^i\subset \mathcal{I}$ for $i<k^*$ and hence $\exists \lambda_i>0$ such that  $\|{\bf X}^T({\bf I}_n-{\bf P}_{\mathcal{J}^i}){\bf X\beta}\|_{\infty}>\lambda_i$, for all $\sigma^2>0$. Applying these bounds in $\P(S_i)=\P(\|{\bf X}^T{\bf r}^i\|_{\infty}<\sigma\Gamma_5)$ gives
\begin{equation}
\P(S_i)\leq \P(\|{\bf w}\|_2+\sigma\Gamma_5>\lambda_i),\ \forall i<k^*.
\end{equation}
Following the same arguments used in the proof of Theorem 10 we have $\underset{\sigma^2 \rightarrow 0}{\lim}\P(S_i)=0, \forall i <k^*$.

 Next we consider $\P(S_{k^*})=\P(\|{\bf X}^T{\bf r}^{k^*}\|_{\infty}>\sigma\Gamma_5)$. Since, the first $k^*$ iterations are correct, i.e., $\mathcal{J}^{k^*}=\mathcal{I}$, we have $\|{\bf X}^T{\bf r}^{k^*}\|_{\infty}=\|{\bf X}^T({\bf I}_n-{\bf P}_{\mathcal{I}}){\bf y}\|_{\infty}=\|{\bf X}^T({\bf I}_n-{\bf P}_{\mathcal{I}}){\bf w}\|_{\infty}$. Using Cauchy Schwartz inequality and $\|{\bf X}_j\|_2=1$, it follows that $\|{\bf X}^T({\bf I}_n-{\bf P}_{\mathcal{I}}){\bf w}\|_{\infty} \leq \|({\bf I}_n-{\bf P}_{\mathcal{I}}){\bf w}\|_{2}$. Hence, $\P(S_{k^*})\leq \P(\dfrac{\|({\bf I}_n-{\bf P}_{\mathcal{I}}){\bf w}\|_{2}^2}{\sigma^2}>\Gamma_5^2)$.  Since, $\dfrac{\|({\bf I}_n-{\bf P}_{\mathcal{I}}){\bf w}\|_2^2}{\sigma^2}\sim \chi^2_{n-k^*}$ is a B.I.P R.V and the term $\Gamma_5^2\rightarrow \infty$, it follows that $\underset{\sigma^2 \rightarrow 0}{\lim}\P(S_{k^*})=0$. Substituting $\underset{\sigma^2 \rightarrow 0}{\lim}\P(S_i)=0$ for $i\leq k^*$  in (\ref{omp1}), we have  $\underset{\sigma^2 \rightarrow 0}{\lim}\P(\mathcal{E}_2|\mathcal{E}_1)=1$. Combining this with $\underset{\sigma^2 \rightarrow 0}{\lim} \P(\mathcal{E}_1)=1$ gives $\underset{\sigma^2 \rightarrow 0}{\lim}\P(\hat{\mathcal{I}}=\mathcal{I})=\underset{\sigma^2 \rightarrow 0}{\lim}\P({\mathcal{E}_1}\cap\mathcal{E}_2)=\underset{\sigma^2 \rightarrow 0}{\lim}\P({\mathcal{E}_1})\underset{\sigma^2 \rightarrow 0}{\lim}\P(\mathcal{E}_2|\mathcal{E}_1)=1$. \qed
 
\begin{remark} The following observations can be made about the convergence rates in RPSC and RCSC. The rate at which $\P(\mathcal{E}_1)$ converges to one is independent of $\Gamma_4$ or $\Gamma_5$. First consider $\P(S_i)$ for $i<k^*$ and let $\Gamma_4=\dfrac{1}{\sigma^{\alpha}}$ be the SNR adaptation. Then the rate at which $\P(S_i)$ converges to zero is maximum when $\alpha=0$ and decreases with increasing $\alpha$. However, the rate at which $\P(S_{k^*})$ converges to zero increases with increasing $\alpha$. 
\end{remark}
 
 \section{Numerical Simulations}
 Here we numerically verify the results proved in Theorems 1-11.  We consider two classes of matrices for simulations.  \\
{  {\bf  ERC matrix}: We consider a $n \times 2n$ matrix ${\bf X}$ formed by the concatenation of a $n \times n$ identity matrix and a Hadamard matrix  of size $n \times n$ denoted by ${\bf H}_n$, i.e., ${\bf X}=[{\bf I}_n,{\bf H}_n.]$. It is well known that this matrix has mutual coherence $\mu_{\bf X}=\dfrac{1}{\sqrt{n}}$[Chapter 2,\cite{elad2010sparse}]. We fix $n$ as $n=32$ and for this value of $n$, ${\bf X}$ satisfy MIC for any $\beta$ with sparsity $k^*\leq \dfrac{1}{2}(1+\sqrt{n})=3.3284$. As explained in section \rom{2}, MIC implies ERC  also.} \\
{ {\bf Random matrix:} A random matrix ${\bf X}$  is generated using \textit{i.i.d} ${\bf X}_{i,j}\sim \mathcal{N}(0,1)$ R.Vs and  columns in this matrix are later normalised to have unit $l_2$ norm.   In each iteration the matrix ${\bf X}$ is independently generated. The matrix support pair thus generated in each iteration may or may not satisfy ERC.\\
All non zero entries have same magnitude (denoted by $\beta_k$ in  figures) but random signs. Further, the $k^*$ non zero entries are selected randomly from the set $[p]$. The figures are produced after performing $10^5$ iterations at each SNR level.
\subsection{Performance of $l_0$-penalty.}
 \begin{figure}
\includegraphics[width=\columnwidth,height=70mm]{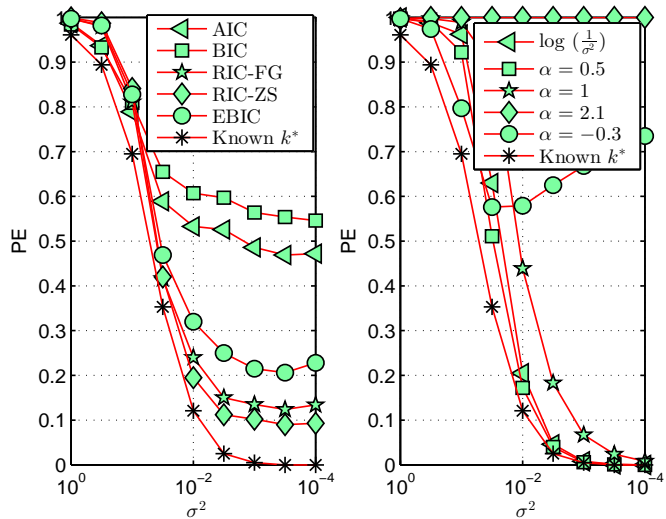}
\caption{Performance of $l_0$-penalty with a $5\times10$ random matrix. $\beta_k=\pm 1$ and $k^*=2$. }
\label{fig_l0}
\end{figure}
The performance of $l_0$-penalty with different values of $\Gamma_0$ is reported in Fig.\ref{fig_l0}. The matrix under consideration is a $5 \times 10$ random matrix.  ``Known $k^*$" represents the performance of an oracle SSP with \textit{a priori} information of $k^*$. This SSP estimates $\hat{\mathcal{I}}$ using $\hat{\mathcal{I}}=\underset{{\mathcal{J}\subset[p]},{|\mathcal{J}|=k^*}}{\arg\min}|| ({\bf I}_n-{\bf P}_{\mathcal{J}}){\bf y}||_2^2$  and will have superior performance when compared with $l_0$-penalty which is oblivious to $k^*$.

L.H.S of Fig.\ref{fig_l0} gives the performance of $l_0$-penalty with SNR independent values of $\Gamma_0$ discussed in literature. AIC uses $\Gamma_0 =2$, BIC uses $\Gamma_0 =\log(n)$, RIC-FG uses $\Gamma_0 =2\log(p)$\cite{RIC-FG}, RIC-ZS uses $\Gamma_0 =2\log(p)+2\log(\log(p))$ \cite{RIC-ZS}  and EBIC uses $\Gamma_0=\log(n)+\dfrac{2}{||{\bf b}||_0}\log({p \choose ||{\bf b}||_0})$. As predicted by Theorem \ref{sparknecessity1}, $l_0$-penalty with all these values of $\Gamma_0$ are inconsistent at high SNR. The performance of RIC-ZS is the best among the values of $\Gamma_0$ under consideration. The performance of BIC and AIC are much poorer compared to other schemes. When $n=5$, $\Gamma_0=2$ in AIC is bigger than $\Gamma_0=\log(n)$ of BIC and this explains the inferior performance of BIC \textit{viz a viz} AIC. For higher values of $n$, BIC will perform better than AIC.

R.H.S gives the performance of $l_0$-penalty with $\Gamma_0=f(\sigma^2)[\log(n)+\dfrac{2}{\|{\bf b}\|_0}\log({p \choose \|{\bf b}\|_0})]$, i.e., a SNR adaptation is added to  EBIC penalty. $``\log(\dfrac{1}{\sigma^2})''$ in Fig.\ref{fig_l0} represents  $f(\sigma^2)= \log(\dfrac{1}{\sigma^2})$. This SNR adaptation satisfies the conditions in Theorem \ref{Sparktheorem} and is common in popular MOS criteria like NMDL, g-MDL etc. \cite{tsp}. The schemes represented using $\alpha=(.)$ has $f(\sigma^2)=\dfrac{1}{\sigma^{\alpha}}$. Among the values of $\alpha$ considered in Fig.\ref{fig_l0}, $\alpha=0.5$ and $\alpha=1$ satisfies the conditions in Theorem \ref{Sparktheorem},  $\alpha=-0.3$ violates Theorem \ref{sparknecessity1} and $\alpha=2.1$ violates Theorem \ref{sparknecessity2} respectively. As predicted by Theorems 1-3, only $``\log(\dfrac{1}{\sigma^2})''$, $\alpha=0.5$ and $\alpha=1$ that satisfies the conditions in Theorem \ref{Sparktheorem} are high SNR consistent. This verify the NSCs derived in section \rom{3}. Further, the performance of $l_0$-penalty with $\Gamma_0$ represented by ``$\log(\dfrac{1}{\sigma^2})$'' and  $\alpha=0.5$  are very close to the optimal scheme represented by ``Known $k^*$" across the entire SNR range. This suggest the finite SNR utility of the SNR adaptations suggested by Theorem \ref{Sparktheorem}. 
\subsection{Performance of $l_1$-penalty and $l_1$-error at high SNR.}
L.H.S of Fig.\ref{fig_l1_ERC} gives the performance of  $l_1$-penalty and R.H.S of Fig.\ref{fig_l1_ERC} gives the performance of  $l_1$-error respectively. Both these SSPs are evaluated for the $32 \times 64$  ERC matrix previously defined  and a  $75 \times 100$ random matrix. ``$2\sqrt{2\log(p)}$" in L.H.S represents the performance of $l_1$-penalty with $\Gamma_1=2\sqrt{2\log(p)}$  \cite{candes2009near} and ``$\alpha=(.)$`` represents $l_1$-penalty with $\Gamma_1=\dfrac{1}{\sigma^{\alpha}}2\sqrt{2\log(p)}$.  Similarly, in the R.H.S, ``$\sqrt{n+2\sqrt{2n}}$" represents the $l_1$-error with $\Gamma_2=\sqrt{n+2\sqrt{2n}}$ \cite{candes2006stable} and ``$\alpha=(.)$" represents $l_1$-error with $\Gamma_2=\dfrac{1}{\sigma^{\alpha}}\sqrt{n+2\sqrt{2n}}$. In both cases, $\alpha=(.)$ incorporates a SNR adaptation into a well known value of $\Gamma_1$ and $\Gamma_2$.  By Theorems 4-8, these SNR adaptations are  consistent iff $0<\alpha<1$.

 First we consider the performance of $l_1$-penalty  for the matrix ${\bf X}$ satisfying ERC. It is clear from Fig.\ref{fig_l1_ERC} that $l_1$-penalty with $\Gamma_1=2\sqrt{2\log(p)}$ floors at high SNR with a $PE\approx 10^{-2.5}$. Hence, $l_1$-penalty with $\Gamma_1=2\sqrt{2\log(p)}$ is inconsistent at high SNR and this validates Theorem \ref{l1penalty_necessary}. Other $\sigma^2$ independent values of $\Gamma_1$ discussed in Section \rom{4} also floors at high SNR. On the contrary, $l_1$-penalty with SNR dependent $\Gamma_1$  does not floor at high SNR and this validates Theorem \ref{l1_penalty_sufficient}. Further, $\Gamma_1$ with $\alpha=0.1$ performs better than $\Gamma_1=2\sqrt{2\log(p)}$ even for $\sigma^2 \approx 0.01$. In the same setting, $l_1$-error with $\Gamma_2=\sqrt{n+2\sqrt{2n}}$ is inconsistent at high SNR. In fact PE for $\Gamma_2=\sqrt{n+2\sqrt{2n}}$ floors at $PE\approx 10^{-1.25}$ at high SNR. It is evident from Fig.\ref{fig_l1_ERC} that $\Gamma_2=\dfrac{1}{\sigma^{\alpha}}\sqrt{n+2\sqrt{2n}}$, where $\alpha=0.15$ and $\alpha=0.3$  are high SNR consistent. These results validates Theorems 7-8. In fact $l_1$-error with $\alpha=0.15$ and $\alpha=0.3$ performs much better than the SNR independent $\Gamma_2=\sqrt{n+2\sqrt{2n}}$ from $\sigma^2 \approx 0.01$ onwards. 
 
\begin{figure}
\includegraphics[width=\columnwidth,height=70mm]{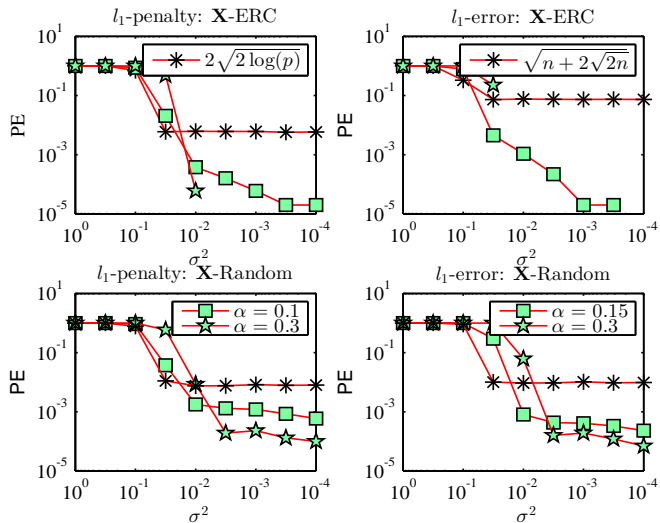}
\caption{Performance of $l_1$-penalty and $l_1$-error for a $32 \times 64$ ERC matrix and $75 \times 100$ random matrix. $k^*=3$ and $\beta_k=\pm 1$.  }
\label{fig_l1_ERC}
\end{figure}

{Next we consider the performance of $l_1$-penalty and $l_1$-error when ${\bf X}$ is a random $75 \times 100$ matrix. Here also  $l_1$-penalty and $l_1$-error with values of $\Gamma_1$ and $\Gamma_2$  independent of $\sigma^2$ floors at high SNR. However, unlike the case of ERC matrix,   $\Gamma_1$ and $\Gamma_2$ with SNR adaptations stipulated by Theorem \ref{l1_penalty_sufficient} and Theorem \ref{l1_error_thm} appears to floor at high SNR. This is because of the fact that there is a non zero probability $p_{erc}>0$ with which a particular realization of $({\bf X},\beta)$ pair fails to satisfy conditions like ERC. In fact  $p_{erc}$ decreases exponentially with increasing $n$. Hence, for random matrices $p_{erc}$ dictates the PE at which $l_1$-penalty and $l_1$-error floors. Note that the level at which $PE$ of $l_1$-penalty with $\Gamma_1$ and $\Gamma_2$ satisfying the SNR adaptations stipulated by Theorem \ref{l1_penalty_sufficient} and Theorem \ref{l1_error_thm} floors is significantly lower than the case with SNR independent $\Gamma_1$ and $\Gamma_2$.  This indicates that the proposed SNR adaptations can improve performance in situations beyond the regression classes for which high SNR consistency is established.   }
\subsection{Performance of OMP at high SNR.}
 L.H.S of Fig.\ref{fig_OMP_ERC} presents the performance of OMP with RPSC and R.H.S presents the performance of OMP with RCSC respectively. Both these SSPs are evaluated for the ERC matrix previously defined  and a  $75 \times 100$ random matrix.  ``Known $k^*$" represents a hypothetical SSP which runs OMP for exactly $k^*=3$ iterations. ``$f_n$" in the L.H.S  represents the performance of RPSC with $\Gamma_4= \sqrt{n+2\sqrt{n\log(n)}}$  and ``$\alpha=(.)$" represents the performance of RPSC with $\Gamma_4=\dfrac{1}{\sigma^{\alpha}}\sqrt{n+2\sqrt{n\log(n)}}$. Similarly, ``$f_p$" in the R.H.S  represents the performance of RCSC with $\Gamma_5=\sqrt{4\log(p)}$  and ``$\alpha=(.)$" represents the performance of RCSC with $\Gamma_5=\dfrac{1}{\sigma^{\alpha}}\sqrt{4\log(p)}$. $\Gamma_4= \sqrt{n+2\sqrt{n\log(n)}}$ and $\Gamma_5=\sqrt{4\log(p)}$ are suggested in \cite{cai2011orthogonal}. ``$\alpha=(.)$" in both cases incorporate a SNR adaptation into these well known stopping parameters.   It is clear from the Fig.\ref{fig_OMP_ERC} that OMP with SC independent of $\sigma^2$ floors at high SNR for both ERC and random matrices, whereas, the  flooring of PE is not present in OMP with SC satisfying Theorems \ref{OMPres} and  \ref{OMPcorr} for ERC matrix. For random matrix, the performance of OMP with proposed SNR adaptations floors at a PE level equal to that of OMP with known $k^*$. This flooring is also due to the causes explained for $l_1$-penalty and $l_1$-error.   
\begin{figure}
\includegraphics[width=\columnwidth,height=70mm]{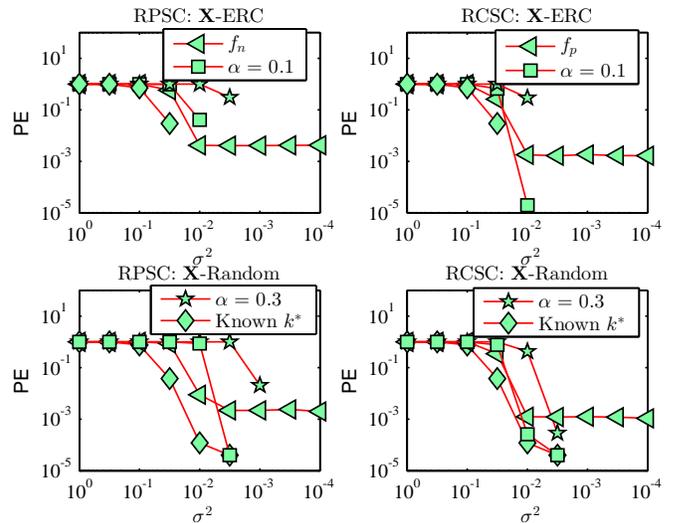}
\caption{Performance of OMP  with  RPSC and RCSC for a $32 \times 64$ ERC matrix and $75 \times 100$ random matrix. $k^*=3$ and $\beta_k=\pm 1$. }
\label{fig_OMP_ERC}
\end{figure}
\subsection { On the choice of SNR adaptations.}
 Fig.\ref{fig_conv} presents the effect of signal strength $|\beta_j|$ and SNR adaptations on the convergence rates of $l_1$-penalty and OMP-RPSC. ``$f_n$" represents RPSC with $\Gamma_4=\sqrt{n+2\sqrt{2\log(n)}}$ as before. ``$\alpha=(.)$" represents $l_1$-penalty with $\Gamma_1=\dfrac{1}{\sigma^{\alpha}}2\sqrt{2\log(p)}$ and RPSC with $\Gamma_4=\dfrac{1}{\sigma^{\alpha}}\sqrt{n+2\sqrt{2\log(n)}}$. By Theorems \ref{l1_penalty_sufficient} and \ref{OMPres}, the SNR adaptations represented by $\alpha=(.)$ will be consistent for both $l_1$-penalty and RPSC iff $0<\alpha<1$. However, the deviations from the base tuning parameters (i.e., $2\sqrt{2\log(p)}$ and $\sqrt{n+2\sqrt{2\log(n)}}$)   will be more pronounced as $\alpha$ increases. This will influence the rate at which $PE$ converges to zero.

At very high SNR, the performance of $l_1$-penalty and OMP-RPSC  with larger values of $\alpha$ will be better. This is true for both low and high values of regression coefficients (i.e., $\beta_j=0.5$ and $\beta_j=3$).  Throughout the moderate to high SNR range, the performance of these  algorithms with high values of $\alpha$ will be poor in comparison with the base tuning parameter when $|\beta_j|$ is low. In the same SNR and signal strength regime the performance  with low values of $\alpha$ will be better than both base tuning parameter and high value of $\alpha$.  As the signal strength improves, the performance of these algorithms improves for all values of $\alpha$.  However, the performance with high values of $\alpha$ will be much better than the performance with low values of $\alpha$ when $|\beta_j|$ is high. Note that the PE  with base tuning parameter floors at the same value irrespective of signal strength. The numerical results are in line with the inferences derived from the convergence rate analysis of $l_1$-penalty. Similar inferences can be derived from the numerical experiments (not shown) conducted for other CS algorithms considered in this paper.

Note that the very high SNR regime is rarely encountered in practice. Further, a low value of $\alpha$ will  provide a performance atleast as good as the performance of the base parameter in the moderate SNR range irrespective of the signal strength and a progressively improving performance as the SNR or the signal strength improves. Hence, by following the philosophy of minimizing the worst case risk, it will be advisable to choose smaller values of $\alpha$ like $\alpha=0.1$ for practical applications. 
\begin{figure}[h!]
\includegraphics[width=\columnwidth,height=70mm]{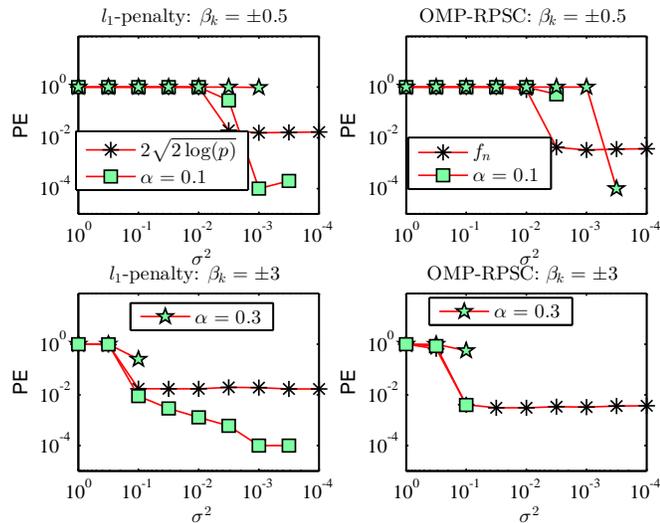}
\caption{Convergence rates for OMP with RPSC and $l_1$-penalty for a $32 \times 64$ ERC matrix and $k^*=3$.  }
\label{fig_conv}
\end{figure}
\section{Conclusion}
NSCs for the high SNR consistency of CS algorithms like $l_0$-penalty, $l_1$-penalty, $l_1$-error, DS and OMP are derived in this paper. Aforementioned algorithms with the tuning parameters discussed in literature are analytically and numerically shown to be inconsistent at high SNR.  Novel tuning parameters for these CS algorithms are derived based on the sufficient conditions and justified using convergence rate analysis. CS algorithms with the proposed tuning parameters are numerically shown to perform  better than existing tuning parameters.


 \bibliographystyle{IEEEtran}
\bibliography{compressivesensing.bib}
\end{document}